\newcommand{\workshop}[1]{}
\newcommand{\arxiv}[1]{\iftoggle{colt}{}{#1}}
\newcommand{\colt}[1]{\iftoggle{colt}{#1}{}}
\newcommand{\loose}{\colt{\looseness=-1}}
\theoremstyle{definition}  %
\theoremstyle{plain}
\xpatchcmd{\proof}{\itshape}{\normalfont\proofnameformat}{}{}
\newcommand{\proofnameformat}{\bfseries}
\newcommand{\pref}[1]{\cref{#1}}
\newcommand{\pfref}[1]{Proof of \pref{#1}}
\DeclareDocumentCommand{\XDeclarePairedDelimiter}{mm}
 {
  \__egreg_delimiter_clear_keys: %
  \keys_set:nn { egreg/delimiters } { #2 }
  \use:x %
   {
    \exp_not:n {\NewDocumentCommand{#1}{sO{}m} }
     {
      \exp_not:n { \IfBooleanTF{##1} }
       {
        \exp_not:N \egreg_paired_delimiter_expand:nnnn
         { \exp_not:V \l_egreg_delimiter_left_tl }
         { \exp_not:V \l_egreg_delimiter_right_tl }
         { \exp_not:n { ##3 } }
         { \exp_not:V \l_egreg_delimiter_subscript_tl }
       }
       {
        \exp_not:N \egreg_paired_delimiter_fixed:nnnnn 
         { \exp_not:n { ##2 } }
         { \exp_not:V \l_egreg_delimiter_left_tl }
         { \exp_not:V \l_egreg_delimiter_right_tl }
         { \exp_not:n { ##3 } }
         { \exp_not:V \l_egreg_delimiter_subscript_tl }
       }
     }
   }
 }
\XDeclarePairedDelimiter{\supnorm}{
  left=\lVert,
  right=\rVert,
  subscript=\infty
  }
\DeclarePairedDelimiter{\abs}{\lvert}{\rvert} %
\DeclarePairedDelimiter{\brk}{[}{]}
\DeclarePairedDelimiter{\crl}{\{}{\}}
\DeclarePairedDelimiter{\prn}{(}{)}
\DeclarePairedDelimiter{\nrm}{\|}{\|}
\DeclarePairedDelimiter{\tri}{\langle}{\rangle}
\DeclarePairedDelimiter{\floor}{\lfloor}{\rfloor}
\let\Pr\undefined
\DeclareMathOperator{\En}{\mathbb{E}}
\newcommand{\mb}[1]{\boldsymbol{#1}}
\newcommand{\wh}[1]{\widehat{#1}}
\def\ddefloop#1{\ifx\ddefloop#1\else\ddef{#1}\expandafter\ddefloop\fi}
\def\ddef#1{\expandafter\def\csname bb#1\endcsname{\ensuremath{\mathbb{#1}}}}
\def\ddefloop#1{\ifx\ddefloop#1\else\ddef{#1}\expandafter\ddefloop\fi}
\def\ddef#1{\expandafter\def\csname b#1\endcsname{\ensuremath{\mathbf{#1}}}}
\def\ddef#1{\expandafter\def\csname sf#1\endcsname{\ensuremath{\mathsf{#1}}}}
\def\ddef#1{\expandafter\def\csname c#1\endcsname{\ensuremath{\mathcal{#1}}}}
\def\ddef#1{\expandafter\def\csname h#1\endcsname{\ensuremath{\widehat{#1}}}}
\def\ddef#1{\expandafter\def\csname hc#1\endcsname{\ensuremath{\widehat{\mathcal{#1}}}}}
\def\ddef#1{\expandafter\def\csname t#1\endcsname{\ensuremath{\widetilde{#1}}}}
\def\ddef#1{\expandafter\def\csname tc#1\endcsname{\ensuremath{\widetilde{\mathcal{#1}}}}}
\def\ddefloop#1{\ifx\ddefloop#1\else\ddef{#1}\expandafter\ddefloop\fi}
\def\ddef#1{\expandafter\def\csname scr#1\endcsname{\ensuremath{\mathscr{#1}}}}
\newcommand{\ind}{\mathbbm{1}}    %
\newcommand{\eps}{\epsilon}
\newcommand{\veps}{\varepsilon}
\newcommand{\ldef}{\vcentcolon=}
\newcommand{\gammaconst}{(1-\gamma)^{-1}}
\newcommand{\filt}{\mathscr{F}}
\newcommand{\pushforward}{pushforward concentrability\xspace}
\newcommand{\nullstate}{\mathfrak{s}}
\newcommand{\init}{\nullstate}
\newcommand{\Conc}{C_{\mathrm{conc}}}
\newcommand{\datadist}{data distribution\xspace}
\newcommand{\dataset}{D_n}
\newcommand{\Dn}{D_n}
\newcommand{\overcoverage}{over-coverage\xspace}
\newcommand{\overcoverages}{strong over-coverage\xspace}
\newcommand{\dnot}{d_0}
\newcommand{\Jm}{J\subs{M}}
\newcommand{\Enm}[2]{\En^{\sss{#1},#2}}
\newcommand{\Pr}{\bbP}
\newcommand{\Prpi}[1][\pi]{\bbP^{#1}}
\newcommand{\Prm}[2]{\bbP^{\sss{#1},#2}}
\newcommand{\Qm}[2]{Q_{\sss{#1}}^{#2}}
\newcommand{\Vm}[2]{V_{\sss{#1}}^{#2}}
\newcommand{\Qstar}{Q^{\star}}
\newcommand{\Vstarm}[1]{V_{\sss{#1}}^{\star}}
\newcommand{\Qstarm}[1]{Q_{\sss{#1}}^{\star}}
\newcommand{\dm}[2]{d_{\sss{#1}}^{#2}}
\newcommand{\pistarm}{\pi_{\sss{M}}^{\star}}
\newcommand{\pistar}{\pi^{\star}}
\newcommand{\psub}{I}
\newcommand{\qsub}{J}
\newcommand{\initac}{\mathfrak{a}}
\newcommand{\M}{\sss{M}}
\newcommand{\sM}{\sss{M}}
\newcommand{\subs}[1]{_{{#1}}}
\newcommand{\sss}[1]{{#1}}
\newcommand{\pihat}{\wh{\pi}}
\newcommand{\trn}{\top}
\newcommand{\approxgeq}{\gtrsim}
\newcommand{\eigmin}{\lambda_{\mathrm{min}}}
\renewcommand{\ind}[1]{^{{\scriptscriptstyle(#1)}}}
\newcommand{\Ind}{\mathbbm{1}}
\newcommand{\bigom}{\Omega}
\newcommand{\poly}{\mathrm{poly}}
\newcommand{\dmid}{\,\|\,}
\newcommand{\Dchis}[2]{D_{\chi^{2}}\prn*{#1\dmid#2}}
\newcommand{\chisquared}{$\chi^{2}$-divergence\xspace}
\newcommand{\Dtv}[2]{D_{\mathrm{TV}}\prn*{#1,#2}}
\newcommand{\Ssafe}{\cS_{\mathrm{safe}}}
\newcommand{\unif}{\mathrm{Unif}}
\renewcommand{\Ssafe}[1][h]{\cS_{h;\mathrm{safe}}}
  \newcommand{\mathand}{\quad\text{and}\quad}
\def\multiset#1#2{\ensuremath{\left(\kern-.3em\left(\genfrac{}{}{0pt}{}{#1}{#2}\right)\kern-.3em\right)}}
\newcommand{\iid}{i.i.d.\xspace}
\let\underbar\undefined
\let\save@mathaccent\mathaccent
\newcommand*\if@single[3]{%
  \setbox0\hbox{${\mathaccent"0362{#1}}^H$}%
  \setbox2\hbox{${\mathaccent"0362{\kern0pt#1}}^H$}%
  \ifdim\ht0=\ht2 #3\else #2\fi
  }
\newcommand*\rel@kern[1]{\kern#1\dimexpr\macc@kerna}
\newcommand*\widebar[1]{\@ifnextchar^{{\wide@bar{#1}{0}}}{\wide@bar{#1}{1}}}
\newcommand*\underbar[1]{\@ifnextchar_{{\under@bar{#1}{0}}}{\under@bar{#1}{1}}}
\newcommand*\wide@bar[2]{\if@single{#1}{\wide@bar@{#1}{#2}{1}}{\wide@bar@{#1}{#2}{2}}}
\newcommand*\under@bar[2]{\if@single{#1}{\under@bar@{#1}{#2}{1}}{\under@bar@{#1}{#2}{2}}}
\newcommand*\wide@bar@[3]{%
  \begingroup
  \def\mathaccent##1##2{%
    \let\mathaccent\save@mathaccent
    \if#32 \let\macc@nucleus\first@char \fi
    \setbox\z@\hbox{$\macc@style{\macc@nucleus}_{}$}%
    \setbox\tw@\hbox{$\macc@style{\macc@nucleus}{}_{}$}%
    \dimen@\wd\tw@
    \advance\dimen@-\wd\z@
    \divide\dimen@ 3
    \@tempdima\wd\tw@
    \advance\@tempdima-\scriptspace
    \divide\@tempdima 10
    \advance\dimen@-\@tempdima
    \ifdim\dimen@>\z@ \dimen@0pt\fi
    \rel@kern{0.6}\kern-\dimen@
    \if#31
      \overline{\rel@kern{-0.6}\kern\dimen@\macc@nucleus\rel@kern{0.4}\kern\dimen@}%
      \advance\dimen@0.4\dimexpr\macc@kerna
      \let\final@kern#2%
      \ifdim\dimen@<\z@ \let\final@kern1\fi
      \if\final@kern1 \kern-\dimen@\fi
    \else
      \overline{\rel@kern{-0.6}\kern\dimen@#1}%
    \fi
  }%
  \macc@depth\@ne
  \let\math@bgroup\@empty \let\math@egroup\macc@set@skewchar
  \mathsurround\z@ \frozen@everymath{\mathgroup\macc@group\relax}%
  \macc@set@skewchar\relax
  \let\mathaccentV\macc@nested@a
  \if#31
    \macc@nested@a\relax111{#1}%
  \else
    \def\gobble@till@marker##1\endmarker{}%
    \futurelet\first@char\gobble@till@marker#1\endmarker
    \ifcat\noexpand\first@char A\else
      \def\first@char{}%
    \fi
    \macc@nested@a\relax111{\first@char}%
  \fi
  \endgroup
}
\newcommand*\under@bar@[3]{%
  \begingroup
  \def\mathaccent##1##2{%
    \let\mathaccent\save@mathaccent
    \if#32 \let\macc@nucleus\first@char \fi
    \setbox\z@\hbox{$\macc@style{\macc@nucleus}_{}$}%
    \setbox\tw@\hbox{$\macc@style{\macc@nucleus}{}_{}$}%
    \dimen@\wd\tw@
    \advance\dimen@-\wd\z@
    \divide\dimen@ 3
    \@tempdima\wd\tw@
    \advance\@tempdima-\scriptspace
    \divide\@tempdima 10
    \advance\dimen@-\@tempdima
    \ifdim\dimen@>\z@ \dimen@0pt\fi
    \rel@kern{0.6}\kern-\dimen@
    \if#31
      \underline{\rel@kern{-0.6}\kern\dimen@\macc@nucleus\rel@kern{0.4}\kern\dimen@}%
      \advance\dimen@0.4\dimexpr\macc@kerna
      \let\final@kern#2%
      \ifdim\dimen@<\z@ \let\final@kern1\fi
      \if\final@kern1 \kern-\dimen@\fi
    \else
      \underline{\rel@kern{-0.6}\kern\dimen@#1}%
    \fi
  }%
  \macc@depth\@ne
  \let\math@bgroup\@empty \let\math@egroup\macc@set@skewchar
  \mathsurround\z@ \frozen@everymath{\mathgroup\macc@group\relax}%
  \macc@set@skewchar\relax
  \let\mathaccentV\macc@nested@a
  \if#31
    \macc@nested@a\relax111{#1}%
  \else
    \def\gobble@till@marker##1\endmarker{}%
    \futurelet\first@char\gobble@till@marker#1\endmarker
    \ifcat\noexpand\first@char A\else
      \def\first@char{}%
    \fi
    \macc@nested@a\relax111{\first@char}%
  \fi
  \endgroup
}
\declaretheorem[name=Theorem,parent=section]{theorem}
\declaretheorem[name=Lemma,parent=section]{lemma}
\declaretheorem[name=Proposition, parent=section]{proposition}
  \renewenvironment{proof}[1][Proof]%
  {%
   \par\noindent{\bfseries\upshape {#1.}\ }%
  }%
  {\qed\newline}
\let\OldStatex\Statex
\renewcommand{\Statex}[1][3]{%
  \setlength\@tempdima{\algorithmicindent}%
  \OldStatex\hskip\dimexpr#1\@tempdima\relax}
\let\oldparagraph\paragraph
\renewcommand{\paragraph}[1]{\oldparagraph{#1.}}
\title{Offline Reinforcement Learning: \\Fundamental Barriers for Value Function Approximation
  }
\title[Offline RL: Fundamental Barriers for Value Function Approximation]{Offline Reinforcement Learning: \\Fundamental Barriers for Value Function Approximation}
  \author{%
Dylan J. Foster
\quad\quad
Akshay Krishnamurthy\\
{\normalsize Microsoft Research}\\
{\small\texttt{\{dylanfoster,akshaykr\}@microsoft.com}}
\and
David Simchi-Levi
\quad\quad
Yunzong Xu\thanks{Part of this work was completed while Y. Xu was an intern at Microsoft Research.}\\
 {\normalsize Massachusetts Institute of Technology}\\
 {\small\texttt{\{dslevi,yxu\}@mit.edu}}
}
\date{}
\begin{document}
\maketitle

\begin{abstract}
  We consider the offline reinforcement learning problem, where the aim is to
  learn a decision making policy from logged data. Offline
  RL---particularly when coupled with (value) function approximation to allow
  for generalization in large or continuous state spaces---is becoming increasingly relevant in
  practice, because it avoids costly and time-consuming online data
  collection and is well suited to safety-critical domains.
  Existing sample complexity guarantees for offline value function
  approximation methods typically require both (1) distributional
  assumptions (i.e., good coverage) and (2) representational assumptions
  (i.e., ability to represent some or all $Q$-value functions)
  stronger than what is required for supervised learning.
  However, the
necessity of these conditions and the fundamental limits of offline RL
are not well understood in spite of decades of research. This led Chen and Jiang (2019) to conjecture that \emph{concentrability} (the
most standard notion of coverage) and \emph{realizability} (the
weakest representation condition) %
alone are not sufficient for
sample-efficient offline RL. We resolve this conjecture in
the positive by proving that in general, even if both
concentrability and realizability are satisfied, any algorithm
requires sample complexity either polynomial in the size of the state space or exponential in other parameters to
learn a non-trivial policy.

Our results show that sample-efficient offline reinforcement
learning requires either restrictive coverage conditions or representation
conditions that go beyond supervised learning, and highlight a phenomenon
called \emph{\overcoverage} which serves as a fundamental barrier for offline
value function approximation methods. A consequence of our results for
reinforcement learning with linear function approximation is that the separation between online and offline RL can be \emph{arbitrarily large}, even in
constant dimension.%

\end{abstract}

\section{Introduction}
\label{sec:intro}

In offline reinforcement learning, we aim to evaluate or optimize
decision making policies using logged transitions and
rewards from historical experiments or expert
demonstrations. Offline RL has great promise for decision making
applications where actively acquiring data is expensive or
cumbersome (e.g., robotics \citep{pinto2016supersizing,levine2018learning,kalashnikov2018scalable}), or where safety is critical (e.g.,
autonomous driving \citep{sallab2017deep,kendall2019learning} and healthcare \citep{gottesman2018evaluating,gottesman2019guidelines,wang2018supervised,yu2019deep,nie2021learning}). In particular, there is substantial
interest in combining offline reinforcement learning with function
approximation (e.g., deep neural networks) in order to encode inductive biases and enable
generalization across large, potentially continuous state spaces, with
recent progress on both model-free and model-based approaches
\citep{ross2012agnostic,laroche2019safe,fujimoto2019off,kumar2019stabilizing,agarwal2020optimistic}. However,
existing algorithms are extremely data-intensive, and offline RL
methods---to date---have seen limited deployment in the aforementioned
applications. To enable practical deployment going forward, it is
paramount that we develop a strong understanding of the statistical
foundations for reliable, sample-efficient offline reinforcement
learning with function approximation, as well as an understanding of
when and why existing methods succeed and how to effectively collect data.

Compared to the basic supervised learning problem,
offline reinforcement learning with function approximation
  poses substantial algorithmic challenges due to two issues: \emph{distribution shift} and \emph{credit assignment}. Within the literature on \emph{value} function approximation (or,
approximate dynamic programming), all existing methods require both
(1) distributional conditions, which assert that the logged data has
good coverage (addressing distribution shift), and (2) representational conditions, which assert that
the function approximator is flexible enough to represent value
functions induced by certain policies (addressing credit assignment). Notably, sample complexity
analyses for standard offline RL methods (e.g., fitted Q-iteration)
require representation conditions considerably more restrictive than what is
required for supervised learning \citep{munos2003error,munos2007performance,munos2008finite,antos2008learning}, and these methods can diverge
when these conditions do not hold %
\citep{gordon1995stable,tsitsiklis1996feature,tsitsiklis1997analysis,wang2021instabilities}. Despite
substantial research effort, it is not known whether these
conditions constitute fundamental limits or whether the algorithms
can be improved. Resolving this issue would serve as a stepping stone
toward developing a theory for offline reinforcement learning that
parallels our understanding of supervised (statistical) learning.

The lack of understanding of fundamental limits in offline
reinforcement learning was highlighted by
\citet{chen2019information}, who observed that all existing
finite-sample analyses for offline RL algorithms
based on \emph{concentrability} \citep{munos2003error}---the most ubiquitous notion of data
coverage---require representation conditions significantly stronger
than \emph{realizability}, a standard condition from supervised
learning which asserts that the function approximator can represent
optimal value functions. \citet{chen2019information} conjectured that realizability and
concentrability alone do not suffice for sample-efficient offline
RL and noted that proving such a result seemed to be out of reach for
existing lower bound techniques. Subsequent progress led to positive results for
sample-efficient offline RL under coverage conditions stronger than concentrability \citep{xie2021batch} and impossibility results under
weaker coverage conditions \citep{wang2020statistical,zanette2021exponential}, but the original
conjecture remained open.

\paragraph{Contributions}
  We provide information-theoretic lower bounds which show that, in general, concentrability and realizability together are not sufficient for sample efficient offline reinforcement learning. 
Our first result concerns the standard offline RL setup, where the data
collection distribution is only required to satisfy concentrability,
and establishes a sample complexity lower bound scaling polynomially with
the size of the state space. This result resolves the conjecture of \citet{chen2019information} in the positive.
For our second result, we further restrict the data distribution to be induced by a policy (i.e., admissible), and show that any algorithm requires sample complexity either polynomial in
the size of the state space or exponential in other problem
parameters. Together, our results 
establish that sample-efficient offline RL in
large state spaces is not possible unless more stringent conditions,
either distributional or representational, hold.

Our lower bound constructions are qualitatively different from
    previous approaches and hold even when the number
    of actions is constant and the value function class
    has constant size. Our first lower bound highlights the role of a phenomenon we
    call strong \emph{\overcoverage} (first documented by \citet{xie2021batch}), wherein the data collection distribution is
    supported over spurious states that are not reachable by
    any policy. Despite the irrelevance of these states for learning
    in the online setting, their inclusion in the offline dataset creates significant
    uncertainty. 
    Our second lower bound discovers a weak variant of \overcoverage, wherein the data collection distribution is induced by running an exploratory policy in particular time steps, but many of the states supported by this distribution are not reachable in other time steps, creating spurious correlations. 
    Our work shows that both the strong and weak \overcoverage phenomena serve as 
    fundamental, information-theoretic barriers for the design of offline
    reinforcement learning algorithms.

\subsection{Offline Reinforcement Learning Setting}
\paragraph{Markov decision processes}
We consider the infinite-horizon discounted reinforcement
learning setting. Formally, a Markov decision process $M=(\cS, \cA, P, R,
\gamma, \dnot)$ consists of a (potentially large/continuous) state
space $\cS$, action space $\cA$, probability transition function
$P:\cS\times\cA\to\Delta(\cS)$, reward function
$R:\cS\times{}\cA\to\brk*{0,1}$, discount factor $\gamma\in[0,1)$, and
initial state distribution $\dnot \in \Delta(\cS)$. 
Each (randomized) policy $\pi:\cS\to\Delta(\cA)$ induces a
distribution over trajectories $(s_0,a_0, r_0),(s_1,a_1,r_1),\ldots$
via the following process. For $h=0,1,\ldots:$ $a_h\sim\pi(s_h)$,
$r_h=R(s_h,a_h)$, and $s_{h+1}\sim{}P(s_h,a_h)$, with
  $s_0\sim\dnot$. We let $\Enm{M}{\pi}\brk*{\cdot}$ and $\Prm{M}{\pi}\prn{\cdot}$
  denote expectation and probability under this process, respectively. 

  The expected return for policy $\pi$ is defined as $\Jm(\pi) :=
  \Enm{M}{\pi}\brk*{\sum_{h=0}^{\infty}\gamma^{h}r_h}$, and the value
  function and $Q$-function for $\pi$ are given by
  \[\textstyle
    \Vm{M}{\pi}(s):=\Enm{M}{\pi}\brk*{\sum_{h=0}^{\infty}\gamma^{h}r_h\mid{}s_0=s},\mathand
    \Qm{M}{\pi}(s,a):=\Enm{M}{\pi}\brk*{\sum_{h=0}^{\infty}\gamma^{h}r_h\mid{}s_0=s,
      a_0=a}.\]
  It is well-known that there exists a {deterministic} policy $\pi_M^\star:\cS\rightarrow\cA$ that maximizes $\Vm{M}{\pi}(s)$ for all $s\in\cS$ simultaneously and thus also maximizes $\Jm(\pi)$.   %
  Letting $\Vstarm{M}:=\Vm{M}{\pistarm}$ and
  $\Qstarm{M}:=\Qm{M}{\pistarm}$, we have $\pi_M^*(s)=\arg\max_{a\in\cA}Q_M^\star(s,a)$ for all $s\in\cS$.
 Finally, we define the occupancy measure for policy $\pi$
via $\dm{M}{\pi}(s,a) :=
(1-\gamma)\sum_{h=0}^{\infty}\gamma^{h}\Prm{M}{\pi}\prn{s_h=s,a_h=a}$. We
drop the dependence on the model $M$ when it is clear from context.

\paragraph{Offline policy learning}
In the offline policy learning (or, optimization) problem, we do not have
direct access to the underlying MDP and instead receive a dataset
$\dataset$ of tuples $(s, a, r, s')$ with $r=R(s,a)$,
$s'\sim{}P(s,a)$, and $(s,a)\sim\mu$ \iid, where
$\mu\in\Delta(\cS\times\cA)$ is the \emph{data collection
  distribution}. The goal of the learner is to use the dataset
$\dataset$ to learn an $\veps$-optimal policy $\pihat$, that is:
\[
J(\pistar) - \En\brk*{J(\pihat)} \leq \veps,
\]
where the expectation $\En\brk*{\cdot}$ is over the draw of $\dataset$
and any randomness used by the algorithm.

In order to provide sample-efficient learning guarantees that do
not depend on the size of the state space, value function
  approximation methods take advantage of the following conditions.
\begin{itemize}
\item \textbf{Realizability.} This condition asserts that we have access to a class of candidate
  value functions $\cF\subseteq(\cS\times\cA\to\bbR)$ (e.g., linear
  models or neural networks) such that $\Qstar\in\cF$. Realizability
  (that is, a well-specified model) is the most common
  representation condition in supervised learning and statistical
  estimation \citep{bousquet2003introduction,wainwright2019high} and is also
  widely used in contextual bandits
  \citep{agarwal2012contextual,foster2018practical}. %
\item \textbf{Concentrability.} Call a distribution
  $\nu\in\Delta(\cS\times\cA)$ \emph{admissible} for the MDP $M$ if
  there exists a (potentially stochastic and non-stationary\footnote{A non-stationary policy is a sequence $\{\pi_h\}_{h\geq 0}$, which generates a trajectory via $a_h \sim \pi_h(s_h)$.}) policy
  $\pi$ and index $h$ such that
  $\nu(s,a)=\Prpi\brk*{s_h=s,a_h=a}$. This condition asserts that there exists a constant $\Conc<\infty$ such that for all admissible $\nu$,
  \begin{equation}
    \label{eq:concentrability}
    \nrm*{\frac{\nu}{\mu}}_{\infty}\ldef\sup_{(s,a)\in\cS\times\cA}\crl*{\frac{\nu(s,a)}{\mu(s,a)}}\leq{}\Conc.
  \end{equation}
  Concentrability is a simple but fairly strong notion of coverage
  which demands that the \datadist uniformly covers all reachable states.
\end{itemize}
Under these conditions, an offline RL algorithm is said to be
sample-efficient if it learns an $\veps$-optimal policy with
$\poly(\veps^{-1}, (1-\gamma)^{-1},\Conc ,\log\abs{\cF})$
samples. Notably, such a guarantee depends only on the complexity
$\log\abs{\cF}$ for the value function class, not on the size of the
state space.\footnote{For infinite function classes ($|\cF| = \infty$),
  one can replace $\log |\cF|$ with other standard measures of
  statistical capacity, such as Rademacher complexity or metric
  entropy. For example, when $\cF$ is a class of $d$-dimensional
  linear functions, $\log|\cF|$ can be replaced by the dimension $d$, which is an upper bound on the metric entropy.}

\oldparagraph{Are realizability and concentrability sufficient?}
While realizability and concentrability are appealing in their
simplicity, these assumptions alone are not known to suffice for
sample-efficient offline RL. The most well-known line of research
\citep{munos2003error,munos2007performance,munos2008finite,antos2008learning,chen2019information}
analyzes offline RL methods such as fitted Q-iteration under the
stronger representation condition that $\cF$ is closed
under Bellman updates (``completeness''),\footnote{Precisely, $\cT\cF\subseteq\cF$, where
  $\cT$ is the \emph{Bellman operator}:
  $\brk{\cT{}f}(s,a)\ldef R(s,a)+\En_{s'\sim{}P(s,a)}\brk*{\max_{a'}f(s',a')}$.}
and obtains
$\poly(\veps^{-1}, (1-\gamma)^{-1},\Conc,\log\abs{\cF})$ sample
complexity. Completeness is a widely used assumption, but it is substantially more restrictive than
realizability and can be violated by adding a single function to
$\cF$. Subsequent years have seen extensive research into algorithmic improvements
and alternative representation and coverage conditions, but the
question of whether realizability and concentrability alone are sufficient
remains open.

\subsection{Main Results}
The first of our main results is an information-theoretic lower bound which shows
that realizability and concentrability are not sufficient for
sample-efficient offline RL.
\begin{theorem}[Main theorem]
  \label{thm:main}
  For all $S\geq{}9$ and $\gamma\in(1/2,1)$, there exists a family of
  MDPs $\cM$ with $\abs{\cS}\leq{}S$ and $\abs{\cA}=2$, a value function class
  $\cF$ with $\abs{\cF}=2$, and a \datadist
  $\mu$ such that:\loose
  \begin{enumerate}
  \item We have {$Q^{\pi}\in\cF$ for all $\pi: \cS \to \Delta(\cA)$ (all-policy realizability) and $\Conc\leq{}{16}$ (concentrability)} 
    for all models in $\cM$.
  \item Any algorithm using less than $c\cdot{}S^{1/3}$ samples must have
    $J(\pistar) - \En\brk*{J(\pihat)}\geq{}c'{/(1-\gamma)}$ for some
    instance in $\cM$, where $c$ and $c'$ are absolute numerical
    constants.
  \end{enumerate}
\end{theorem}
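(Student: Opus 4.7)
The plan is to construct a family $\cM$ of MDPs sharing the same reachable dynamics, a size-two value class $\cF = \crl{f_+, f_-}$, and a data distribution $\mu$ whose mass lies overwhelmingly on ``spurious'' states unreachable from the initial distribution under any policy. The strategy exploits \overcoverages: under concentrability, $\mu$ may place arbitrary mass on unreachable states as long as it dominates every admissible occupancy by the factor $16$ on the reachable portion. By designing the MDPs so that samples drawn in the spurious region are identically distributed across every $M \in \cM$, one forces the only informative samples to come from a small reachable portion, and an information-theoretic argument then yields the lower bound.

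First I would specify the construction. Take an initial state $s_0$ with two actions $a_+, a_-$; action $a_\pm$ leads uniformly into a pool of absorbing states $\crl{s_i^{\pm}}$, each of whose reward is a Bernoulli with mean $\tfrac{1}{2} + b\,\epsilon_i^{\pm}$ in the MDP parameterized by the hidden bit $b \in \crl{\pm 1}$. In addition, attach a large collection of spurious states that are disconnected from the reachable core (no incoming or outgoing transitions), with identical trivial reward/transition structure across every MDP in $\cM$. Because every downstream state of $s_0$ is absorbing with action-independent expected reward, $V^\pi$ is policy-independent in each MDP, and hence so is $Q^\pi$; the two-element class $\cF$ consisting of the two ``type'' $Q$-functions (one per sign of $b$) therefore satisfies all-policy realizability. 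Concentrability with $\Conc \le 16$ is then obtained by setting $\mu$ to be the minimum allowed on each reachable state--action pair and distributing the remaining mass across the spurious states.

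Second, I would bound the KL divergence between the joint laws of $N$ samples under two MDPs of opposite sign. Samples from the spurious region carry zero KL by construction; only samples at informative pairs $(s_i^\pm, a)$ contribute, each with per-sample KL of order $\epsilon_i^2$. Summed against $\mu$, the total KL scales as $\Theta(N \cdot \overline{\epsilon^2})$, where $\overline{\epsilon^2}$ is the $\mu$-weighted mean squared signal on the reachable portion. One tunes the parameters (the $\epsilon_i$, the reachable pool size, and the distribution of $\mu$-mass) to balance (i) the induced regret gap at $s_0$ being $\Omega\prn*{(1-\gamma)^{-1}}$ against (ii) the per-sample average being small enough that distinguishing the two types requires $\Omega(S^{1/3})$ samples. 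A standard two-point Le Cam / Pinsker argument then concludes: any algorithm using $o(S^{1/3})$ samples must err on one sign with constant probability, incurring regret $\Omega((1-\gamma)^{-1})$ on the erred instance.

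The main obstacle is balancing three tensions simultaneously: (a) all-policy realizability with $\abs{\cF}=2$ forces $Q^\pi$ to be policy-independent in every $M \in \cM$, which severely restricts the admissible MDP dynamics---essentially only $s_0$ can carry a non-trivial action choice, so the design space of hard instances is small; (b) the concentrability cap of $16$ limits how much of $\mu$'s mass can be ``hidden'' on the uninformative spurious region, forcing a non-trivial amount of probe mass onto informative pairs; and (c) one must calibrate per-state signal strength, the number of informative pairs, and how those signals aggregate into the $s_0$ suboptimality gap, and it is precisely this balance that produces the $S^{1/3}$ exponent rather than a weaker bound. With the construction in hand, the KL calculation and Le Cam step are routine.
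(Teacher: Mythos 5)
There is a genuine gap, and it is in the core information-theoretic step rather than in the bookkeeping. In your construction the hidden bit $b$ is encoded in the \emph{rewards} of reachable absorbing states, and the spurious states are disconnected and identical across all instances, so they carry no information and merely absorb $\mu$-mass. But concentrability is a ratio condition: for every admissible occupancy $\nu$ you need $\mu(s,a)\geq \nu(s,a)/16$, and the policy that plays $a_{\pm}$ at $s_0$ puts occupancy about $\gamma/K$ on each of the $K$ pool states, so $\mu$ is forced to place total mass $\Omega(\gamma)$ on the informative pairs no matter how much you waste on the spurious region. Meanwhile, to get a suboptimality gap of order $(1-\gamma)^{-1}$ at $s_0$ you need the average signal $\frac{1}{K}\sum_i \epsilon_i$ to be a constant, and then by Cauchy--Schwarz the per-sample divergence satisfies $\sum_i \mu(s_i)\,\epsilon_i^2 \geq \frac{\gamma}{16}\bigl(\frac{1}{K}\sum_i \epsilon_i\bigr)^2 = \Omega(1)$. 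So the two hypotheses are distinguishable with $O(1)$ samples and no choice of $\epsilon_i$, pool size, or $\mu$ resolves the ``tension'' you describe; the $S^{1/3}$ exponent cannot emerge from this two-point, reward-based scheme. (A further mismatch: in the paper's model rewards are deterministic and the dataset reports $r=R(s,a)$ exactly, so a single informative sample would reveal $b$; your Bernoulli rewards change the model.)

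The paper avoids this by hiding the hardness in the \emph{transitions}, not the rewards, and by making the unreachable states statistically entangled with the reachable ones rather than disconnected. Action $2$ at $\init$ leads to an unknown planted subset $\psub\subset\cS^1$; planted states transition to $X$ with probability $\alpha$ while unplanted (unreachable) states transition to $Z$ with probability $\beta$, with parameters chosen so that the two hypothesis families induce \emph{identical} marginals over next states when $s\sim\unif(\cS^1)$, and the reward of the only distinguishing state $Z$ is hidden because $Z$ is unreachable and hence need not be covered. Recovering the optimal action then requires estimating $\alpha$, which amounts to detecting the planted subset from a mixture of planted and unplanted transitions --- a support-testing problem requiring $\poly(S)$ samples. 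Correspondingly, the lower bound is not a two-point Le Cam argument but a bound on the total variation between two \emph{mixtures} over combinatorially many planted subsets, carried out by passing to $\chi^2$-divergence against a reference measure and controlling the overlap $\abs{\psub\cap\psub'}$ via hypergeometric tail bounds; this is where $S^{1/3}$ comes from. Your proposal correctly identifies that all-policy realizability confines the decision to $\init$ and that \overcoverage is the relevant phenomenon, but the mechanism you propose for exploiting it does not work.
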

This result shows that even though realizability and
concentrability are satisfied, any algorithm requires at least
$\Omega(S^{1/3})$ samples to learn a near-optimal policy. Since $S$ can be
arbitrarily large, this establishes that sample-efficient offline RL in
large state spaces is impossible without stronger
representation or coverage conditions and resolves the conjecture of \citet{chen2019information}.

In fact, the theorem establishes hardness under a substantially
stronger representation condition than realizability---\emph{all
policy realizability}---which requires that $Q^\pi \in \cF$ for
\emph{every} policy $\pi$, rather than just for
$\pi^\star$. When one has the ability to interact with the MDP
  starting from the data collection distribution $\mu$ (e.g., via a
  generative model), it is known that all policy realizability and
  concentrability suffice for \emph{approximate policy iteration}
  methods~\citep{antos2008learning,lattimore2020learning}. However,
  the offline RL setting does not permit interaction, and so
  ~\pref{thm:main} yields a separation between offline RL and online RL with
  a generative model (and an exploratory distribution).  The lower
bound construction can also be extended to related settings, including
policy evaluation and linear function approximation;
see~\pref{sec:extensions} for discussion.

\pref{thm:main} relies on a strong version of the
  \emph{\overcoverage} phenomenon, where the \datadist contains states not
  visited by any admissible policy.\footnote{Note that while the
    states may not be reachable for a given MDP in the family $\cM$,
    in our construction, all states are reachable for \emph{some} MDP in the family.} The issue of \overcoverage was first noted by
  \citet{xie2021batch}, who observed that it can lead to pathological
  behavior in certain algorithms. Our result shows---somewhat
  surprisingly---that this phenomenon is a fundamental barrier that
  applies to \emph{any} value approximation method. In
  particular, we show that \overcoverage causes spurious correlations across
  reachable and unreachable states which leads to significant
  uncertainty in the dynamics when the number of states is
  large.%

\pref{thm:main} has constant suboptimality gap for $\Qstar$, which
rules out gap-dependent regret bounds as a path toward
sample-efficient offline RL. We focus on policy optimization and
infinite-horizon RL for concreteness, but the lower bound readily
extends to the finite-horizon setting (in fact, with $H=3$), and
provides, to our knowledge, the first impossibility result for offline
RL with constant horizon.

\paragraph{A lower bound for admissible data distributions}

Up to this point, we have considered the most ubiquitous formulation of
the offline RL problem, in which
$\mu\in\Delta(\cS\times\cA)$ is an arbitrary distribution over
state-action pairs. \pref{thm:main} exploits this formulation by
placing mass on states not reachable by any policy, leading to a strong
version of the \overcoverage phenomenon. Our next result shows
concentrability and realizability are still insufficient for
sample-efficient offline RL even when the data distribution $\mu$ is
\emph{admissible}, in the sense that it is induced by a policy or
mixture of policies. While strong \overcoverage is impossible in this
setting, the lower bound relies on a weak notion of \overcoverage in which
$\mu$ places significant mass on low-probability states.

\begin{restatable}[Lower bound for admissible data]{theorem}{admissible}
  \label{thm:admissible}
For any $S\ge9$, $\gamma\in(1/2,1)$, and $C\ge
64$, there exists a family of MDPs $\cM$ with $\abs{\cS}=S$ and
$\abs{\cA}=2$, a value function class $\cF$ with $\abs{\cF}=2$, and a
data distribution $\mu$ which is a mixture of admissible distributions,
such that:
\begin{enumerate}
    \item We have $Q^\pi\in\cF$ for all $\pi:\cS\rightarrow\Delta(\cA)$ (all-policy realizability) and $\Conc\le C$ (concentrability) for all models in $\cM$.
    \item Any algorithm using less than $c\cdot \min\crl*{S^{1/3}/(\log S)^2,2^{C/32},2^{1/(1-\gamma)}}$ samples must have
    $J(\pistar) - \En\brk*{J(\pihat)}\geq{} c'$ for some
    instance in $\cM$, where $c$ and $c'$ are absolute numerical
    constants. \loose %
\end{enumerate}
\end{restatable}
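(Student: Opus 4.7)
The plan is to follow the same general strategy as~\pref{thm:main} but replace strong over-coverage (where $\mu$ is supported on states unreachable by any policy) with weak over-coverage (where $\mu$ is supported on reachable states whose visitation probability is very small under every single policy). Concretely, I would construct a family $\cM=\{M_\psi\}_{\psi\in\Psi}$ of MDPs indexed by a hidden parameter with $\abs{\Psi}=\Omega(\min\{S^{1/3}/(\log S)^2,\,2^{C/32},\,2^{1/(1-\gamma)}\})$, and pick the data distribution $\mu$ to be the occupancy measure of an exploratory policy (or a mixture of admissible distributions) that visits $\Omega(\abs{\Psi})$ ``branch'' states roughly uniformly. The hidden parameter $\psi$ would be encoded only in the dynamics at one state--action pair tied to the $\psi$-th branch, so that each informative sample occurs with $\mu$-probability only $\sim 1/\abs{\Psi}$.

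I would structure the MDP skeleton into two loosely coupled regions: an \emph{exploration region} covered by $\mu$, where observations reveal information about $\psi$ but no reward is earned, and an \emph{execution region} where a policy must act on its guess of $\psi$ to obtain reward. A short gadget at the initial distribution would route trajectories into one region or the other, so that $\mu$ never places direct mass on the execution region even though concentrability is retained through a careful coupling of transitions between the two regions. The lower bound itself would then follow from Assouad's lemma or a Le~Cam-style two-point reduction per coordinate of a binary encoding of $\psi$: because each bit of $\psi$ is revealed only by samples at a rare state, recovering enough bits to guarantee suboptimality below $c'$ requires $\Omega(\abs{\Psi})$ samples. The $2^{1/(1-\gamma)}$ factor would arise because each branch corresponds to a trajectory of effective length $\Theta(1/(1-\gamma))$, which controls how many branches can coexist without violating concentrability once $\mu$ is required to be admissible.

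The main obstacle, just as in~\pref{thm:main}, is the simultaneous enforcement of three competing constraints: (i) all-policy realizability $Q^\pi\in\cF$ with $\abs{\cF}=2$, (ii) concentrability $\Conc\leq C$ under an \emph{admissible} $\mu$, and (iii) exponential indistinguishability of the hidden parameter given offline samples from $\mu$. Constraint (i) is the tightest combinatorial restriction: the $Q$-function of every policy in every MDP in the family must coincide with one of two fixed candidates on every reachable state--action pair. Satisfying this together with (ii) and (iii) forces a delicate layered reward/transition structure in which all value information is funneled through a constant number of ``summary'' values; in particular, engineering the exploration gadget so that admissibility of $\mu$ yields the specific $2^{C/32}$ dependence---rather than just a polynomial in $C$---is, I expect, the technical crux of the argument.
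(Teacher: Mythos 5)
There is a genuine gap here, and it lies exactly where you flag ``the technical crux'': your proposed mechanism for hiding the parameter---localizing the information about $\psi$ at a single rare state--action pair with $\mu$-mass $\sim 1/\abs{\Psi}$---cannot simultaneously satisfy admissibility, bounded concentrability, and a constant suboptimality gap. If $\mu$ is a mixture of occupancy measures and $\Conc\le C$, then every admissible distribution places mass at most $C\cdot\mu(s,a)\sim C/\abs{\Psi}$ on the informative pair, so no policy's value can depend on the unknown dynamics there by more than $O\bigl(C/(\abs{\Psi}(1-\gamma))\bigr)$; once $\abs{\Psi}\gg C/(1-\gamma)$ the gap vanishes and the instances become statistically close but also \emph{decision-theoretically} indistinguishable, so you get no lower bound with constant $c'$. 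The same tension kills the exploration/execution split: under admissibility plus concentrability, $\mu$ cannot avoid any region that some policy visits with non-negligible probability, so the execution region must itself be reached only with exponentially small probability by \emph{every} policy---and constructing an MDP family where that is true while the optimal action still carries a constant advantage is precisely the content of the paper's proof, not a detail to be deferred. Your Assouad/per-bit reduction also does not match the target bound: a ``one bit per rare branch'' argument yields at best a lower bound linear in the number of branches, which is capped at $O(C)$ by the concentrability argument above, whereas the theorem demands $2^{C/32}$.

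The paper's actual construction resolves this with two separate information channels, each blocked by a different mechanism, and the three terms in the $\min$ do not come from a single cardinality $\abs{\Psi}$. The hard family is again a two-point test between mixtures $\cM_1$ and $\cM_2$ (Le Cam, not Assouad), built from $L$ nested layers of the planted-subset gadget of \pref{thm:main}: unplanted states in layer $l$ feed into the planted states of layer $l+1$, so the only state whose \emph{reward} distinguishes the families ($Z$) is reached by every policy with probability $\Theta(2^{-L})$, while the \emph{transition} information distinguishing the families is spread over combinatorially many planted subsets in well-covered states. The $S^{1/3}/(\log S)^2$ term comes from a hypergeometric $\chi^2$-divergence computation against reference ``average MDPs'' (blocking the transition channel); the $2^{C/32}$ and $2^{1/(1-\gamma)}$ terms both come from the single layer parameter $L=\lfloor\min\{C/32,\,1/(1-\gamma),\,\log_2 S\}\rfloor$, which simultaneously controls the concentrability constant ($\Conc\le 32L$), keeps the value gap $\Theta(\gamma^L/L)\cdot\gamma/(1-\gamma)$ bounded below by a constant, and makes the probability that the dataset ever contains the revealing reward observation at most $n\cdot 2^{-L}/8$. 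Your sketch captures the goal (weak over-coverage under an admissible $\mu$) but not the construction or the argument that achieves it.
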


Compared to \pref{thm:main}, which shows that for general data
distributions any algorithm must
have sample complexity polynomial in the number of states even when
concentrability is constant, \pref{thm:admissible} shows that, for
admissible data distributions,\footnote{The fact that the data collection distribution is a mixture, is not critical for the result. It can be weakened to a single admissible distribution with realizability (rather than all-policy realizability).} any
algorithm must have sample complexity that is \emph{either} polynomial
in the number of states \emph{or} exponential in concentrability (or
the effective horizon $(1-\gamma)^{-1}$).
This result is incomparable to~\pref{thm:main} since, it is quantitatively slightly weaker from a sample complexity perspective, but stronger in that applies to admissible data distributions. 
Since admissible distributions are perhaps more natural in practice,~\pref{thm:admissible} serves as a strong impossibility result.

While we cannot rely on strong \overcoverage to
  prove~\pref{thm:admissible}, we are still able to create spurious
  correlations between a set of states that are useful for estimation
  and the remaining states, which are less useful. Indeed, our
  construction embeds a structure used in the proof of~\pref{thm:main}
  in a nested fashion, so that even an admissible data distribution
  provides insufficient information to disentangle this correlation
  and learn a near-optimal policy.

\arxiv{
\subsection{Related Work}
\label{sec:related}

\arxiv{We close this section with a detailed discussion of some of the most
  relevant related work.}
\colt{In this section we discuss additional related work not already covered.}
\paragraph{Lower bounds}
While algorithm-specific counterexamples for offline reinforcement
learning algorithms have a long history
\citep{gordon1995stable,tsitsiklis1996feature,tsitsiklis1997analysis,wang2021instabilities},
information-theoretic lower bounds are a more recent subject of
investigation. \citet{wang2020statistical}
(see also \citet{amortila2020variant})
consider the setting where $\cF$ is linear (i.e.,
$\Qstar(s,a)=\tri*{\phi(s,a),\theta}$, where $\phi(s,a)\in\bbR^{d}$ is a
known feature map). They consider a weaker coverage condition tailored
to the linear setting, which asserts that $\eigmin\prn*{
  \En_{(s,a)\sim{}\mu}\brk*{\phi(s,a)\phi(s,a)^{\trn}}
  }\geq{}\tfrac{1}{d}$, and they show that this condition and
  realizability alone are not strong enough for sample-efficient offline RL. The
  feature coverage condition is strictly weaker than
  concentrability, so this does not suffice to resolve the conjecture
  of \citet{chen2019information}. 
  Instead, the conceptual takeaway
  is that the feature coverage condition can lead to
  \emph{under-coverage} and may not be the right assumption for offline
  RL. This point is further highlighted by \citet{amortila2020variant}
  who show that in the infinite-horizon setting, the feature coverage
  condition can lead to non-identifiability in MDPs with only two states, meaning one
  cannot learn an optimal policy even with
  infinitely many samples. Concentrability places stronger restrictions on the
  \datadist and underlying dynamics and always implies identifiability when the state and
  action space are finite. Establishing impossibility of sample-efficient learning under concentrability and realizability requires very new ideas (which we provide in this paper,  via the notion of \emph{\overcoverage}).
 
 The results of \citet{wang2020statistical} and \citet{amortila2020variant} are extended by \citet{zanette2021exponential}, who provides a slightly more general
  lower bound for linear realizability. The results of \citet{zanette2021exponential} \emph{cannot} resolve the conjecture of \citet{chen2019information} either, because for the family of MDPs  constructed therein, no data distribution can satisfy concentrability, which means that the failure of algorithms can still be attributed to the failure of concentrability rather than the hardness under concentrability. There is also a parallel line of work providing  lower bounds for \emph{online} reinforcement learning with linear realizability  \citep{du2019good,weisz2021exponential,wang2021exponential}, which are based on very different constructions and techniques.

Compared to the offline RL lower bounds above
  \citep{wang2020statistical,amortila2020variant,zanette2021exponential}, our lower bounds
  have a less geometric, more information-theoretic flavor, and share
  more in common with lower bounds for sparsity and support testing in
  statistical estimation
  \citep{paninski2008coincidence,verzelen2010goodness,verzelen2018adaptive,
    canonne2020survey}. While previous work considers a relatively small state
  space but large horizon and feature dimension, we grow the state space, leading to polynomial
  dependence on $S$ in our lower bounds; the horizon is somewhat immaterial in our construction.

Another interesting feature is that while previous lower bounds
\citep{wang2020statistical,amortila2020variant,zanette2021exponential} are based on
deterministic MDPs, our constructions critically use stochastic
dynamics, which is a \emph{necessary} departure from a
technical perspective. Indeed, for \emph{any} family of deterministic
MDPs, \emph{any} data distribution satisfying concentrability (if such
a distribution exists) would enable sample-efficient learning,  simply because all
  MDPs in the family have deterministic dynamics, and the Bellman error
minimization algorithm in \citet{chen2019information} succeeds under
concentrability and realizability when the dynamics are
deterministic.\footnote{Deterministic dynamics allow
  one to avoid the well-known \emph{double sampling} problem and in particular
  cause the conditional variance in Eq. (3) of
  \citet{chen2019information} to vanish.} Therefore, any construction involving deterministic MDPs \citep{wang2020statistical,amortila2020variant,zanette2021exponential} cannot be used to establish impossibility of sample-efficient learning under
  concentrability and realizability.

  \paragraph{Upper bounds}
  Classical analyses for offline reinforcement learning algorithms 
  such as FQI
  \citep{munos2003error,munos2007performance,munos2008finite,antos2008learning}
  provide sample complexity upper bounds in terms of concentrability
  under the strong representation condition of Bellman
  completeness. The path-breaking recent work of \citet{xie2021batch}
  provides an algorithm which requires only realizability, but uses
  a stronger coverage condition (``\pushforward'') which requires that $P(s'\mid{}s,a)/\mu(s')\leq{}C$
  for all $(s,a,s')$. Our results imply that this condition cannot be
  substantially relaxed.

  A complementary line of work, primarily focusing on policy
  evaluation \citep{uehara2020minimax,xie2020q,jiang2020minimax,uehara2021finite}, 
  provides upper bounds that require only concentrability and
  realizability, but assume access to an additional \emph{weight function class}
  that is flexible enough to represent various occupancy measures for the underlying MDP. These
  results scale with the complexity of the weight function class. In
  general, the
  complexity of this class may be prohibitively large without prior
  knowledge; this is witnessed by our lower bound construction.

  \colt{
  \oldparagraph{Why aren't stronger coverage or representation conditions satisfied?} While \akedit{our construction, described in~\pref{sec:con},} satisfies
concentrability and realizability, it fails to satisfy stronger
coverage and representation conditions for which sample-efficient
upper bounds are known. This is to be expected,\arxiv{ (or else we would have
a contradiction!)} but understanding why is instructive. Here we
discuss connections to some notable conditions.

\noindent\emph{Pushforward concentrability.} The stronger notion of
concentrability that $P(s'\mid{}s,a)/\mu(s')\leq{}C$ for all
$(s,a,s')$, as used in \cite{xie2021batch}, fails to hold
because the state $Z$ is not covered by $\mu$. This presents no issue
for standard concentrability because $Z$ is not reachable starting
from $\init$.

\noindent\emph{Completeness.}
Bellman completeness requires that the value function class $\cF$
has $\cT_{M}\cF\subseteq\cF$ for all $M\in\cM$, where $\cT_{M}$ is the Bellman operator for $M$. We show in
\pref{eq:f1} that the set of optimal Q-value functions
$\crl*{Q_{M}^{\star}}_{M\in\cM}$ is small, but completeness requires that the class remains closed even when we mix and match value functions and Bellman operators from $\cM_1$ and $\cM_2$, which results in an exponentially large class in our construction. 
  To see why, first note that by Bellman optimality, we must have
$\crl*{Q_{M}^{\star}}_{M\in\cM}\subseteq\cF$ if $\cF$ is
complete. We therefore also require
$\cT_{M'}Q^{\star}_M\in\cF$ for $M\in\cM_1$ and $M'\in\cM_2$. Unlike
the optimal Q-functions, which are constant across $\cS_1$, 
the value of $\brk*{\cT_{M'}Q^{\star}_M}(s,2)$ for
$s\in\cS_1$ depends on whether $s\in{}\psub$ or $s\in\cS_1\setminus{}\psub$,
where $\psub$ is the collection of planted states for $M'$.\footnote{Recall that $f_1$ is the optimal Q-function for any $M \in \cM_1$ and consider $\cT_{M'}f_1$ where $M' \in \cM_2$ has planted set $I$. 
For $s \in I$, we have $[\cT_{M'}f_1](s,2) = (1/2\cdot 1 + 1/2 \cdot 2/3)\gamma = 5/6 \gamma$ while for $s \in \cS_1\setminus{}I$, we have $[\cT_{M'}f_1](s,2) = (1/2\cdot 1 + 1/2 \cdot 0)\gamma = 1/2\gamma$.}
As a result, there are ${S_1\choose\abs{\psub}}$ possible values for the
Bellman backup, which means that the cardinality of $\cF$ must be
exponential in $S$.

}

}
  
\arxiv{
\subsection{Preliminaries}

{For any $x\in\bbR$, let $\prn{x}_+:=\max\crl{x,0}$.} For an integer $n\in\bbN$, we let $[n]$ denote the set
  $\{1,\dots,n\}$. For a finite set $\cX$, $\unif(\cX)$ denotes the uniform distribution over $\cX$, and $\Delta(\cX)$ denotes the set of all probability distributions
  over $\cX$. For probability distributions $\bbP$ and $\bbQ$ over a measurable space
  $(\Omega,\filt)$ with a common dominating measure, we define the total variation distance as
  $\Dtv{\bbP}{\bbQ}=\sup_{A\in\filt}\abs{\bbP(A)-\bbQ(A)}
    = \frac{1}{2}\int\abs{d\bbP-d\bbQ}$
  and define the \chisquared as $\Dchis{\bbP}{\bbQ}:=\bbE_{\bbQ}\brk[\big]{\prn[\big]{\frac{d\bbP}{d\bbQ}-1}^2}=\int\frac{d\bbP^{2}}{d\bbQ}-1$
  when $\bbP\ll\bbQ$ and $+\infty$ otherwise.
}

\section{Fundamental Barriers for Offline Reinforcement Learning}
\label{sec:main}
\newcommand{\term}{\mathfrak{t}}

In this section we present the lower bound construction for \pref{thm:main} and prove the result, then discuss consequences. The proof of \pref{thm:admissible}---which can be viewed as a generalization of this result, but is somewhat more involved---is deferred to \pref{app:admissible}, with an overview given in \pref{sec:admissible_overview}.\footnote{Compared to the first version of this paper (arXiv preprint v1), the current version uses a slight modification to the \pref{thm:main} construction. The only purpose of this change is to emphasize similarity to the construction for \pref{thm:admissible}.}

\colt{
\paragraph{Preliminaries}
{For any $x\in\bbR$, let $\prn{x}_+:=\max\crl{x,0}$.} For a natural number $n\in\bbN$, we let $[n]$ denote the set
  $\{1,\dots,n\}$. For a finite set $\cX$, $\unif(\cX)$ denotes the uniform distribution over $\cX$, and $\Delta(\cX)$ denotes the set of all probability distributions
  over $\cX$. For probability distributions $\bbP$ and $\bbQ$ over a measurable space
  $(\Omega,\filt)$ with a common dominating measure, we define the total variation distance as
  $\Dtv{\bbP}{\bbQ}=\sup_{A\in\filt}\abs{\bbP(A)-\bbQ(A)}
    = \frac{1}{2}\int\abs{d\bbP-d\bbQ}$
  and define the \chisquared as $\Dchis{\bbP}{\bbQ}:=\bbE_{\bbQ}\brk[\big]{\prn[\big]{\frac{d\bbP}{d\bbQ}-1}^2}=\int\frac{d\bbP^{2}}{d\bbQ}-1$
  when $\bbP\ll\bbQ$ and $+\infty$ otherwise.
}

 \subsection{Construction: MDP Family, Value Functions,
   and Data Distribution}\label{sec:con}

We first provide our lower bound construction, which entails specifying the MDP family $\cM$, the value function class $\cF$, and the data distribution $\mu$.

 All MDPs in $\cM$ belong to a parameterized MDP family with shared transition and reward structure. In what follows, we first describe the structure of the parameterized family (\cref{sec:structure}) and provide intuition behind why this structure leads to statistical hardness (\pref{sec:intuition}). We then
provide a specific collection of parameters that gives rise to the hard
family $\cM$ (\cref{sec:parameter}) and complete the construction by specifying the value function class $\cF$ and data
distribution $\mu$ (\cref{sec:complete}).\loose

\begin{figure}[t]
    \centering
\begin{tikzpicture}
\draw[draw=black] (-2,1.5) circle (10pt) node {$\mathfrak{s}$};
\node[inner sep=0pt] (s0) at (-1.67,1.5) {};
\node[inner sep=0pt] (s0p) at (1.1,1.6) {};

\node[inner sep=0pt] (win) at (-0.76,0.5) {};
\draw (-.4,0.5) circle (10pt) node {$W$};
\node[inner sep=0pt] (wout) at (-0.07,0.5) {};
\node[inner sep=-1pt] (wtmp) at (0.95,0.5) {$+w$};
\draw [-{Latex[length=2mm,color=black]}] (wout) to[out=50, in=95,looseness=1] (0.63,0.5)  to[out=-85,in=-50,looseness=1] (wout);

\draw[draw=black, pattern=north west lines] (2,0) circle (7pt);
\node[inner sep=0pt] (s1) at (2.22,0) {};
\node[inner sep=0pt] (s1p) at (3., 0) {};
\draw[draw=black, pattern=north west lines] (2,0.75) circle (7pt);
\node[inner sep=0pt] (s2) at (2.22,0.75) {};
\node[inner sep=0pt] (s2p) at (3., 0.75) {};
\draw[draw=black] (2,1.3) circle (2pt);
\draw[draw=black] (2,1.5) circle (2pt);
\draw[draw=black] (2,1.7) circle (2pt);

\draw node at (2,-0.5) {$\textrm{States }\cS^1$};

\draw[draw=black, fill=gray, fill opacity=0.5] (2,2.25) circle (7pt);
\node[inner sep=0pt] (s5in) at (1.78,2.25) {};
\node[inner sep=0pt] (s5) at (2.22,2.25) {};
\node[inner sep=0pt] (s5p) at (3., 2.25) {};
\draw[draw=black, fill=gray, fill opacity=0.5] (2,3) circle (7pt);
\node[inner sep=0pt] (s6in) at (1.78,3) {};
\node[inner sep=0pt] (s6) at (2.22,3) {};
\node[inner sep=0pt] (s6p) at (3., 2.8) {};

\node[inner sep=0pt] (s7in) at (1.78,1.75) {};

\draw [decorate,decoration={brace},yshift=0pt] (1.0,1.67) -- (1.0,3.12) node [black,midway,xshift=-0.3cm] {$\psub$};

\node[inner sep=0pt] (yin) at (4.67,2.8) {};
\draw (5,2.8) circle (10pt) node {$X$};
\node[inner sep=0pt] (yout) at (5.33,2.8) {};
\node[inner sep=-1pt] (ytmp) at (6.35,2.75) {$+1$};

\node[inner sep=0pt] (xin) at (4.67,1.5) {};
\draw (5,1.5) circle (10pt) node {$Y$};
\node[inner sep=0pt] (xout) at (5.33,1.5) {};
\node[inner sep=-1pt] (xtmp) at (6.35,1.45) {$+0$};

\node[inner sep=0pt] (zin) at (4.67,0.2) {};
\draw (5,0.2) circle (10pt) node {$Z$};
\node[inner sep=0pt] (zout) at (5.33,0.2) {};
\node[inner sep=-1pt] (ztmp) at (6.35,0.15) {$+\frac{\alpha}{\beta}$};

\draw [draw=blue, -,line width=1.2] (s0) edge (s0p);
\draw [draw=red, -{Latex[length=2mm,color=red]}, line width=1.2] (s0) edge (win);
\draw [draw=blue, -{Latex[length=2mm,color=blue]}] (s0p) edge (s5in);
\draw [draw=blue, -{Latex[length=2mm,color=blue]}] (s0p) edge (s6in);
\draw [draw=blue, -{Latex[length=2mm,color=blue]}] (s0p) edge (s7in);

\draw [draw=black, -] (s1) edge (s1p);
\draw [draw=black, -{Latex[length=2mm,color=black]}] (s1p) edge (xin);
\draw [draw=black, -{Latex[length=2mm,color=black]}] (s1p) edge (zin);

\draw [draw=black, -] (s2) edge (s2p);
\draw [draw=black, -{Latex[length=2mm,color=black]}] (s2p) edge (xin);
\draw [draw=black, -{Latex[length=2mm,color=black]}] (s2p) edge (zin);

\draw [draw=black, -] (s5) edge (s5p);
\draw [draw=black, -{Latex[length=2mm,color=black]}] (s5p) edge (xin);
\draw [draw=black, -{Latex[length=2mm,color=black]}] (s5p) edge (yin);

\draw [draw=black, -] (s6) edge (s6p);
\draw [draw=black, -{Latex[length=2mm,color=black]}] (s6p) edge (xin);
\draw [draw=black, -{Latex[length=2mm,color=black]}] (s6p) edge (yin);

\draw [-{Latex[length=2mm,color=black]}] (yout) to[out=50, in=95,looseness=1] (6.03,2.8)  to[out=-85,in=-50,looseness=1] (yout);
\draw [-{Latex[length=2mm,color=black]}] (xout) to[out=50, in=95,looseness=1] (6.03,1.5)  to[out=-85,in=-50,looseness=1] (xout);
\draw [-{Latex[length=2mm,color=black]}] (zout) to[out=50, in=95,looseness=1] (6.03,0.2)  to[out=-85,in=-50,looseness=1] (zout);

\node [inner sep=0pt, color=black] () at (3.5, 3.05) {$\alpha$};
\node [inner sep=0pt, color=black] () at (3.5, -0.2) {$\beta$};

\end{tikzpicture}

  \caption{The MDPs in $\cM$ are parametrized
    by three scalars $\alpha,\beta,w$ and
    a subset of states $\psub$. The state space consists of an \emph{initial state} $\init$, a large number of \emph{intermediate states} $\cS^1$, and four self-looping \emph{terminal states} $\{W,X,Y,Z\}$. 
    From the initial state $\init$, action $1$ (in red) transitions to state
    $W$, while action $2$ (in blue) transitions to a subset of intermediate states $\psub\subset\cS^1$ with equal probability. 
    In all intermediate states and terminal states, actions 1 and 2 have the same effect, with transitions denoted
      in black. Among the intermediate states, $\psub \subset \cS^1$ (the gray ones) are the 
    \emph{planted states} which transition with probability $\alpha$ to state $X$ and $1-\alpha$
    to state $Y$, and the remaining $\cS^1 \setminus \psub$ (the striped ones) are the
    \emph{unplanted states} which transition 
    with probability $\beta$ to $Z$ and $(1-\beta)$ to $Y$. There are combinatorially many choices for $\psub$. Only terminal states can generate non-zero rewards: the rewards of the states $W$, $X$, $Y$ and $Z$ are $w$, $1$, $0$ and $\alpha/\beta$, respectively. }
    \label{fig:1}
\end{figure}
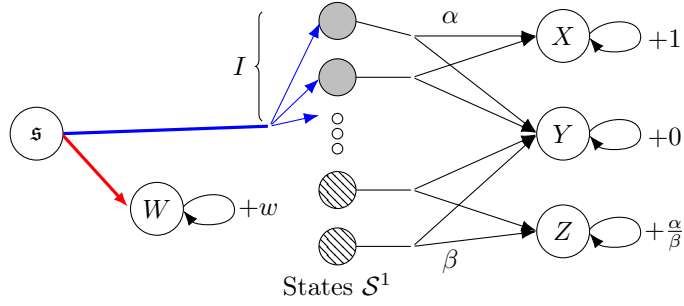

\subsubsection{MDP Parameterization}\label{sec:structure}%
\newcommand{\Mparam}{M_{\alpha,\beta,w,\psub}}%
\newcommand{\br}{\mathbf{r}}%

Let the discount factor
$\gamma\in(0,1)$ be fixed, and let $S\in\bbN$ be
given. 
Assume without loss of generality that $S>5$ and that
$(S-5)/4$ is an integer. We consider the parameterized MDP family illustrated in \cref{fig:1}. Each MDP takes the form
$M_{\alpha,\beta,w,\psub}=\prn*{\cS,\cA,P_{\alpha,\beta,\psub},R_{\alpha,\beta,w},\gamma,d_0}$,
and is parametrized by two probability parameters
$\alpha,\beta\in(0,1)$, a reward parameter
$w\in[0,1]$, and a subset of states $\psub$.  %
All MDPs in the family $\crl{M_{\alpha,\beta,w,\psub}}$ share the same state space $\cS$, action
space $\cA$,  discount factor $\gamma$, and initial state
distribution $\dnot$, and differ only in terms of the transition function $P_{\alpha,\beta,\psub}$ and the reward function $R_{\alpha,\beta,w}$.

\paragraph{State space} 
We consider a state space $\cS:=\{\init\}\cup\cS^1\cup\{W,X,Y,Z\}$,
where $\init$ is the single
\emph{initial} state (occurring at $h=0$), $W,X,Y,Z$ are  four self-looping \emph{terminal}
  states, and 
$\cS^1$ is a collection of
\emph{intermediate} (i.e., neither initial nor terminal) states which may occur between the initial state and the terminal states $\crl{X,Y,Z}$. The number of intermediate states is $S_1:=\abs{\cS^1}=S-5$ which ensures $\abs{\cS}=S$.

\paragraph{Action space} 
Our action space is given by
$\cA=\crl{1,2}$. For the initial state $\mathfrak{s}$, the two actions have distinct effects, while for all other states in $\cS\setminus\{\init\}$ both actions have identical effects. As a result, the value of a given
policy only depends on the action it selects in $\mathfrak{s}$. For \arxiv{the sake of }compactness, we use the symbol $\mathfrak{a}$ as a placeholder to denote either action when taken in $s \in \cS\setminus\{\init\}$, since the choice is immaterial.\footnote{It is conceptually simpler to consider a construction where only a single action is available in $\cS\setminus\{\init\}$ and $\cS^1$, but this is notationally more cumbersome.}

\paragraph{Transition operator} For an MDP $\Mparam$, we let
$\psub\subset\cS^1$ parameterize a subset of the {intermediate states}. We
call each $s\in \psub$ a \emph{planted state} and $s\in\widebar{\psub}:=\cS^1\setminus \psub$ an
\emph{unplanted state}. The
dynamics $P_{\alpha,\beta,\psub}$ for $M_{\alpha,\beta,w,\psub}$
are determined by $\psub$ and the parameters $\alpha,\beta\in(0,1)$ as
follows (cf. \pref{fig:1}):
    \begin{itemize}
        \item \emph{Initial state $\init$}. We define $P_{\alpha,\beta,\psub}\prn{
            \mathfrak{s},1}=\unif(\{W\})$ and 
          $P_{\alpha,\beta,\psub}\prn{
            \mathfrak{s},2}=\unif(\psub)$. 
            That is, from the
          initial state $\mathfrak{s}$, choosing action 1 makes the MDP transitions to 
          state $W$ deterministically (see the red arrow in
          \cref{fig:1}), while choosing 2 makes the MDP transitions to each
          planted state in $\psub$ with equal probability (see the
          blue arrow in \cref{fig:1}); unplanted states are not reachable.
        \item \emph{Intermediate states}. Transitions from states in
          $\cS^1$ are defined as follows.
          \begin{itemize}
            \item 
          For each planted state $s\in\psub$, define
          \[
          P_{\alpha,\beta,\psub}(s,\initac)=\alpha \text{Unif}(\{X\})+(1-\alpha)\text{Unif}(\{Y\}).
          \]
          \item 
          For each unplanted state $s\in\widebar{\psub}$, define
          \[
          P_{\alpha,\beta,\psub}(s,\initac)=\beta \text{Unif}(\{Z\})+(1-\beta)\text{Unif}(\{Y\}).
          \]

          \end{itemize}That is, the MDP
          transitions stochastically to either $\crl{X,Y}$ or $\crl{Z,Y}$,
          depending on whether the source state $s\in\cS^1$ is planted
          or unplanted; see the black straight arrows in \cref{fig:1}.
        \item \emph{Terminal states}.
          All states in $\{W,X,Y,Z\}$ self-loop indefinitely. That is $P_{\alpha,\beta,\psub}(s,{\mathfrak{a}}) = \textrm{Unif}(\{s\})$ for all $s \in \{W,X,Y,Z\}$.%
    \end{itemize}

\paragraph{Reward function} 
The initial and intermediate states have no reward, i.e., $R_{\alpha,\beta,w}(s,a)=0, \forall s\in\{\init\}\cup\cS^1,\forall
a\in\cA$. Each of
the self-looping terminal states $\crl{W,X,Y,Z}$ has a fixed reward
determined by the parameters $\alpha$, $\beta$ and $w$. In particular, we
define $R_{\alpha,\beta,w}(W,{\mathfrak{a}})=w$, $R_{\alpha,\beta,w}(X,{\mathfrak{a}})=1$, $R_{\alpha,\beta,w}(Y,{\mathfrak{a}})=0$, and
$R_{\alpha,\beta,w}(Z,\mathfrak{a})=\alpha/\beta$.

\paragraph{Initial state distribution}
All MDPs in $\Mparam$ start at $\mathfrak{s}$ deterministically
(that is, the initial state distribution $d_0$ puts all the
probability mass on $\mathfrak{s}$). Note that since $d_0$ does not
vary between instances, it may be thought of as \emph{known} to the learning algorithm.

\subsubsection{Intuition Behind the Construction}
\label{sec:intuition}

The family of MDPs $\cM$ that witnesses our lower bound is a subset of the collection $\crl{M_{\alpha,\beta,w,\psub}}$. Before specifying the family precisely, we give intuition as to why this MDP structure leads to statistical hardness for offline reinforcement learning. 

Evidently, for any MDP $M_{\alpha,\beta,w,\psub}$, there is only a single effective decision that the learner needs to make: to choose action 1 in $\init$ (whose value is completely determined by $w$) or to choose action 2 in $\init$ (whose value is completely determined by $\alpha$). In our construction of the MDP family (to be specified shortly), we keep $w$ fixed over all MDPs (i.e., we make it \emph{known} to the learner), so the only challenge left to the learner is to learn the value of $\alpha$ of the underlying MDP. As we will explain, this seemingly simple task is surprisingly hard, leading to the hardness of offline reinforcement learning. The hardness arises as a result of two general principles, \emph{planted subset structure} and (strong) \emph{over-coverage}.

\paragraph{Planted Subset Structure}\label{sec:planted}
The intermediate states in $\cS^1$ are partitioned into planted and unplanted states. Each planted state in $\psub$ (the \emph{planted subset}) transitions to $X$ and $Y$ with probability $\alpha$ and $1-\alpha$ respectively, while each unplanted state in $\widebar{\psub}$  transitions to $Z$ and $Y$ with probability $\beta$ and $1-\beta$ respectively. We call such a structure the \emph{planted subset structure}, which has two important features:
\begin{itemize}
    \item The choice of $\psub\subset\cS^1$ is combinatorial in nature\arxiv{ (for example, the number of all planted subsets of size $S_1/2$ is $S_1 \choose S_1/2$, which is exponential in $S_1$)}. %
    \item Planted and unplanted states have the same value, which only depends on $\alpha$. This holds because the rewards of $X,Y,Z$ are $1,0,\alpha/\beta$ respectively (note that $\alpha\cdot1=\beta\cdot(\alpha/\beta)$). As a result, the choice of $\psub\subset\cS^1$ does not affect the value function at all.
\end{itemize}

The first feature serves as the basis for statistical hardness and leads to the appearance of the state space size in the sample complexity lower bound. For intuition as to why, suppose we are given a batch dataset of independent examples in which a state in $\cS^1$ is selected uniformly at random and we observe a sample from the next state distribution. One can show that basic statistical inference tasks such as estimating the size $\abs{I}$ of the planted subset require $\poly(S_1)$ samples, as this entails detecting the subset based on data generated from a mixture of planted and unplanted states. For example, it is well known that testing if a distribution
  is uniform on a set $\psub\subset \brk{N}$ with $|\psub| = \Theta(N)$ versus
  uniform on all of $\brk{N}$ requires $\textrm{poly}(N)$
  samples~\citep[see e.g.,][Section 5.1]{paninski2008coincidence,ingster2012nonparametric,canonne2020survey}.

Building on this hardness, we can show that any algorithm requires at least $\poly(S_1)$ samples to reliably estimate the transition probability parameter $\alpha$ if $\beta$ and $\abs{\psub}$ are unknown. Intuitively, this arises because the only way to avoid 
estimating $\abs{\psub}$ (which is hard) as a means to estimate $\alpha$ is to directly look at the marginal distribution over $\{X,Y,Z\}$. However, the marginal distribution is uninformative for estimating $\alpha$ when there is uncertainty about $\beta$ and $\abs{\psub}$. For example, the marginal probability of transitioning to $X$ is $\alpha|\psub|/S_1$, from which $\alpha$ cannot be directly recovered if $\abs{\psub}$ is unknown.

The takeaway is that while estimating $\alpha$ would be trivial if the dataset only consisted of transitions generated from planted states, estimating this parameter when states are drawn uniformly from $\cS^1$ is very difficult because an unknown subset comes from unplanted states. This is relevant because---as we will show---{in our construction, any near-optimal policy learning algorithm must have the ability to recover the value of $\alpha$.}

The second feature, that all states in $\cS^1$ share the same value, is also essential. Since the choice of $\psub\subset\cS^1$ does not affect the value function at all, this feature allows us to consider exponentially many choices of $\psub$ while ensuring that realizability is satisfied with a value function class $\cF$ of constant size. Thus, the $\poly(\abs{\cS})$ factor in our lower bound cannot be attributed to any other problem parameter, such as $\log |\cF|$.

\paragraph{(Strong) Over-coverage}\label{sec:overcoverage}
It remains to show that the hardness described above can be embedded in the offline RL setting, since (i) we must ensure concentrability is satisfied, and (ii) the learner observes rewards, not just transitions. Returning to \pref{fig:1}, we observe that the transitions from the initial state $\nullstate$ are such that all planted states in $\psub$ are \emph{reachable}, but the unplanted states in $\cS^1\setminus \psub$ are \emph{not reachable by any policy}. In particular, since all unplanted states are unreachable, any state that can only be reached from unplanted states is also unreachable, and hence we can achieve concentrability \cref{eq:concentrability} without covering such states. This allows us to choose the data distribution $\mu$ to be (roughly) uniform over all states except for the unreachable state $Z$. This choice satisfies concentrability, but renders all reward observations uninformative (cf. \cref{sec:structure}). As a consequence, we show that the task described in \cref{sec:planted}, i.e., detecting $\alpha$ based on transition data, is unavoidable for any algorithm with non-trivial offline RL performance.

The key principle at play here is the \overcoverage phenomenon (in particular, the strong version, where $\mu$ is supported over unreachable states). Per the discussion above, we know that if the data distribution $\mu$ were supported only over reachable states for a given MDP, all ``time step 1'' examples $(s,a,r,s')$ in $D_n$ would have $s\in\psub$, which would make estimating $\alpha$ trivial. 
\arxiv{Our construction for $\mu$ is uniform over all states in $\cS^1$, and hence satisfies over-coverage, since it is supported over a mix of planted states and spurious (unplanted) states not reachable by any policy. 
This makes estimating $\alpha$ challenging because---due to correlations between planted and unplanted states---no algorithm can accurately estimate $\alpha$ or recover the planted states until the number of samples scales with the number of states. }
{We emphasize, however, that while strong \overcoverage makes the construction for \pref{thm:main} comparatively simple, the weak variant of \overcoverage 
(where all states are reachable, but the offline data distribution creates a spurious correlation by favoring unplanted states)
still presents a fundamental barrier and is the mechanism behind \pref{thm:admissible}.}

\looseness=-1

\subsubsection{Specifying the MDP Family}\label{sec:parameter}
Using the parameterized MDP family $\crl{\Mparam}$, we construct the hard family $\cM$ for our lower bound by selecting a
specific collection of values for the parameters $(\alpha,\beta,w,\psub)$. %
Define $\cI_{\theta}:=\crl{\psub : \abs{\psub}=\theta S_1}$ for all $\theta\in(0,1)$ such that $\theta S_1$ is an integer. We define two sub-families of MDPs,
\begin{align*}
\cM_1:=\bigcup_{\psub\in\cI_{\theta_1}}\crl{M_{\alpha_1,\beta_1,w,\psub}}, \quad \textrm{and} \quad 
\cM_2:=\bigcup_{\psub\in\cI_{\theta_2}}\crl{M_{\alpha_1,\beta_1,w,\psub}}, 
\end{align*}
where $w:={\gamma(\alpha_1+\alpha_2)}/2$ is fixed for all MDPs, $\cM_1$ is specified by
$(\theta_1,\alpha_1,\beta_1)=(1/2,1/4,3/4)$, and $\cM_2$ is specified by
$(\theta_2,\alpha_2,\beta_2)=(1/4,1/2,1/2)$.\footnote{Recall that we assume without
  loss of generality that $S_1/4$ is an integer.} Finally, we define
the hard family $\cM$ via
\colt{
$\cM := \cM_1 \cup \cM_2$.
}
\arxiv{\[
\cM := \cM_1 \cup \cM_2.
\]}

Let us discuss some basic properties of the construction that are
used to prove the lower bound.
\begin{itemize}
\item 
For all MDPs in $\cM$, the rewards of the terminal states $W,X,Y$ are the same. This means there is no uncertainty in the reward function outside of state $Z$, which has $R_{\alpha_1,\beta_1,w}(Z,\initac) = \alpha_1/\beta_1=1/3$ when $M\in\cM_1$ and
$R_{\alpha_2,\beta_2,w}(Z,\initac) = \alpha_2/\beta_2=1$ when $M\in\cM_2$. As we mentioned, the reward of $Z$ ($=\alpha/\beta$) is chosen to ensure that all states in $\cS^1$ have the same value ($=\gamma\alpha/(1-\gamma)$), given the choice for $(\alpha,\beta)$. %
\item 
All MDPs in $\cM_1$ (resp. $\cM_2$) differ only in the choice
of $\psub \subset \cS^1$.
This property, along with the aforementioned fact that all states in $\cS^1$ have the same value $\gamma\alpha_1/(1-\gamma)$ (resp. $\gamma\alpha_2/(1-\gamma)$), ensures that $Q_M^\star$ is the same for all $M\in\cM_1$
(resp. $M\in\cM_2$).
Furthermore, our choice for $w\in(\gamma\alpha_1,\gamma\alpha_2)$ ensures that 
the optimal action in $\init$ is action 1 (resp. action 2) for all MDPs in $\cM_1$ (resp. $\cM_2$).
\item Our choice for $(\theta_1,\alpha_1,\beta_1)$ and $(\theta_2,\alpha_2,\beta_2)$ ensures that the marginal distribution of $s'$ under the process $s\sim \unif(\cS^1), s'\sim P(s,\initac)$ is the same for all $M\in\cM$. This property is motivated by the hard inference task described in \cref{sec:planted}, which requires an uninformative marginal distribution.
\end{itemize}
The exact numerical values for the MDP parameters chosen
  above are not essential to the result. Any tuple $(\theta_1,\alpha_1,\beta_1;\theta_2,\alpha_2,\beta_2;w)$ can be used to establish a result similar to \cref{thm:main},
as long as it satisfies certain properties
described in \cref{sec:general}.\loose

\subsubsection{Finishing the Construction: Value Functions
and Data Distribution}\label{sec:complete}

We complete our construction by specifying a value function class $\cF$ that satisfies (all-policy) realizability and a data distribution $\mu$ that satisfies concentrability \cref{eq:concentrability}.

\paragraph{Value function class}
{Define functions $f_1,f_2: \cS \times \cA \to \mathbb{R}$\arxiv{ as
  follows}; differences are highlighted in
  blue:}
\begin{align}\label{eq:f1}
    f_1(s,a) := \frac{1}{1-\gamma}\cdot\begin{cases}
   {\frac{3}{8}{\gamma^2}},&s=\mathfrak{s},\;a=1\\
    {\color{blue}\frac{1}{4}\gamma^2},&{s=\init,\; a=2}\\
    {\color{blue}\frac{1}{4}\gamma},&s\in\cS^1\\
    \frac{3}{8}\gamma,&s=W\\
    1,&s=X\\
    0,&s=Y\\
    {\color{blue}\frac{1}{3}},&s=Z\end{cases}
\quad \textrm{ and } \quad
    f_2(s,a) := \frac{1}{1-\gamma}\cdot\begin{cases}
    {\frac{3}{8}\gamma^{2}},&s=\mathfrak{s},\; a=1\\
    {\color{blue}\frac{1}{2}\gamma^2},&{s=\init,\; a=2}\\
    {\color{blue}\frac{1}{2}\gamma},&s\in\cS^1\\
    \frac{3}{8}\gamma,&s=W\\
    1,&s=X\\
    0,&s=Y\\
    {\color{blue}1},&s=Z\end{cases}.
\end{align}
The following result is elementary; see \pref{sec:value} for a detailed calculation.
\begin{proposition}
  \label{prop:value_calculation}
For all $M\in\cM_1$, we have $Q_M^\pi=f_1$ for all $\pi:\cS\rightarrow\Delta(\cA)$. For all  $M\in\cM_2$, we have $Q_M^\pi=f_2$ for all $\pi:\cS\rightarrow\Delta(\cA)$.
\end{proposition}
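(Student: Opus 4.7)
The plan is to verify the claimed identities by direct backwards computation, exploiting the fact that only the choice of action at $\init$ has any effect on trajectories. I will argue that for every MDP in the family and every policy $\pi$, the $Q$-function at every state is fully determined by the structural parameters $(\alpha,\beta,w,\gamma)$ (and not by $\pi$ or by $\psub$), and then check termwise that it agrees with $f_1$ on $\cM_1$ and $f_2$ on $\cM_2$.

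First, handle the terminal states $W,X,Y,Z$. Each self-loops with reward equal to $w$, $1$, $0$, $\alpha/\beta$ respectively, so $V^\pi(s) = R(s,\initac)/(1-\gamma)$ independently of $\pi$. Substituting $(w, \alpha, \beta)$ gives exactly the $W$, $X$, $Y$, $Z$ rows of $f_1$ (for $\cM_1$, with $\alpha_1/\beta_1 = 1/3$) and of $f_2$ (for $\cM_2$, with $\alpha_2/\beta_2 = 1$); note that both actions have identical effect, so the $a$-dependence drops out.

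Next, handle the intermediate states $s \in \cS^1$. Actions coincide, so $Q^\pi(s,a)$ equals the one-step Bellman backup into the terminal $V^\pi$ just computed. For a planted state $s\in\psub$,
\begin{equation*}
Q^\pi(s,\initac) = \gamma\bigl(\alpha V^\pi(X) + (1-\alpha)V^\pi(Y)\bigr) = \frac{\gamma\alpha}{1-\gamma},
\end{equation*}
and for an unplanted state $s\in\widebar{\psub}$,
\begin{equation*}
Q^\pi(s,\initac) = \gamma\bigl(\beta V^\pi(Z) + (1-\beta)V^\pi(Y)\bigr) = \gamma\beta\cdot\frac{\alpha/\beta}{1-\gamma} = \frac{\gamma\alpha}{1-\gamma}.
\end{equation*}
The key point, making the whole proposition clean, is that the specific choice of reward $\alpha/\beta$ at $Z$ equalizes the planted and unplanted values. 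Plugging $\alpha_1 = 1/4$ gives the $\cS^1$ entry of $f_1$, and $\alpha_2 = 1/2$ gives the $\cS^1$ entry of $f_2$.

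Finally, handle the initial state $\init$. Action $1$ transitions deterministically to $W$, so $Q^\pi(\init,1) = \gamma V^\pi(W) = \gamma w/(1-\gamma) = 3\gamma^2/(8(1-\gamma))$ by the choice $w = \gamma(\alpha_1+\alpha_2)/2 = 3\gamma/8$; this matches both $f_1$ and $f_2$. Action $2$ transitions to $\unif(\psub)$, and we just showed every $s\in\psub$ has the same $V^\pi(s) = \gamma\alpha/(1-\gamma)$, so $Q^\pi(\init,2) = \gamma^2\alpha/(1-\gamma)$; this matches $f_1$ for $\alpha_1 = 1/4$ and $f_2$ for $\alpha_2 = 1/2$. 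There is no real obstacle here; the only subtlety is noticing that $\pi$-independence at $\init$ follows because the downstream value at every successor state was already shown to be $\pi$-independent, so the Bellman equation collapses into a deterministic formula. The $Q_M^\star = f_i$ claim then follows as the special case $\pi = \pi_M^\star$.
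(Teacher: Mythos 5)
Your proof is correct and follows essentially the same route as the paper's: compute $V^\pi$ at the self-looping terminal states as $R(s,\initac)/(1-\gamma)$, back up through $\cS^1$ using the identity $\beta\cdot(\alpha/\beta)=\alpha$ to equalize planted and unplanted values, and then back up to $\init$, with $\pi$-independence following because actions coincide everywhere except at $\init$. All numerical substitutions match the entries of $f_1$ and $f_2$.
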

It follows that by choosing $\cF :=
  \{f_1,f_2\}$, all-policy realizability  holds for all  $M \in
  \cM$. Note that the all-policy realizability condition ($Q_M^\pi
  \in \cF$ for all $M\in\cM$ and for \emph{all} policies $\pi$) is substantially stronger than the standard realizability condition ($Q^{\star}_M\in\cF$ for all $M\in\cM$), as it requires $Q^\pi\in\cF$ for \emph{every} policy rather than just for $\pi^\star$. Since the conjecture of \citet{chen2019information} only asks for a construction that satisfies standard realizability, by considering all-policy realizability, we are proving a \emph{stronger} hardness result. This is possible because in our construction, different actions have identical effects on all states except for the initial state $\init$; as a result, $Q^\pi$ does not depend on $\pi$ at all (in other words, our construction ensures that $Q^\pi$ is always the same as $Q^\star$).

\paragraph{Data distribution} Recall that
the learner is provided with an \iid dataset
$D_n=\crl{(s_i,a_i,r_i,s'_i)}_{i=1}^n$ where $(s_i,a_i)\sim \mu$,
$s_i'\sim P(\cdot\mid s_i,a_i)$, and $r_i=R(s_i,a_i)$ (here $P$ and $R$ are the transition and reward functions for the {underlying} MDP).
We define the data collection distribution via:
  \[
\mu := \frac{1}{8}\unif(\crl{\mathfrak{s}}\times\crl{{1,2}})+\frac{1}{2}\unif(\cS^1\times\crl{1,2})+\frac{3}{8}\unif(\crl{W,X,Y}\times\crl{{1,2}}).
\]
This choice for $\mu$ forces the learner to suffer from the hardness described in \cref{sec:planted}. Salient properties include: (i) both planted and unplanted states in $\cS^1$
are covered, and (ii)  the state $Z$
is not covered.
Property (i) results in \overcoverages, which makes estimating the parameters of the underlying MDP from transitions statistically hard, while property (ii) hides the difference between the rewards of $Z$ for the two-subfamilies of MDPs and hence makes all reward observations uninformative. 

We now verify the concentrability condition \cref{eq:concentrability}:
\begin{enumerate}
    \item[-] For time step $h=0$, for any $\pi:\cS\rightarrow\Delta(\cA)$, the
      distribution of $(s_0,a_0)$ is $d_0\times\pi$. It follows that
      \arxiv{\[\nrm*{\frac{d_0\times\pi}{\mu}}_{\infty}\le {\frac{{1}}{\frac{1}{8}\cdot\frac{1}{2}}=16}.\]}
      \colt{$\nrm*{\frac{d_0\times\pi}{\mu}}_{\infty}\le {\frac{{1}}{\frac{1}{8}\cdot\frac{1}{2}}=16}$.}
\item[-]For time step $h=1$, for any $\pi:\cS\rightarrow\Delta(\cA)$, the
  distribution of $(s_1,a_1)$ is $\unif(\psub)\times\pi$. We
  conclude that
  \arxiv{\[\nrm*{\frac{\unif(\psub)\times \pi}{\mu}}_{\infty}\le
      {\frac{\frac{1}{S_1/4}}{\frac{1}{2}\cdot \frac{1}{S_1}\cdot\frac{1}{2}}=16},\]}
  \colt{$\nrm*{\frac{\unif(\psub)\times \pi}{\mu}}_{\infty}\le
  {\frac{\frac{1}{S_1/2}}{\frac{1}{2}\cdot \frac{1}{S_1}\cdot\frac{1}{2}}=8}$,}
where we have used that $\abs{\psub}\geq{}S_1/4$.
\item[-]For time step $h\ge2$, for any $\pi:\cS\rightarrow\Delta(\cA)$, the
  distribution of $(s_h,a_h)$ (denoted by $d^\pi_h$) is supported on
  $\crl{W,X,Y}\times\crl{{1,2}}$. Therefore, we have
  \arxiv{\[\nrm*{\frac{d^\pi_h}{\mu}}_{\infty}\le {\frac{{1}}{\frac{3}{8}\cdot \frac{1}{3}\cdot \frac{1}{2} }= 16}.\]}
  \colt{$\nrm*{\frac{d^\pi_h}{\mu}}_{\infty}\le {\frac{{1}}{\frac{3}{8}\cdot \frac{1}{3}\cdot \frac{1}{2} }= 16}$.}
\end{enumerate}
We conclude that the construction satisfies concentrability with $\Conc={16}$.

\subsection{Proof of Theorem \ref*{thm:main}}

Having specified the lower bound construction, we proceed to prove \cref{thm:main}. For any MDP $M\in\cM$, we know from \cref{eq:f1} that the optimal policy $\pi_M^\star$ has
\[
\pi_M^\star(\init)=\begin{cases}
1,&\text{if }M\in\cM_1\\
2,&\text{if }M\in\cM_2
\end{cases},
\]
and that $Q_M^\star$ has a constant gap in value between the optimal and suboptimal actions in the initial state $\init$:
\begin{equation}\label{eq:gap}
Q_M^\star(\init,\pi^\star_M(\init))-Q_M^\star(\init,a)\ge\frac{1}{8}\frac{\gamma^2}{1-\gamma}, ~~~\forall a\ne\pi^\star_M(\init).
\end{equation}
This implies that any policy $\pi:\cS\rightarrow\Delta(\cA)$ with high
value must choose action 1 in
$\init$ with high probability when $M\in\cM_1$, and must choose action 2 in $\init$ with high probability when $M\in\cM_2$. As a result, any offline RL algorithm
with non-trivial performance must reliably distinguish between
$M\in\cM_1$ and $M\in\cM_2$ using the offline dataset $D_n$. In what
follows we make this intuition precise.

For each $M\in\cM$, let $\bbP^{\sM}_{n}$ denote the law of the offline
dataset $D_n$ when the underlying MDP is $M$, and let $\bbE^{\sM}_{n}$
be the associated expectation operator. We formalize the idea of
distinguishing between $M\in\cM_1$ and $M\in\cM_2$ using
\cref{lm:reduction}, which reduces the task of proving a policy
learning lower bound to the task of upper bounding the total variation distance between two \emph{mixture distributions} $\bbP_{n}^1:=\frac{1}{\abs{\cM_1}}\sum_{M\in\cM_1}\bbP^{\sM}_{n}$ and $\bbP_{n}^2:=\frac{1}{\abs{\cM_2}}\sum_{M\in\cM_2}\bbP^{\sM}_{n}$.

\begin{lemma}\label{lm:reduction}Let $\gamma\in(0,1)$ be fixed. For any offline RL algorithm which takes $D_n=\crl{(s_i,a_i,r_i,s'_i)}_{i=1}^n$ as input and returns a stochastic policy $\wh{\pi}_{D_n}:\cS\rightarrow\Delta(\cA)$, we have
\[\sup_{M\in\cM}\crl*{J_{M}(\pi_M^\star)-\bbE^{\sM}_{n}\brk*{J_{M}(\wh{\pi}_{D_n})}}\ge\frac{\gamma^2}{16(1-\gamma)}\prn*{1-\Dtv{\bbP_{n}^1}{\bbP_{n}^2}}.\]
\end{lemma}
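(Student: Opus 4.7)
The plan is to use a standard reduction from estimation to testing (Le Cam's two-point method), leveraging the fact that in our construction the only decision that affects the policy value is the action chosen at the initial state $\init$. First I would observe that since actions have identical effects at every state $s \neq \init$, for any policy $\pi$ and any $M \in \cM$, we have $Q_M^\pi(\init,a) = Q_M^\star(\init,a)$ for both $a \in \cA$, and hence
\[
J_M(\pi_M^\star) - J_M(\pi) = Q_M^\star(\init,\pi_M^\star(\init)) - \En_{a\sim\pi(\init)}\brk*{Q_M^\star(\init,a)}.
\]
Combined with the gap bound \cref{eq:gap}, this implies that for $M\in\cM_1$ (where $\pi_M^\star(\init)=1$) the suboptimality is at least $\pi(2\mid\init)\cdot\tfrac{\gamma^2}{8(1-\gamma)}$, while for $M\in\cM_2$ (where $\pi_M^\star(\init)=2$) it is at least $\pi(1\mid\init)\cdot\tfrac{\gamma^2}{8(1-\gamma)}$.

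Next I would define the test statistic $\psi(D_n) := \wh{\pi}_{D_n}(1\mid\init) \in [0,1]$ induced by the algorithm. Taking expectations over $D_n\sim\bbP_n^{\sM}$ and lower-bounding the supremum by an average over each sub-family, I would write
\[
\sup_{M\in\cM}\crl*{J_M(\pi_M^\star)-\bbE_n^{\sM}\brk*{J_M(\wh\pi_{D_n})}}
\ge \frac{\gamma^2}{8(1-\gamma)}\cdot\frac{1}{2}\brk*{\bbE_{\bbP_n^1}[1-\psi]+\bbE_{\bbP_n^2}[\psi]},
\]
using the fact that $\max(x,y)\ge\tfrac12(x+y)$ together with linearity of expectation and the definitions of $\bbP_n^1$ and $\bbP_n^2$ as uniform mixtures over $\cM_1$ and $\cM_2$.

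Finally, I would apply the standard Le Cam inequality: for any measurable $\psi:\cdot\to[0,1]$,
\[
\bbE_{\bbP_n^1}[1-\psi]+\bbE_{\bbP_n^2}[\psi] \ge 1-\Dtv{\bbP_n^1}{\bbP_n^2},
\]
which follows from the variational characterization $\Dtv{\bbP}{\bbQ}=\sup_{\psi\in[0,1]}\abs{\bbE_{\bbP}[\psi]-\bbE_{\bbQ}[\psi]}$. Substituting this bound and simplifying the constants yields the claim. No step should be genuinely hard: the main conceptual ingredient is the observation that policy value depends only on the action at $\init$, and the rest is bookkeeping plus a textbook two-point lower bound. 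If anything, care is needed to confirm that the randomization in $\wh\pi_{D_n}$ does not affect the reduction, which is handled automatically once one defines $\psi$ as the (data-dependent) probability that the learned policy selects action $1$ at the initial state.
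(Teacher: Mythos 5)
Your proposal is correct and matches the paper's proof essentially step for step: the paper also reduces the suboptimality at $\init$ to the probability of choosing the wrong action via the gap bound \cref{eq:gap}, lower-bounds the supremum by the average over $\cM_1$ and $\cM_2$, and applies the same Le Cam inequality (stated there as $\bbP(E)+\bbQ(E^c)\ge 1-\Dtv{\bbP}{\bbQ}$, which is equivalent to your $[0,1]$-valued test-statistic formulation once the policy's internal randomization is folded in). No gaps.
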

\pref{lm:reduction} implies that if the difference between the average dataset generated by all $M\in\cM_1$ and that generated by all $M\in\cM_2$ is sufficiently small, no algorithm can reliably distinguish $M\in\cM_1$ and $M\in\cM_2$ based $D_n$, and hence must have poor performance on some instance. See \cref{sec:reduction} for a proof.%

We conclude by bounding $\Dtv{\bbP_{n}^1}{\bbP_{n}^2}$. Since directly calculating the
total variation distance is difficult, we proceed in two steps. We first design an auxiliary \emph{reference measure} $\bbP_{n}^0$, and then bound $\Dtv{\bbP_{n}^1}{\bbP_{n}^0}$ and
$\Dtv{\bbP_{n}^2}{\bbP_{n}^0}$ separately. For the latter step, we move from total
variation distance to \emph{\chisquared} and bound
$\Dchis{\bbP_{n}^1}{\bbP_{n}^0}$
(resp. $\Dchis{\bbP_{n}^2}{\bbP_{n}^0}$) using a mix of
combinatorial arguments and concentration inequalities. This
constitutes the most technical portion of the proof, and formalizes the intuition about hardness of estimation under planted subset structure described in \pref{sec:intuition}. Our final bound on the total variation distance (proven in
\cref{sec:tv}) is as follows.\loose
\begin{lemma}\label{lm:tv}
  For all $n\le \sqrt[3]{(S-5)}/20$, we have
  \arxiv{\[\Dtv{\bbP_{n}^1}{\bbP_{n}^2}\le 1/2.\]}
  \colt{$\Dtv{\bbP_{n}^1}{\bbP_{n}^2}\le 3/4$.}
\end{lemma}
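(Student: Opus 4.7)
The plan is to follow the two-step template outlined just before the lemma. The key design choice is the reference measure $\bbP_n^0$: I would take it to be the product distribution under which each sample draws $(s,a)\sim\mu$ and reward $r$ as under any $M\in\cM$ (well-defined on $\supp(\mu)$ since rewards do not vary there), but draws $s'$ from the \emph{model-averaged} next-state distribution. Concretely, for $(s,a)=(\init,2)$ take $s'\sim\unif(\cS^1)$ (the average of $\unif(I_M)$ over $M\in\cM_j$); for $s\in\cS^1$ take $s'\in\{X,Y,Z\}$ with probabilities $(1/8,1/2,3/8)$ (which by the third bullet in~\pref{sec:parameter} is the common marginal under both sub-families); and elsewhere let $P^0=P^M$ since the dynamics do not vary. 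By construction $\bbE_{M\sim\unif(\cM_j)}[\bbP^M]=\bbP^0$. The triangle inequality combined with the Cauchy--Schwarz bound $\Dtv{\bbP}{\bbQ}\le\tfrac12\sqrt{\Dchis{\bbP}{\bbQ}}$ then reduces everything to controlling $\Dchis{\bbP_n^j}{\bbP_n^0}$ for $j=1,2$.

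Writing $\bbP_n^j$ as a mixture over the planted subset $I_M$ and applying the standard Ingster reduction (tensorizing over the $n$ i.i.d.\ samples) gives
\begin{align*}
1+\Dchis{\bbP_n^j}{\bbP_n^0}=\bbE_{M,M'\sim\unif(\cM_j)}\brk*{\prn*{1+T_{M,M'}}^n},
\end{align*}
where $T_{M,M'}=\bbE_{\bbP^0}\brk*{\bbP^M(z)\bbP^{M'}(z)/\bbP^0(z)^2}-1$. Because rewards agree with the reference and $P^M=P^0$ outside of $(\init,2)$ and $\cS^1\times\cA$, only these state-action groups contribute. A direct case analysis over whether a given $s\in\cS^1$ is planted under $M$ and/or $M'$, combined with the $(\init,2)$ contribution, shows that $T_{M,M'}$ depends on $(M,M')$ only through $n_\cap:=|I_M\cap I_{M'}|$ and is in fact \emph{affine} in it: $T_{M,M'}=\kappa_j\prn*{n_\cap/S_1-\theta_j^2}$ with an explicit $\kappa_j=\Theta(1)$. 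Since $\bbE[n_\cap]=\theta_j^2 S_1$ under independent uniform draws of $I_M,I_{M'}$, we get $\bbE_{M,M'}[T_{M,M'}]=0$, mirroring the marginal-matching property of $\bbP^0$.

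At this point the task is to bound $\bbE[(1+T_{M,M'})^n]$. Under independent uniform draws of size-$\theta_j S_1$ subsets of $\cS^1$, $n_\cap$ is hypergeometric, and Hoeffding's inequality for sampling without replacement makes it sub-Gaussian with variance proxy $O(S_1)$; hence $T_{M,M'}$ is a centered, bounded random variable that is sub-Gaussian with proxy $O(1/S_1)$. Using $(1+x)^n\le e^{nx}$ together with the sub-Gaussian MGF,
\begin{align*}
1+\Dchis{\bbP_n^j}{\bbP_n^0}\le\bbE\brk*{e^{nT_{M,M'}}}\le\exp\prn*{O\prn*{n^2/S_1}}.
\end{align*}
For $n\le\sqrt[3]{S-5}/20$ we have $n^2/S_1=O(S_1^{-1/3})$, so $\Dchis{\bbP_n^j}{\bbP_n^0}$ is at most a sufficiently small absolute constant, and applying Cauchy--Schwarz followed by the triangle inequality yields $\Dtv{\bbP_n^1}{\bbP_n^2}\le 1/2$.

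The main technical hurdle is the Ingster-kernel calculation: one must tabulate $P^M(s'\mid s,a)P^{M'}(s'\mid s,a)/P^0(s'\mid s,a)$ across the four planted/unplanted configurations for $s\in\cS^1$, combine with the $(\init,2)$ term, and verify that the pieces recombine into an affine function of $n_\cap$ alone. The specific numerical values $(\theta_j,\alpha_j,\beta_j)$ from~\pref{sec:parameter} are chosen precisely so that (i) the $s'$-marginals under $\bbP^M$ match $\bbP^0$ (forcing $\bbE_{M,M'}[T_{M,M'}]=0$, rather than $\Theta(1)$), and (ii) the kernel $T_{M,M'}$ is linear in $n_\cap$ (so that sub-Gaussian concentration of the hypergeometric transfers cleanly to $(1+T)^n$ via the MGF). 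Without both properties the chi-squared would blow up at a much smaller sample size and a considerably more delicate moment analysis would be required.
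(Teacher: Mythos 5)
Your proposal is correct and follows the same architecture as the paper's proof: the same model-averaged reference measure $\bbP_n^0$, the same triangle-inequality-plus-$\chi^2$ reduction, the same Ingster tensorization over the $n$ i.i.d.\ samples, and the same key structural fact that the per-sample kernel $T_{M,M'}$ is affine in the overlap $\abs{\psub_M\cap\psub_{M'}}$ with zero mean (the paper's \pref{lm:ratio1,lm:ratio2} carry out exactly the case analysis you describe, and the marginal-matching constraint \pref{eq:marginal} is what makes the constant term vanish). The one place you genuinely diverge is the final moment computation. The paper bounds $\bbE\brk{(1+T)^n}$ by truncating the hypergeometric overlap at $t\le(\theta+\eps)\theta S_1$, invoking monotonicity of $t\mapsto g_{\theta,\alpha,\beta}(t;n)$ together with Hoeffding's tail bound (\pref{lm:monotone,lm:hypergeo}), and then optimizing over $\eps$; you instead pass directly to $(1+T)^n\le e^{nT}$ and control the MGF via the fact that sampling without replacement is dominated in convex order by sampling with replacement, so $T$ is centered sub-Gaussian with variance proxy $O(1/S_1)$ and $\bbE\brk*{e^{nT}}\le\exp\prn*{O(n^2/S_1)}$. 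Both routes give $\Dchis{\bbP_n^j}{\bbP_n^0}\le 1/4$ when $n\le\sqrt[3]{S-5}/20$; yours is arguably cleaner and avoids the explicit $\eps$-optimization, while the paper's truncation argument needs only a one-sided tail bound (rather than convex-order domination of the full MGF) and makes the absolute constants easier to track. One small point to verify when formalizing your version: the step $(1+T)^n\le e^{nT}$ requires $1+T\ge 0$, which does hold here because $1+T$ is an expectation of a ratio of probabilities (the paper checks the analogous nonnegativity inside \pref{lm:monotone}).
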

\cref{thm:main} immediately follows by combining \cref{lm:reduction} and \cref{lm:tv}. %
\qed

\subsection{Discussion}

Having proven \pref{thm:main}, we briefly interpret the result and discuss some
additional consequences. We refer to \pref{sec:extensions} for extensions and further discussion.%

\paragraph{Separation between online and offline reinforcement
  learning}
In the online reinforcement learning setting, the learner can
execute any policy in the underlying MDP and observe the resulting
trajectory. Our results show that in general, the separation between
the sample complexity of online RL and offline RL can be arbitrarily
large, even when concentrability is satisfied. To see this, recall
that in the online RL setting, we can evaluate any fixed policy to
precision $\veps$ using $\poly((1-\gamma)^{-1})\cdot\veps^{-2}$ trajectories via
Monte-Carlo rollouts. Since the class $\cM$ we construct essentially only has two
possible choices for the optimal policy and has suboptimality gap
$\frac{\gamma^2}{1-\gamma}$, we can learn the optimal policy in the
online setting using $\poly((1-\gamma)^{-1})$ trajectories, with no dependence on the number of states. On the
other hand, \pref{thm:main} shows that the sample complexity of
offline RL for this family can be made arbitrarily large.

\paragraph{Linear function approximation}
The observation above is particularly salient in the context of linear
function approximation, where
$\cF=\crl*{(s,a)\mapsto\tri*{\phi(s,a),\theta}:\theta\in\bbR^{d}}$ for
a known feature map $\phi(s,a)$. Our lower bound construction {for \cref{thm:main}} can be viewed as a
special case of the linear function approximation setup with
$d=2$ by choosing $\phi(s,a)=(f_1(s,a), f_2(s,a))$. Consequently, our
results show that the separation between the complexity of offline RL and online RL with
linearly realizable function approximation can be arbitrarily large, even when the dimension
is constant. This strengthens one of the results of
\cite{zanette2021exponential}, which provides a linearly realizable
construction in which the separation between online and offline RL is
exponential with respect to dimension.\loose

\arxiv{
\oldparagraph{Why aren't stronger coverage or representation conditions satisfied?} While our construction satisfies
concentrability and realizability, it fails to satisfy stronger
coverage and representation conditions for which sample-efficient
upper bounds are known. This is to be expected,\arxiv{ (or else we would have
a contradiction!)} but understanding why is instructive. Here we
discuss connections to some notable conditions.

\noindent\emph{Pushforward concentrability.} The stronger notion of
concentrability that $P(s'\mid{}s,a)/\mu(s')\leq{}C$ for all
$(s,a,s')$, which is used in \cite{xie2021batch}, fails to hold
because the state $Z$ is not covered by $\mu$. This presents no issue
for standard concentrability because $Z$ is not reachable starting
from $\init$.

\noindent\emph{Completeness.}
Bellman completeness requires that the value function class $\cF$
has $\cT_{M}\cF\subseteq\cF$ for all $M\in\cM$, where $\cT_{M}$ is the Bellman operator for $M$. We show in
\pref{eq:f1} that the set of optimal Q-value functions
$\crl*{Q_{M}^{\star}}_{M\in\cM}$ is small, but completeness requires that the class remains closed even when we mix and match value functions and Bellman operators from $\cM_1$ and $\cM_2$, which results in an exponentially large class in our construction. 
\arxiv{To see why, first note that by Bellman optimality, we must have
$\crl*{Q_{M}^{\star}}_{M\in\cM}\subseteq\cF$ if $\cF$ is
complete. We therefore also require
$\cT_{M'}Q^{\star}_M\in\cF$ for $M\in\cM_1$ and $M'\in\cM_2$. Unlike
the optimal Q-functions, which are constant across $\cS^1$, 
the value of $\brk*{\cT_{M'}Q^{\star}_M}(s,\initac)$ for
$s\in\cS^1$ depends on whether $s\in{}\psub$ or $s\in\cS^1\setminus{}\psub$,
where $\psub$ is the collection of planted states for $M'$.\footnote{Recall that $f_1$ is the optimal Q-function for any $M \in \cM_1$ and consider $\cT_{M'}f_1$ where $M' \in \cM_2$ has planted set $\psub$. 
For $s \in \psub$, we have $[\cT_{M'}f_1](s,\initac) = (1/2\cdot 1 + 1/2 \cdot 0)\gamma = \gamma/2$ while for $s \in \cS^1\setminus{}\psub$, we have $[\cT_{M'}f_1](s,\initac) = (1/2\cdot (1/3) + 1/2 \cdot 0)\gamma = \gamma/6$.}
As a result, there are ${S_1\choose\abs{\psub}}$ possible values for the
Bellman backup, which means that the cardinality of $\cF$ must be exponential in $S$.}

}

\colt{
  \paragraph{Finite horizon}
It should be clear at this point that the lower
bound construction in \pref{thm:main} extends to the finite-horizon setting
with $H=3$ by simply removing the self-loops from the terminal
states and adjusting the optimal Q-value functions accordingly.
}

\arxiv{
\subsection{Extensions}
\label{sec:extensions}

\arxiv{\pref{thm:main} presents the simplest variant of our lower
bound for clarity of exposition. In what follows we sketch some
straightforward extensions.}
\begin{itemize}

\arxiv{\item \emph{Policy evaluation.}}
\colt{\paragraph{Policy evaluation}}
  Our lower bound immediately extends
  from policy optimization to policy evaluation. Indeed, letting
  $\pistar_1$ and $\pistar_2$ denote the optimal policies for $\cM_1$
  and $\cM_2$ respectively, we have
  $\abs{J_M(\pistar_1)-J_M(\pistar_2)}\propto\frac{\gamma^2}{1-\gamma}$
  for all $M\in\cM$, and we know that $J_{M}(\pistar_1)$ is constant across all $M\in\cM$. {It follows that any algorithm which evaluates policy $\pi_2^\star$ to
  precision $\veps\cdot{}\frac{\gamma^2}{1-\gamma}$ with probability at
  least $1-\delta$ for sufficiently small 
  numerical constants $\veps,\delta>0$ can be used to select the optimal
  policy with probability $(1-\delta)$, and thus guarantee
  $J(\pi^\star) - \En\brk*{J(\hat{\pi})} \lesssim
  \delta\frac{\gamma^2}{1-\gamma}$. Hence, such an algorithm
  must use $n=\Omega(\abs{\cS}^{1/3})$ samples by our policy optimization lower bound}.

  To formally cast this setup in the policy evaluation setting, we
  take $\Pi=\crl*{\pistar_2}$ as the class of policies to be
  evaluated, and we require a value function class $\cF$ such that
  $Q^{\pi}_{M}\in\cM$ for all $\pi\in\Pi$, $M\in\cM$. By \cref{prop:value_calculation}, it suffices to
    select $\cF=\crl[\big]{f_1,f_2}$.

  \arxiv{\item \emph{Learning an $\veps$-suboptimal policy.}}
  \colt{\paragraph{Learning an $\veps$-suboptimal policy}}
  \pref{thm:main}
  shows that for any $\gamma\in(1/2,1)$, $n\approxgeq{}S^{1/3}$ samples are required to learn a
  $\gammaconst$-optimal policy. We can extend the construction to show
  that more generally, for any $\veps\in(0,1)$, $n\approxgeq{}\frac{S^{1/3}}{\veps}$
  samples are required to learn an $\veps\cdot\gammaconst$-optimal
  policy. We modify the MDP family $M_{\alpha,\beta,w,\psub}$ by adding
  a single dummy state $\term$ with a self-loop and zero reward. The
  initial state distribution is changed so that $d_0(\term)=1-\veps$
  and $d_1(\init)=\veps$. That is, with probability $1-\veps$,
  the agent begins in $\term$ and stays there forever, collecting no
  reward, and otherwise the agent begins at $\init$ and proceeds as in the
  original construction. Analogously, we replace the original data
  distribution $\mu$ with
  $\mu'\ldef{}(1-\veps)\delta_{\term}+\veps\mu$, where
  $\delta_{\term}$ is a point mass on $\term$. This preserves the
  concentrability bound $\Conc\leq{}16$.
  This modification rescales the optimal value functions, and the conclusion of \pref{lm:reduction} is replaced by
  \[
\sup_{M\in\cM}\crl*{J_{M}(\pi_M^\star)-\bbE^{M}_{n}\brk*{J_{M}(\wh{\pi}_{D_n})}}\ge\veps\cdot{}\frac{\gamma^2}{16(1-\gamma)}\prn*{1-\Dtv{\bbP_{n}^1}{\bbP_{n}^2}}.
\]
On the other hand, since samples from the state $\term$ provide no
information about the underlying instance, the effective number of
samples is reduced to $\veps{}n$. One can make this intuition
precise and prove that $\Dtv{\bbP_{n}^1}{\bbP_{n}^2}\le 3/4$ whenever
$\veps{}n\leq{}c\cdot{}S^{1/3}$ for a numerical constant
$c$. Combining this with the previous bound yields the result.

\arxiv{\item \emph{Linear function approximation.} } As
  discussed above,~\pref{thm:main} can be viewed as a special case of
  linear function approximation with $d=2$ and $\phi(s,a) = (f_1(s,a),
  f_2(s,a))$. Compared with recent lower bounds in the linear
  setting~\citep{wang2020statistical,zanette2021exponential}, this
  result is significantly stronger in that (a) it considers a stronger
  coverage condition, (b) holds with constant dimension and constant
  effective horizon, and (c) scales with the number of states, which
  can be arbitrarily large.
\end{itemize}

\arxiv{
Lastly, it should be clear at this point that our lower
bound construction extends to the finite-horizon setting
with $H=3$ by simply removing the self-loops from the terminal
states. The only difference is that the optimal Q-value functions require a new
calculation since rewards are no longer discounted.
}

}

\section{Proof Overview for~\pref{thm:admissible}}
\label{sec:admissible_overview}

In this section we present a high-level overview of the construction
and proof for \pref{thm:admissible}.  We defer the complete proof, as
well as additional discussion, to~\pref{app:admissible}. The proof is
based on a extension of the construction used in~\pref{thm:main}. We
still use the concept of planted and unplanted states, but since the
data collection distribution must be admissible, we cannot rely on
strong \overcoverage to create spurious correlations. In particular, a
na\"{i}ve adaptation would require that the unplanted states are
reachable with sufficient probability, which would necessitate that
state $Z$ is supported by $\mu$ with sufficient probability. The
resulting construction would not lead to a meaningful lower bound, as
the reward information from $Z$ can be used to learn the optimal
policy.

To avoid this issue, we modify the construction in \pref{thm:main} to replace $Z$ with
another ``layer'' of states with planted subset structure (see \cref{sec:construction+} and \cref{fn:layer} for the precise definition of a layer). By repeating this
several times, we obtain a family of MDPs with $L > 1$ layers of
planted subset structure, connected in the manner displayed
in~\pref{fig:adm-illustration-3} (see \cref{sec:construction+} for the details). Specifically, taking action $2$ (in
blue) from the initial state $\init$, the $l^{\rm th}$ layer is selected with
probability $\propto 1/2^l$, and we transit uniformly to the states in
the $l^{\textrm{th}}$ layer. In each layer, the
planted states behave similarly to the construction for \pref{thm:main},
transitioning to terminal states $X$ and $Y$ with specific probabilities that are
chosen such that the marginal distribution provides no
information. However, except for at the last layer, the unplanted
states do not transition directly to the terminal state $Z$, but rather to the planted
states of the next layer. Overall, $Z$ can be reached with only $O(1/2^L)$ probability.

\begin{figure}[tbp]
    \centering
\newcommand{\ellipsestripe}[4]{%
  \node[cloud,cloud puffs=8,draw,minimum width=5pt, minimum height=10pt,aspect=0.5,pattern=north west lines,fill opacity=0.4, text opacity=1.0] () at (#1,#2) {#3};
  \coordinate[] (#4in) at (#1-17pt,#2) {};
  \coordinate[] (#4out) at (#1+17pt,#2) {};
  \coordinate[] (#4tr) at (#1+13pt,#2+13pt) {};
  }
\newcommand{\ellipsegray}[4]{%
  \node[cloud,cloud puffs=8,draw,minimum width=5pt, minimum height=10pt,aspect=0.5,fill=gray, fill opacity=0.4,text=black,text opacity=1.0] () at (#1,#2) {#3};
  \coordinate[] (#4in) at (#1-17pt,#2) {};
  \coordinate[] (#4out) at (#1+17pt,#2) {};
  \coordinate[] (#4tr) at (#1+13pt,#2+13pt) {};
  }
\newcommand{\xnode}[4]{
  \draw (#1,#2) circle (10pt) node {#3};
  \coordinate[] (#4in) at (#1-10pt, #2);
  }
\newcommand{\ynode}[3]{
  \draw (#1,#2) circle (10pt) node {$Y$};
  \coordinate[] (#3in) at (#1-10pt, #2);
}
\newcommand{\znode}[3]{
  \draw (#1,#2) circle (10pt) node {$Z$};
  \coordinate[] (#3in) at (#1-10pt, #2);
  \coordinate[] (#3tl) at (#1-6.5pt, #2+6.5pt);
}

\begin{tikzpicture}
\draw[draw=black] (-2,0) circle (10pt) node {$W$};
\coordinate[] (wr) at (-2.24,0.24) {};
\draw[draw=black] (-4,1.25) circle (10pt) node {$\mathfrak{s}$};
\coordinate[] (sl) at (-3.67,1.25) {};
\coordinate[] (s0) at (-3.67,1.25) {};
\coordinate[] (s01) at (-1.1,1.25) {};
\coordinate[] (s02) at (1.9,1.25) {};
\coordinate[] (s03) at (4.9,1.25) {};
\coordinate[] (s04) at (6.5,1.25) {};

\draw [draw=red, -{Latex[length=2mm,color=red]},line width=2] (sl) to (wr);

\ellipsegray{0cm}{2.5cm}{$I^1$}{p1}
\ellipsestripe{0cm}{0cm}{$\bar{I}^1$}{u1}
\xnode{1.5cm}{3.75cm}{$X$}{x1}
\ynode{1.5cm}{3.0cm}{y1t}
\ynode{1.5cm}{-0.5cm}{y1b}
\draw [draw=blue, -, line width=2] (s0) edge (s01);
\draw [draw=blue, -{Latex[length=2mm,color=blue]}, line width=2] (s01) to [out=90,in=180] (p1in);
\draw [draw=blue, -{Latex[length=2mm,color=blue]}, line width=2] (s01) to [out=270,in=180] (u1in);
\draw [draw=black, -{Latex[length=2mm,color=black]}] (p1tr) to  (x1in);
\node at (1.5cm-0.9cm,3.7) {\scriptsize $\gamma^{2}\alpha$};
\draw [draw=black, -{Latex[length=2mm,color=black]}] (p1tr) to  (y1tin);
\draw [draw=black, -{Latex[length=2mm,color=black]}] (u1out) to  (y1bin);

\ellipsegray{3cm}{2.5cm}{$I^2$}{p2}
\ellipsestripe{3cm}{0cm}{$\bar{I}^2$}{u2}
\xnode{4.5cm}{3.75cm}{$X$}{x2}
\ynode{4.5cm}{3cm}{y2t}
\ynode{4.5cm}{-0.5cm}{y2b}
\draw [draw=blue, -, line width=1.5] (s01) edge (s02);
\draw [draw=blue, -{Latex[length=2mm,color=blue]}, line width=1.5] (s02) to [out=90,in=180] (p2in);
\draw [draw=blue, -{Latex[length=2mm,color=blue]}, line width=1.5] (s02) to [out=270,in=180] (u2in);
\draw [draw=black, -{Latex[length=2mm,color=black]}] (u1out) -- (p2in);
\node at (0.76cm,0.88cm) {\scriptsize $1-\alpha$};
\draw [draw=black, -{Latex[length=2mm,color=black]}] (p2tr) to  (x2in);
\node at (4.5cm-0.9cm,3.7) {\scriptsize $\frac{\gamma\alpha}{1-\alpha}$};
\draw [draw=black, -{Latex[length=2mm,color=black]}] (p2tr) to  (y2tin);
\draw [draw=black, -{Latex[length=2mm,color=black]}] (u2out) to  (y2bin);

\ellipsegray{6cm}{2.5cm}{$I^3$}{p3}
\ellipsestripe{6cm}{0cm}{$\bar{I}^3$}{u3}
\xnode{7.5cm}{3.75cm}{$X$}{x3}
\ynode{7.5cm}{3cm}{y3t}
\ynode{8cm}{-0.5cm}{y3b}
\znode{8cm}{0.5cm}{z}

\draw [draw=blue, -, line width=1] (s02) edge (s03);
\draw [draw=blue, -{Latex[length=2mm,color=blue]}, line width=1] (s03) to [out=90,in=180] (p3in);
\draw [draw=blue, -{Latex[length=2mm,color=blue]}, line width=1] (s03) to [out=270,in=180] (u3in);
\draw [draw=black, -{Latex[length=2mm,color=black]}] (u2out) -- (p3in);
\node at (3.75cm,0.81cm) {\scriptsize $\frac{1-2\alpha}{1-\alpha}$};
\draw [draw=black, -{Latex[length=2mm,color=black]}] (p3tr) to  (x3in);
\node at (7.5cm-0.9cm,3.7) {\scriptsize $\frac{\alpha}{1-2\alpha}$};
\draw [draw=black, -{Latex[length=2mm,color=black]}] (p3tr) to  (y3tin);
\draw [draw=black, -{Latex[length=2mm,color=black]}] (u3out) to  (y3bin);
\draw [draw=black, -{Latex[length=2mm,color=black]}] (u3out) to  (zin);
\node at (7cm,0.6cm) {\scriptsize $\frac{1-3\alpha}{1-2\alpha}$};

\draw [draw=blue, line width=1] (s03) to [out=0,in=180] (s04);
\draw [draw=blue, -{Latex[length=2mm,color=blue]}, line width=1] (s04) to [out=0,in=150] (ztl);

\end{tikzpicture}

    \caption{
      Illustration of the MDP family used to prove~\pref{thm:admissible}, with $L = 3$ layers of the planted subset structure
      (note that in general, we take $L>3$). The rewards of the states
      $W$, $X$, $Y$ and $Z$ are $w$, $1$, $0$ and
      $\alpha/(1-L\alpha)$, respectively, where $w$ and $\alpha$ are
      parameters of the MDP family.}
    \label{fig:adm-illustration-3}
\end{figure}

Similar to our previous construction, the new multi-layer
  construction ensures that every MDP in the family differs only in terms of the
  reward of $Z$ and the transition probabilities for planted and unplanted states. Moreover, while
  the state $Z$ is no longer unreachable, we know that since all policies only
  reach $Z$ with exponentially small (in $L$) probability, we can
  satisfy concentrability with a data collection distribution that
  places exponentially small mass on $Z$ (which---if it appeared in the
  dataset---would reveal the optimal policy). As a result, we have that with high
  probability, all reward observations in the dataset provide no information.
Intuitively, this allows us to apply an inductive argument to show that one
  cannot learn the value function.\footnote{The inductive argument is discussed here mainly for providing intuition. The proof in \cref{app:admissible} is more direct and does not involve a formal inductive argument.} As the base case, when the reward
  for $Z$ is unobserved (which happens with high probability), the
  $L^{\textrm{th}}$ layer resembles an instance of the construction
  used to prove~\pref{thm:main}. Then, going backwards, if one cannot estimate the
  $(l+1)^{\mathrm{st}}$ layer, we can view the $l^{\mathrm{th}}$ layer
  as an instance of the previous construction to show that one cannot
  estimate the value of this layer as well. This induction relies on
  the delicate design of the data collection distribution $\mu$, which
  is supported on both planted and unplanted states, but
  nevertheless exhibits a weak notion of \overcoverage resulting in
  spurious correlations. The argument also requires gradually
  decreasing the difference in the value function (between the two MDP
  families) from the $L^{\textrm{th}}$ layer to the first layer;
  however, we can ensure that the rate of decrease is very slow, which
  leads to statistical hardness.

On a technical level, after constructing the MDP family, many of the
calculations are similar to those used to
prove~\pref{thm:main}. Analogously
to~\pref{lm:reduction}, we lower bound the suboptimality of any
algorithm by the total variation distance between two mixture
distributions. Then we bound this TV distance by constructing
auxiliary reference measures and passing to the $\chi^2$-divergence,
analogously to~\pref{lm:tv}. Finally, since we rely on a similar
planted subset structure, the $\chi^2$-divergence calculation shares
many technical elements with the proof of~\pref{lm:tv}.

\section{Conclusion}
\label{sec:conclusion}
We have proven that concentrability and realizability alone are not
sufficient for sample-efficient offline reinforcement learning,
resolving the conjecture of \cite{chen2019information}. Our results
establish that sample-efficient offline RL requires coverage or
representation conditions beyond what is required for
supervised learning and show that \overcoverage is a fundamental barrier for offline RL. \loose

For future research, an immediate question is whether it is possible to
circumvent our lower bound by considering trajectory-based data rather
than $(s,a,r,s')$ tuples. More broadly, while our results elucidate the role of
concentrability and realizability, it remains to obtain a sharp,
distribution-dependent characterization for the sample complexity of offline RL with general
function approximation. %
Such a characterization would need to recover our result and previous results---both positive
and negative---as special cases.

\arxiv{
\subsection*{Acknowledgements}
We thank Nan Jiang for insightful discussions and helpful feedback on a draft of the manuscript.
}

\bibliography{refs}

\clearpage

\renewcommand{\contentsname}{Contents of Appendix}
\tableofcontents
\addtocontents{toc}{\protect\setcounter{tocdepth}{2}}

\clearpage

\appendix

\colt{
\section{Additional Related Work}
\label{sec:related}

\arxiv{We close this section with a detailed discussion of some of the most
  relevant related work.}
\colt{In this section we discuss additional related work not already covered.}
\paragraph{Lower bounds}
While algorithm-specific counterexamples for offline reinforcement
learning algorithms have a long history
\citep{gordon1995stable,tsitsiklis1996feature,tsitsiklis1997analysis,wang2021instabilities},
information-theoretic lower bounds are a more recent subject of
investigation. \citet{wang2020statistical}
(see also \citet{amortila2020variant})
consider the setting where $\cF$ is linear (i.e.,
$\Qstar(s,a)=\tri*{\phi(s,a),\theta}$, where $\phi(s,a)\in\bbR^{d}$ is a
known feature map). They consider a weaker coverage condition tailored
to the linear setting, which asserts that $\eigmin\prn*{
  \En_{(s,a)\sim{}\mu}\brk*{\phi(s,a)\phi(s,a)^{\trn}}
  }\geq{}\tfrac{1}{d}$, and they show that this condition and
  realizability alone are not strong enough for sample-efficient offline RL. The
  feature coverage condition is strictly weaker than
  concentrability, so this does not suffice to resolve the conjecture
  of \citet{chen2019information}. 
  Instead, the conceptual takeaway
  is that the feature coverage condition can lead to
  \emph{under-coverage} and may not be the right assumption for offline
  RL. This point is further highlighted by \citet{amortila2020variant}
  who show that in the infinite-horizon setting, the feature coverage
  condition can lead to non-identifiability in MDPs with only two states, meaning one
  cannot learn an optimal policy even with
  infinitely many samples. Concentrability places stronger restrictions on the
  \datadist and underlying dynamics and always implies identifiability when the state and
  action space are finite. Establishing impossibility of sample-efficient learning under concentrability and realizability requires very new ideas (which we provide in this paper,  via the notion of \emph{\overcoverage}).
 
 The results of \citet{wang2020statistical} and \citet{amortila2020variant} are extended by \citet{zanette2021exponential}, who provides a slightly more general
  lower bound for linear realizability. The results of \citet{zanette2021exponential} \emph{cannot} resolve the conjecture of \citet{chen2019information} either, because for the family of MDPs  constructed therein, no data distribution can satisfy concentrability, which means that the failure of algorithms can still be attributed to the failure of concentrability rather than the hardness under concentrability. There is also a parallel line of work providing  lower bounds for \emph{online} reinforcement learning with linear realizability  \citep{du2019good,weisz2021exponential,wang2021exponential}, which are based on very different constructions and techniques.

Compared to the offline RL lower bounds above
  \citep{wang2020statistical,amortila2020variant,zanette2021exponential}, our lower bounds
  have a less geometric, more information-theoretic flavor, and share
  more in common with lower bounds for sparsity and support testing in
  statistical estimation
  \citep{paninski2008coincidence,verzelen2010goodness,verzelen2018adaptive,
    canonne2020survey}. While previous work considers a relatively small state
  space but large horizon and feature dimension, we grow the state space, leading to polynomial
  dependence on $S$ in our lower bounds; the horizon is somewhat immaterial in our construction.

Another interesting feature is that while previous lower bounds
\citep{wang2020statistical,amortila2020variant,zanette2021exponential} are based on
deterministic MDPs, our constructions critically use stochastic
dynamics, which is a \emph{necessary} departure from a
technical perspective. Indeed, for \emph{any} family of deterministic
MDPs, \emph{any} data distribution satisfying concentrability (if such
a distribution exists) would enable sample-efficient learning,  simply because all
  MDPs in the family have deterministic dynamics, and the Bellman error
minimization algorithm in \citet{chen2019information} succeeds under
concentrability and realizability when the dynamics are
deterministic.\footnote{Deterministic dynamics allow
  one to avoid the well-known \emph{double sampling} problem and in particular
  cause the conditional variance in Eq. (3) of
  \citet{chen2019information} to vanish.} Therefore, any construction involving deterministic MDPs \citep{wang2020statistical,amortila2020variant,zanette2021exponential} cannot be used to establish impossibility of sample-efficient learning under
  concentrability and realizability.

  \paragraph{Upper bounds}
  Classical analyses for offline reinforcement learning algorithms 
  such as FQI
  \citep{munos2003error,munos2007performance,munos2008finite,antos2008learning}
  provide sample complexity upper bounds in terms of concentrability
  under the strong representation condition of Bellman
  completeness. The path-breaking recent work of \citet{xie2021batch}
  provides an algorithm which requires only realizability, but uses
  a stronger coverage condition (``\pushforward'') which requires that $P(s'\mid{}s,a)/\mu(s')\leq{}C$
  for all $(s,a,s')$. Our results imply that this condition cannot be
  substantially relaxed.

  A complementary line of work, primarily focusing on policy
  evaluation \citep{uehara2020minimax,xie2020q,jiang2020minimax,uehara2021finite}, 
  provides upper bounds that require only concentrability and
  realizability, but assume access to an additional \emph{weight function class}
  that is flexible enough to represent various occupancy measures for the underlying MDP. These
  results scale with the complexity of the weight function class. In
  general, the
  complexity of this class may be prohibitively large without prior
  knowledge; this is witnessed by our lower bound construction.

  \colt{
  \oldparagraph{Why aren't stronger coverage or representation conditions satisfied?} While \akedit{our construction, described in~\pref{sec:con},} satisfies
concentrability and realizability, it fails to satisfy stronger
coverage and representation conditions for which sample-efficient
upper bounds are known. This is to be expected,\arxiv{ (or else we would have
a contradiction!)} but understanding why is instructive. Here we
discuss connections to some notable conditions.

\noindent\emph{Pushforward concentrability.} The stronger notion of
concentrability that $P(s'\mid{}s,a)/\mu(s')\leq{}C$ for all
$(s,a,s')$, as used in \cite{xie2021batch}, fails to hold
because the state $Z$ is not covered by $\mu$. This presents no issue
for standard concentrability because $Z$ is not reachable starting
from $\init$.

\noindent\emph{Completeness.}
Bellman completeness requires that the value function class $\cF$
has $\cT_{M}\cF\subseteq\cF$ for all $M\in\cM$, where $\cT_{M}$ is the Bellman operator for $M$. We show in
\pref{eq:f1} that the set of optimal Q-value functions
$\crl*{Q_{M}^{\star}}_{M\in\cM}$ is small, but completeness requires that the class remains closed even when we mix and match value functions and Bellman operators from $\cM_1$ and $\cM_2$, which results in an exponentially large class in our construction. 
  To see why, first note that by Bellman optimality, we must have
$\crl*{Q_{M}^{\star}}_{M\in\cM}\subseteq\cF$ if $\cF$ is
complete. We therefore also require
$\cT_{M'}Q^{\star}_M\in\cF$ for $M\in\cM_1$ and $M'\in\cM_2$. Unlike
the optimal Q-functions, which are constant across $\cS_1$, 
the value of $\brk*{\cT_{M'}Q^{\star}_M}(s,2)$ for
$s\in\cS_1$ depends on whether $s\in{}\psub$ or $s\in\cS_1\setminus{}\psub$,
where $\psub$ is the collection of planted states for $M'$.\footnote{Recall that $f_1$ is the optimal Q-function for any $M \in \cM_1$ and consider $\cT_{M'}f_1$ where $M' \in \cM_2$ has planted set $I$. 
For $s \in I$, we have $[\cT_{M'}f_1](s,2) = (1/2\cdot 1 + 1/2 \cdot 2/3)\gamma = 5/6 \gamma$ while for $s \in \cS_1\setminus{}I$, we have $[\cT_{M'}f_1](s,2) = (1/2\cdot 1 + 1/2 \cdot 0)\gamma = 1/2\gamma$.}
As a result, there are ${S_1\choose\abs{\psub}}$ possible values for the
Bellman backup, which means that the cardinality of $\cF$ must be
exponential in $S$.

}

}

\part{Proofs for Theorem \ref*{thm:main}}

\section{General Scheme to Construct Hard Families of Instances}\label{sec:general}
Recall that \pref{sec:parameter} gives specific numerical values for the parameters that
define the model class $\cM$ used in our lower bound construction. The
precise values are not critical for our proof, and in this section we
give general conditions on the parameters under which one can derive a
similar lower bound. In doing so, we also provide some intuition
behind the specific choice of parameters used for \pref{thm:main}.

In more detail, for any tuple of parameters $(\theta_1,\alpha_1,\beta_1;\theta_2,\alpha_2,\beta_2;w)$,
we consider the family of MDPs $\cM$ given by
\[
\cM_1:=\bigcup_{\psub\in\cI_{\theta_1}}M_{\alpha_1,\beta_1,w,\psub},~~\cM_2:=\bigcup_{\psub\in\cI_{\theta_2}}M_{\alpha_2,\beta_2,w,\psub},~~ \cM:=\cM_1\cup\cM_2. 
\]
There are 7 independent scalars in the tuple, all of which lie in $\brk{0,1}$:
$\theta_1,\alpha_1,\beta_1,\theta_2,\alpha_2,\beta_2,w$;
note that the parameter $w$ is shared between
  $\cM_1$ and $\cM_2$. The family $\cM$ above can be
  used to derive a hardness result similar to \cref{thm:main} as long
  as the following three general equality and inequality constraints are satisfied.
\begin{itemize}
    \item All $M\in\cM$ have the same marginal distribution for $s'$ under the process $s\sim \textrm{Unif}(\cS^1)$, $s'\sim
      P( s,\initac)$:
    \begin{equation}\label{eq:marginal}
    \theta_1\alpha_1=\theta_2\alpha_2,\mathand
    (1-\theta_1)\beta_1=(1-\theta_2)\beta_2.
  \end{equation}
  This ensures that the learner cannot trivially test whether
  $M\in\cM_1$ or $\cM_2$ using marginals, which is tacitly used in the
  proof of \pref{lm:tv}.
\item The parameters
  $\theta_1,\alpha_1,\beta_1,\theta_2,\alpha_2,\beta_2$ are bounded away from $0$ and $1$:
\begin{equation}\label{eq:interior}
    {\theta_1, \alpha_1, \beta_1,\theta_2,\alpha_2,\beta_2}\in (0,1).
  \end{equation}
  In particular, the distance from the boundary should be a constant independent of $\frac{1}{\abs{\cS}}$ and $\gamma$.
\item In state $\init$ (i.e., the only state where the two actions have distinct effects), action 1 is strictly better (resp. worse) than action 2 if $M\in\cM_1$ (resp. $M\in\cM_2$), which means
$w/(1-\gamma)=Q^*_M(\init,1)>Q^*_M(\init,2)=\gamma\alpha_1(1-\gamma)$ (resp. $w/(1-\gamma)=Q^*_M(\init,1)<Q^*_M(\init,2)=\gamma\alpha_2(1-\gamma)$). This means
\begin{equation}\label{eq:different}
  \gamma\alpha_1 <w< \gamma\alpha_2.
\end{equation}
The final lower bound depends on this separation quantitatively.
\end{itemize}
Any tuple simultaneously satisfying \cref{eq:marginal,eq:interior,eq:different} is sufficient for our proof (modulo numerical differences). Naturally, the numerical values for the function class $\cF$ defined in \pref{eq:f1} must be changed accordingly so that the class contains $Q^{\star}$ for both $\cM_1$ and $\cM_2$.

\section{Computation of Value Functions (Proposition \ref*{prop:value_calculation})}\label{sec:value}
In this section, we verify \pref{prop:value_calculation}, which asserts
that for all $\pi$, $Q_M^\pi=f_1$ for all $M\in\cM_1$ and $Q_M^\pi=f_2$ for all
$M\in\cM_2$, where $f_1$ and $f_2$ are defined in \pref{eq:f1}.  Note
  that the calculation we present here is based on the precise values for the
  parameters $(\theta_1,\alpha_1,\beta_1;
  \theta_2,\alpha_2,\beta_2;w)$ given in
  \cref{sec:parameter}, not the general scheme given in \pref{sec:general}.

  \begin{proof}[\pfref{prop:value_calculation}]
    Suppose $M\in\cM_1$. 
    Let $\psub_M$ denote the planted subset
    associated with $M$. First, for any self-looping terminal state
    $s\in\{W,X,Y,Z\}$, since all actions in $\cA$ have identical effects, we have
    \[
      V_M^\pi(s)=Q_M^\pi(s,\initac)= \sum_{h=0}^\infty \gamma^h
      R_{\alpha_1,\beta_1,w}(s,\initac)=\frac{1}{1-\gamma}\cdot \begin{cases}
        \frac{3}{8}\gamma,&s=W\\
        1,&s=X\\
        0,&s=Y\\
        \frac{1}{3},&s=Z\end{cases}
    \]
    for all $\pi:\cS\rightarrow\Delta(\cA)$, where we utilize the fact that  $R_{\alpha_1,\beta_1,w}(W,\initac)=w=\gamma(\alpha_1+\alpha_2)/2=3\gamma/8$ and  $R_{\alpha_1,\beta_1,w}(Z,\initac)=\alpha_1/\beta_1=1/3$.
    
    Next, for any intermediate state $s\in\cS^1$, since all actions in $\cA$ have identical effects, we have
    \begin{align*}
      V_M^\pi(s)=Q_M^\pi(s,\initac)&= R_{\alpha_1,\beta_1,w}(s,\initac)+\gamma\bbE_{s'\sim P_{\alpha_1,\beta_1,w,\psub_M}(s,\initac)}\brk{V_M^\pi(s')}\\
                    &=\begin{cases}
                      0+\gamma\brk*{ \alpha_1 V_M^\pi(X)+(1-\alpha_1) V_M^\pi(Y)}, &s\in \psub_M\\
                      0+\gamma\brk*{ \beta_1 V_M^\pi(Z)+(1-\beta_1) V_M^\pi(Y)}, &s\in\cS^1\setminus \psub_M
                    \end{cases}\\
                    &=\begin{cases}
                      \frac{\gamma}{1-\gamma}(\frac{1}{4}\times 1+ \frac{3}{4}\times0),&s\in \psub_M\\
                      \frac{\gamma}{1-\gamma}(\frac{3}{4}\times \frac{1}{3}+
                      \frac{1}{4}\times0),&s\in\cS^1\setminus \psub_M
                    \end{cases}\\
                    &=\frac{\gamma}{1-\gamma}\frac{1}{4}
    \end{align*}
    for all $\pi:\cS\rightarrow\Delta(\cA)$.

    Thus, for the initial state $\nullstate$, we have
    \[
      Q_M^\pi(\nullstate,1)=
      R_{\alpha_1,\beta_1,w}(\nullstate,1)+\gamma\bbE_{s'\sim
        P_{\alpha_1,\beta_1,w,\psub_M}(\nullstate,1)}\brk{V_M^\pi(s')}=0+\gamma
      V_M^\pi(W)=\frac{\gamma^2}{1-\gamma}\frac{3}{8},
    \]
    \[
      Q_M^\pi(\nullstate,2)=
      R_{\alpha_1,\beta_1,w}(\nullstate,2)+\gamma\bbE_{s'\sim
        P_{\alpha_1,\beta_1,w,\psub_M}(\nullstate,2)}\brk{V_M^\pi(s')}=0+\gamma
      \bbE_{s'\sim\textrm{Unif}(I_M)}V_M^\pi(s')=\frac{\gamma^2}{1-\gamma}\frac{1}{4}
    \]
    for all $\pi:\cS\rightarrow\Delta(\cA)$.
    
    Therefore, $Q_M^\pi(s,a)=f_1(s,a)$ for all
    $(s,a)\in\cS\times \cA$, for all $\pi:\cS\rightarrow\Delta(\cA)$.

    Now suppose $M\in\cM_2$. Let $\psub_M$ denote the planted subset
    associated with $M$. For any self-looping terminal state
    $s\in\{W,X,Y,Z\}$, since all actions in $\cA$ have identical effects, we have
    \[
      V_M^\pi(s)=Q_M^\pi(s,\initac)= \sum_{h=0}^\infty \gamma^h
      R_{\alpha_2,\beta_2,w}(s,\initac)=\frac{1}{1-\gamma}\cdot \begin{cases}
        \frac{3}{8}\gamma,&s=W\\
        1,&s=X\\
        0,&s=Y\\
        1,&s=Z\end{cases}
    \]
    for all $\pi:\cS\rightarrow\Delta(\cA)$, where we utilize the fact that  $R_{\alpha_2,\beta_2,w}(W,\initac)=w=\gamma(\alpha_1+\alpha_2)/2=3\gamma/8$ and  $R_{\alpha_2,\beta_2,w}(Z,\initac)=\alpha_2/\beta_2=1$.
    For any intermediate state $s\in\cS^1$, since all actions in $\cA$ have identical effects, we have
    \begin{align*}
      V_M^\pi(s)=Q_M^\pi(s,\initac)&= R_{\alpha_2,\beta_2,w}(s,\initac)+\gamma\bbE_{s'\sim P_{\alpha_2,\beta_2,w,\psub_M}(s,\initac)}\brk{V_M^\pi(s')}\\
                    &=\begin{cases}
                      0+\gamma\brk*{ \alpha_2 V_M^\pi(X)+(1-\alpha_2) V_M^\pi(Y)}, &s\in \psub_M\\
                      0+\gamma\brk*{ \beta_2 V_M^\pi(Z)+(1-\beta_2) V_M^\pi(Y)}, &s\in\cS^1\setminus \psub_M
                    \end{cases}\\
                    &=\begin{cases}
                      \frac{\gamma}{1-\gamma}(\frac{1}{2}\times 1+ \frac{1}{2}\times0),&s\in \psub_M\\
                      \frac{\gamma}{1-\gamma}(\frac{1}{2}\times 1+
                      \frac{1}{2}\times0),&s\in\cS^1\setminus \psub_M
                    \end{cases}\\
                    &=\frac{\gamma}{1-\gamma}\frac{1}{2}
    \end{align*}
    for all $\pi:\cS\rightarrow\Delta(\cA)$.  Thus, for the initial state $\nullstate$, we have
    \[
      Q_M^\pi(\nullstate,1)=
      R_{\alpha_2,\beta_2,w}(\nullstate,1)+\gamma\bbE_{s'\sim
        P_{\alpha_2,\beta_2,w,\psub_M}(\nullstate,1)}\brk{V_M^\pi(s')}=0+\gamma
      V_M^\pi(W)=\frac{\gamma^2}{1-\gamma}\frac{3}{8},
    \]
    \[
      Q_M^\pi(\nullstate,2)=
      R_{\alpha_2,\beta_2,w}(\nullstate,2)+\gamma\bbE_{s'\sim
        P_{\alpha_2,\beta_2,w,\psub_M}(\nullstate,2)}\brk{V_M^\pi(s')}=0+\gamma
      \bbE_{s'\sim\textrm{Unif}(I_M)}V_M^\pi(s')=\frac{\gamma^2}{1-\gamma}\frac{1}{2}
    \]
    for all $\pi:\cS\rightarrow\Delta(\cA)$.
    It follows that $Q_M^\star(s,a)=f_2(s,a)$ for all
    $(s,a)\in\cS\times \cA$, for all $\pi:\cS\rightarrow\Delta(\cA)$.
  \end{proof}

\section{Proof of Lemma \ref*{lm:reduction}}\label{sec:reduction}

We now prove \pref{lm:reduction}. Before proceeding, let us note
  that this lemma is proven only for the precise values for the
  parameters $(\theta_1,\alpha_1,\beta_1;
  \theta_2,\alpha_2,\beta_2;w)$ given in
  \cref{sec:parameter}. One could establish a more general lemma using
  the generic parameters introduced in \cref{sec:general}, but this
  would require changing the numerical constants appearing in the statement.

We begin the proof by lower bounding the regret for any MDP in the family $\cM$. For
any $i\in\crl{1,2}$, any MDP $M\in\cM_i$, and any policy
$\pi:\cS\rightarrow\Delta(\cA)$, we have
\begin{align}\label{eq:lower}
    J_{M}(\pistar_{\sM})-J_{M}({\pi})&={Q^\star_{\sM}(\init,\pistar_{\sM}(\init))-Q^\pi_{\sM}(\init,{\pi}(\init))}\notag\\ 
    &= {Q^\star_{\sM}(\init,\pistar_{\sM}(\init))-Q^\star_{\sM}(\init,{\pi}(\init))}\notag\\
    &= {Q^\star_{\sM}(\init,i)-Q^\star_{\sM}(\init,{\pi}(\init))}\notag\\
    &\ge\frac{\gamma^2}{8(1-\gamma)}\bbP(\pi(\init)\ne i),
\end{align}
where the second equality follows because $\init$ is the only state where different actions have distinct effects, and the last inequality follows from \cref{eq:gap}.

Now, consider any fixed offline reinforcement learning algorithm which takes
the offline dataset $D_n$ as an input and returns a stochastic policy
$\wh{\pi}_{\Dn}: \cS\rightarrow\Delta(\cA)$.
{For each $i \in \{1,2\}$, we apply} \cref{eq:lower} to all MDPs in
$\cM_i$ and average to obtain
\begin{align*}
    \frac{1}{\abs{\cM_i}}\sum_{M\in\cM_i}\bbE^{M}_{n}\brk*{J_{M}(\pistar_{\sM})-J_{M}(\wh{\pi}_{D_n})}\ge \frac{\gamma^2}{8(1-\gamma)} \frac{1}{\abs{\cM_i}}\sum_{M\in\cM_{i}}\bbP^{M}_{n}(\wh{\pi}_{D_n}(\init)\ne i).
\end{align*}
Applying the inequality above for $i=1$ and $i=2$ and combining the results, we have
\begin{align*}%
    &\max_{M\in\cM}\bbE^{M}_{n}\brk*{J_{M}(\pistar_{\sM})-J_{M}(\wh{\pi}_{D_n})}\notag\\
    &\ge \frac{1}{2\abs{\cM_{1}}}\sum_{M\in\cM_{1}}\bbE^{M}_{n}\brk*{J_{M}(\pistar_{\sM})-J_{M}(\wh{\pi}_{D_n})}+ \frac{1}{2\abs{\cM_{2}}}\sum_{M\in\cM_{2}}\bbE^{M}_{n}\brk*{J_{M}(\pistar_{\sM})-J_{M}(\wh{\pi}_{D_n})}\notag\\
    &\ge\frac{\gamma^2}{16(1-\gamma)}\crl*{\frac{1}{\abs{\cM_{1}}}\sum_{M\in\cM_{1}}\bbP^{M}_{n}(\wh{\pi}_{D_n}(\init)\ne 1)+ \frac{1}{\abs{\cM_{2}}}\sum_{M\in\cM_{2}}\bbP^{M}_{n}(\wh{\pi}_{D_n}(\init)\ne 2)}\notag\\
    &\ge \frac{\gamma^2}{16(1-\gamma)}\prn*{1-\Dtv{\frac{1}{\abs{\cM_{1}}}\sum_{M\in\cM_{1}}\bbP^{M}_{n}}{\frac{1}{\abs{\cM_{2}}}\sum_{M\in\cM_{2}}\bbP^{M}_{n}}},
\end{align*}
where the last inequality follows because  $\bbP(E) + \mathbb{Q}(E^c)
\geq 1 - \Dtv{\bbP} {\mathbb{Q}}$ for any event $E$. \qed

  \section{Proof of Lemma \ref*{lm:tv}}\label{sec:tv}

This proof is organized as follows. In \cref{sec:reference}, we
introduce a reference measure and move from the total variation
distance to the $\chi^2$-divergence. This allows us to reduce the task
of upper bounding  $\Dtv{\bbP_{n}^1}{\bbP_{n}^2}$ to the task of upper
bounding two manageable density ratios
(\cref{eq:simplify1,eq:simplify2} in the sequel). We develop
several intermediate technical lemmas related to the density ratios in
\cref{sec:tech}, and in \cref{sec:calc} we put everything together to
bound the density ratios, thus completing the proof of \cref{lm:tv}.

For the statement of \cref{lm:tv} and the main subsections of this appendix
  (\cref{sec:reference,sec:calc}), we only consider the specific
  values for the parameters $(\theta_1,\alpha_1,\beta_1;\theta_2,\alpha_2,\beta_2;w)$ 
given in \cref{sec:parameter}. However, in \cref{sec:tech}, which contains intermediate technical lemmas, results are presented
under a slightly more general setup, as explained at the beginning of the
subsection.

\subsection{Introducing a Reference Measure and Moving to
  $\chi^2$-Divergence}
\label{sec:reference}

Directly calculating the total variation distance $\Dtv{\bbP_{n}^1}{\bbP_{n}^2}$ 
is challenging, so we design an auxillary \emph{reference measure}
$\bbP_{n}^0$  which serves as an intermediate quantity to help with the upper bound. 
The reference measure $\bbP_{n}^0$ lies in the same measurable space as $\bbP_{n}^1$ and $\bbP_{n}^2$, and is defined as follows:
\[
\bbP_{n}^0\prn{\crl{\prn{s_i,a_i,r_i,s_i'}}_{i=1}^n}:=\prod_{i=1}^n \mu(s_i,a_i)\Ind_{\crl{r_i=R_0(s_i,a_i)}}P_0(s_i'\mid s_i,a_i),~~~\forall\; \crl{\prn{s_i,a_i,r_i,s_i'}}_{i=1}^n,
\]
where %
\[
R_0(s,\initac):=\begin{cases}
0,&s\in\{\init\}\cup\cS^1,\\
w=3\gamma/8,&s=W,\\
1,&s=X,\\
0,&s=Y,\\
0,&s=Z,
\end{cases}
\]
and %
\begin{align*}
P_0(\cdot\mid \nullstate,1)&:=W, ~\text{w.p. }1, \\
P_0(\cdot\mid\init,2)&:=\textrm{Unif}(\cS^1),\\
\forall s\in\cS^1:~~P_0(\cdot\mid s,\initac)&:=\begin{cases}
X,&\textrm{w.p. } \theta_1\alpha_1,\\
Y,&\textrm{w.p. } 1-\theta_1\alpha_1-(1-\theta_1)\beta_1,\\
Z,&\textrm{w.p. } (1-\theta_1)\beta_1,
\end{cases}\\
\forall s\in\{W,X,Y,Z\}:~~ P_0(\cdot\mid s,\initac)&:=s, ~\text{w.p. }1.
\end{align*}
The reference measure $\bbP_{n}^0$ can be understood as the law of
$D_n$ when the data collection distribution is $\mu$ and the
underlying MDP is $M_0:=(\cS,\cA,P_0,R_0,\gamma,d_0)$. {Note
  that although we define the transition operator $P_0$ above based on
  the tuple $(\theta_1,\alpha_1,\beta_1)$, substituting in
  $(\theta_2,\alpha_2,\beta_2)$ leads to the same operator --- this is
  guaranteed by an important feature of our construction: the families
  $\cM_1$ and $\cM_2$ in our construction satisfy the constraint
  \cref{eq:marginal}, so that $\theta_1\alpha_1 = \theta_2\alpha_2$
  and $(1-\theta_1)\beta_1=(1-\theta_2)\beta_2$.}

{In what follows, we provide more explanations on the design of the transition operator $P_0$ and the reward function $R_0$.}

\paragraph{Properties and Intuition of $P_0$} 
{There are two ways to understand $P_0$. Operationally, $P_0$ is simply the pointwise \emph{average} transition operator of the MDPs in $\cM_1$ or $\cM_2$, in the sense that
\begin{equation*}\label{eq:average-tr}
\forall s \in \cS, a \in \cA:~~ P_0(\cdot \mid s,a) =\frac{1}{\abs{\cM_1}}\sum_{M\in\cM_1}P_{M}(\cdot \mid s,a)=\frac{1}{\abs{\cM_2}}\sum_{M\in\cM_2}P_{M}(\cdot \mid s,a),
\end{equation*}
where $P_M$ is the transition operator associated with each MDP $M$. 
More conceptually, $P_0$ is the transition operator obtained by performing state aggregation using the value function class $\cF = \{f_1,f_2\}$, where states with the same values for both $f_1$ and $f_2$ are viewed as identical and constrained to share dynamics (which is induced by averaging over the data collection distribution). 
}

\paragraph{Properties and Intuition of $R_0$} {Outside of state $Z$, the reward function $R_0$ is the same as the reward function of any MDP in $\cM$, i..e.,
\begin{equation*}\label{eq:average-rf}
\forall s \ne Z, a \in \cA:~~ R_0(s,a) =R_{M}(s,a),~\forall M\in\cM,
\end{equation*}
where $R_M$ is the transition operator associated with each MDP $M$. The value of $R_0(Z,\initac)$ is immaterial, as the data collection distribution $\mu$ is not supported on $(Z, \initac)$ (in other words, different values of $R_0(Z,\initac)$ lead to essentially the same reference measure $\bbP_n^0$); we choose $R_0(Z, a) = 0$ for concreteness.
}

\paragraph{Moving to $\chi^2$-Divergence} 
{Equipped with the definition of the reference measure $\bbP_n^0$, we proceed to bound $\Dtv{\bbP_{n}^1}{\bbP_{n}^2}$.}
By the triangle inequality for the total variation distance, we have
\begin{align}
\Dtv{\bbP_{n}^1}{\bbP_{n}^2}&\le \Dtv{\bbP_{n}^1}{\bbP_{n}^0}+\Dtv{\bbP_{n}^2}{\bbP_{n}^0}\notag\\
&\le\frac{1}{2}\sqrt{D_{\chi^2}\prn{\bbP_{n}^1\dmid\bbP_{n}^0}}+\frac{1}{2}\sqrt{D_{\chi^2}\prn{\bbP_{n}^2\dmid\bbP_{n}^0}},\label{eq:tv_triangle}
\end{align}
where the last inequality follows from the fact that $\Dtv{\bbP}{\bbQ}\le\frac{1}{2}\sqrt{D_{\chi^2}\prn{\bbP\dmid\bbQ}}$ for any $\bbP,\bbQ$ (see Proposition 7.2 or Section 7.6 of \cite{polyanskiy}).

In what follows, we derive simplified expressions for $D_{\chi^2}\prn{\bbP_{n}^1\dmid\bbP_{n}^0}$ and $D_{\chi^2}\prn{\bbP_{n}^2\dmid\bbP_{n}^0}$.
We first expand and simplify $D_{\chi^2}\prn{\bbP_{n}^1,\bbP_{n}^0}$, then obtain a similar
expression for $D_{\chi^2}\prn{\bbP_{n}^2\dmid\bbP_{n}^0}$. 

For each
MDP $M\in\cM$, let $P_M$ and $R_M$ denote the associated transition and reward
functions.
Observe that our construction for $P_M$, $R_M$, and $\mu$ (see
\cref{sec:con}) ensures that for any
$(s,a,r,s')\in\cS\times\cA\times[0,1]\times\cS$ with
$\mu(s,a)\Ind_{\crl{r=R_0(s,a)}}P_0(s'\mid s,a)=0$, we have
$\mu(s,a)\Ind_{\crl{r=R_M(s,a)}}P_M(s'\mid s,a)=0$. As a result, we
have $\bbP_{n}^{\sM}\ll \bbP^0_{n}$ for any $M\in\cM$, which implies
that $\bbP_{n}^1,\bbP_{n}^{2}\ll \bbP^0_{n}$. Hence, we can expand the $\chi^2$-divergence as
\begin{align}\label{eq:simplify1}
&D_{\chi^2}\prn{\bbP_{n}^1\dmid\bbP_{n}^0}\notag\\
&=\bbE_{\crl{\prn{s_i,a_i,r_i,s_i'}}_{i=1}^n\sim \bbP_{n}^0}\brk*{\prn*{\frac{\frac{1}{|\cM_1|}\sum_{M\in\cM_1}\bbP^M_{n}(\crl{\prn{s_i,a_i,r_i,s_i'}}_{i=1}^n)}{\bbP_{n}^0(\crl{\prn{s_i,a_i,r_i,s_i'}}_{i=1}^n)}}^2}-1\notag\\
&=\bbE_{\crl{\prn{s_i,a_i,r_i,s_i'}}_{i=1}^n\sim
                                                                                                                                                                                                                                         \bbP_{n}^0}\brk*{\prn*{\frac{{\frac{1}{|\cM_1|}\sum_{M\in\cM_1}\prod_{i=1}^n\mu(s_i,a_i)\Ind_{\crl{r_i=R_M(s_i,a_i)}}P_M(s_i'\mid s_i,a_i)}}{\prod_{i=1}^n\mu(s_i,a_i)\Ind_{\crl{r_i=R_0(s_i,a_i)}}{P_0(s_i'\mid s_i,a_i)}}}^2}-1\notag\\
  &=\bbE_{\crl{\prn{s_i,a_i,r_i,s_i'}}_{i=1}^n\sim \bbP_{n}^0}\brk*{\prn*{\frac{{\frac{1}{|\cM_1|}\sum_{M\in\cM_1}\prod_{i=1}^nP_M(s_i'\mid s_i,a_i)}}{\prod_{i=1}^n{P_0(s_i'\mid s_i,a_i)}}}^2}-1\notag\\
&=\frac{1}{\abs{\cM_1}^2}\sum_{M,M'\in\cM_{1}}\bbE_{\crl{\prn{s_i,a_i,r_i,s_i'}}_{i=1}^n\sim \bbP_{n}^0}\brk*{{\frac{\prod_{i=1}^n{P_M(s_i'\mid s_i,a_i)P_{M'}(s_i'\mid s_,a_i)}}{\prod_{i=1}^n{P_0^2(s_i'\mid s_i,a_i)}}}}-1\notag\\
&=\frac{1}{\abs{\cM_1}^2}\sum_{M,M'\in\cM_{1}}\prn*{\bbE_{(s,a)\sim\mu,s'\sim P_0(\cdot\mid s,a)}\brk*{{\frac{P_M(s'\mid s,a)P_{M'}(s'\mid s,a)}{P_0^2(s'\mid s,a)}}}}^n-1,
\end{align}
where the third equality follows because (i)
$R_M(s,a)=R_0(s,a), \forall M\in\cM,\forall a\in\cA, \forall s\ne Z$,
and (ii) state $Z$ is not covered by $\mu$. Indeed, since the
reward function for every MDP in $\cM$ is the same as $R_0$ 
\emph{for all $(s,a)$  covered by $\mu$}, the rewards $r_1,\dots,r_n$
in $\Dn$ are completely uninformative in our construction---they have the same distribution regardless of the underlying MDP. This
is why the final expression for
$D_{\chi^2}\prn{\bbP_{n}^1\dmid\bbP_{n}^0}$ in \pref{eq:simplify1} is completely
independent of the reward distribution for both measures.

Using an identical calculation, we also have
\begin{align}\label{eq:simplify2}
D_{\chi^2}\prn{\bbP_{n}^2\dmid\bbP_{n}^0}=\frac{1}{\abs{\cM_2}^2}\sum_{M,M'\in\cM_{2}}\prn*{\bbE_{(s,a)\sim\mu,s'\sim P_0(\cdot\mid s,a)}\brk*{{\frac{P_M(s'\mid s,a)P_{M'}(s'\mid s,a)}{P_0^2(s'\mid s,a)}}}}^n-1.
\end{align}

Equipped with these expressions for the \chisquared,
the next step in the proof of \pref{lm:tv} is to upper bound the
right-hand side for \cref{eq:simplify1,eq:simplify2}. This is done in
\cref{sec:calc}, but before proceeding we require several intermediate technical lemmas.

\subsection{Technical Lemmas for Density Ratios}\label{sec:tech}

{In this subsection, we state a number of technical lemmas which
  can be used to bound the density ratio appearing inside the square in \pref{eq:simplify1,eq:simplify2} for generic MDPs
  $M_{\alpha,\beta,w,\psub}$ with
  $I\in\cI_{\theta}$. The lemmas hold for any choice of
  $(\theta, \alpha,\beta)$, and are independent of the reward parameter
  $w$. For this general setup, we work
  with a variant of the reference operator $P_0$ defined based on the values $(\theta,\alpha,\beta)$ via
  \begin{align*}
P_0(\cdot\mid \nullstate,1)&:=W, ~\text{w.p. }1, \\
P_0(\cdot\mid\init,2)&:=\textrm{Unif}(\cS^1),\\
\forall s\in\cS^1:~~P_0(\cdot\mid s,\initac)&:=\begin{cases}
X,&\textrm{w.p. } \theta\alpha,\\
Y,&\textrm{w.p. } 1-\theta\alpha-(1-\theta)\beta,\\
Z,&\textrm{w.p. } (1-\theta)\beta,
\end{cases}\\
\forall s\in\{W,X,Y,Z\}:~~ P_0(\cdot\mid s,\initac)&:=s, ~\text{w.p. }1.
  \end{align*}
 In \pref{sec:calc}, we
  instantiate the results from this subsection with $(\theta_i,\alpha_i,\beta_i)$ for $i \in
  \{1,2\}$. Recall that per the discussion in \pref{sec:reference}, our
  specific parameter choices for the families $\cM_1$ and $\cM_2$
  induce the same reference operator $P_0$.
} %

\begin{lemma}\label{lm:cap-bound}
For all $\psub,\psub'\in\cI_{\theta}$, 
$(2\theta-1)_+  S_1\le \abs{\psub\cap \psub'}\le \theta  S_1$.
\end{lemma}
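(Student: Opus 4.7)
The plan is to establish both bounds by elementary set-theoretic counting, since $\mathcal{I}_\theta$ consists simply of subsets of $\mathcal{S}^1$ of a prescribed cardinality $\theta S_1$.

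For the upper bound, I would observe that $I \cap I' \subseteq I$, so $|I \cap I'| \le |I| = \theta S_1$. (Equivalently, $|I \cap I'| \le |I'| = \theta S_1$; either works.)

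For the lower bound, I would apply inclusion--exclusion in the ambient set $\mathcal{S}^1$: $|I \cup I'| = |I| + |I'| - |I \cap I'| = 2\theta S_1 - |I \cap I'|$. Since $I \cup I' \subseteq \mathcal{S}^1$, we have $|I \cup I'| \le S_1$, which rearranges to $|I \cap I'| \ge (2\theta - 1) S_1$. Combining this with the trivial fact $|I \cap I'| \ge 0$ yields $|I \cap I'| \ge (2\theta - 1)_+ S_1$, as desired.

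There is no real obstacle here; the lemma is a one-line pigeonhole/inclusion--exclusion statement that sets up notation for the subsequent $\chi^2$-divergence calculation, where intersection sizes $|I \cap I'|$ will govern the density ratio between transitions drawn from different planted sets.
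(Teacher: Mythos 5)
Your proof is correct and follows essentially the same route as the paper: the upper bound from $\psub\cap \psub'\subseteq \psub$, and the lower bound from inclusion--exclusion together with $\abs{\psub\cup \psub'}\le S_1$ and the trivial nonnegativity of $\abs{\psub\cap \psub'}$. No issues.
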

\begin{proof}
Since $\abs{\psub}=\abs{\psub'}=\theta  S_1$, we have
$\abs{\psub\cap \psub'}\le\abs{\psub}=\theta  S_1$
and
\[
\abs{\psub\cap \psub'}=\abs{\psub}+\abs{\psub'}-\abs{\psub\cup \psub'}\ge \abs{\psub}+\abs{\psub'}- S_1=(2\theta-1) S_1.
\]
Since $|\psub\cap \psub'|\ge0$ trivially, the result follows.
\end{proof}

The next lemma controls the density ratio for states in $\cS^1$. To state the result compactly, we define
\begin{equation}
\phi_{\theta,\alpha,\beta}:=\theta^2\prn*{\frac{(\beta-\alpha)^2}{\theta(\beta-\alpha)+1-\beta}+\frac{\theta(\beta-\alpha)+\alpha}{\theta(1-\theta)}}.\label{eq:phi}
\end{equation}
Since \cref{lm:ratio1} is stated for any given $\theta,\alpha,\beta$, we use $P_\psub$ to denote $P_{\alpha,\beta,\psub}$ to keep notation compact.

\begin{lemma}\label{lm:ratio1}
For all $\psub,\psub'\in\cI_{\theta}$, we have
\[\bbE_{s\sim{\rm Unif}(\cS^1),s'\sim P_0(\cdot\mid s{, \initac})}\brk*{
    \frac{ P_\psub(s'\mid s{, \initac}) P_{\psub'}(s'\mid s{, \initac})}{P_0^2(s'\mid s{,
        \initac})}}=1 + \phi_{\theta,\alpha,\beta}\cdot{}\prn*{\frac{\abs{\psub\cap \psub'}}{\theta^2  S_1}-1}.\]
\end{lemma}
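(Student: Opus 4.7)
The plan is to compute the expectation explicitly by case analysis on the joint membership of $s$ in $\psub$ and $\psub'$. A useful preliminary observation is that $P_0(\cdot\mid s,\initac)$ is identical for every $s\in\cS^{1}$, namely $\theta\alpha$ on $X$, $p_0:=1-\theta\alpha-(1-\theta)\beta$ on $Y$, and $(1-\theta)\beta$ on $Z$. Thus the inner expectation over $s'$ depends only on the pair of indicators $(\ind{s\in\psub},\ind{s\in\psub'})$, and I split $\cS^{1}$ into four groups with sizes $k$, $\theta S_1-k$, $\theta S_1-k$, and $S_1-2\theta S_1+k$, where $k=\abs{\psub\cap\psub'}$.

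First I would compute the per-case contributions. In Case~1 ($s\in\psub\cap\psub'$), both transition kernels equal $(\alpha,1-\alpha)$ on $(X,Y)$, yielding
\[A\ldef \frac{\alpha^2}{(\theta\alpha)^2}\cdot\theta\alpha+\frac{(1-\alpha)^2}{p_0^2}\cdot p_0=\frac{\alpha}{\theta}+\frac{(1-\alpha)^2}{p_0}.\]
In Case~4 ($s\notin\psub\cup\psub'$), both equal $(\beta,1-\beta)$ on $(Z,Y)$, so $C\ldef\beta/(1-\theta)+(1-\beta)^2/p_0$. In the symmetric-difference Cases 2 and 3, one kernel is supported on $\{X,Y\}$ and the other on $\{Z,Y\}$, so the product is supported only on $Y$, giving $B\ldef(1-\alpha)(1-\beta)/p_0$.

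Aggregating, the full expectation equals
\[\tfrac{1}{S_1}\brk{kA+2(\theta S_1-k)B+(S_1-2\theta S_1+k)C}=\tfrac{k}{S_1}(A-2B+C)+2\theta B+(1-2\theta)C.\]
The perfect-square identity $(1-\alpha)^2-2(1-\alpha)(1-\beta)+(1-\beta)^2=(\beta-\alpha)^2$ gives $A-2B+C=\alpha/\theta+\beta/(1-\theta)+(\beta-\alpha)^2/p_0$. The key identity $p_0=(1-\beta)+\theta(\beta-\alpha)$ then shows this equals $\phi_{\theta,\alpha,\beta}/\theta^{2}$, matching the first term in the claim.

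It remains to verify $2\theta B+(1-2\theta)C=1-\phi_{\theta,\alpha,\beta}$, which recovers the constant $1$ and the $-\phi_{\theta,\alpha,\beta}$ offset. Expanding the $p_0^{-1}$ terms using $(1-\beta)=p_0-\theta(\beta-\alpha)$ collapses $(1-2\theta)(1-\beta)^2/p_0+2\theta(1-\alpha)(1-\beta)/p_0$ to $p_0-\theta^{2}(\beta-\alpha)^{2}/p_0$; the remaining $\beta/(1-\theta)$ contribution combines with $-(1-\theta)\beta$ inside $p_0$ to produce $-\theta^{2}\beta/(1-\theta)$, and a short manipulation using $\theta\alpha+\theta^{2}\beta/(1-\theta)=(\theta\alpha+\theta^{2}(\beta-\alpha))/(1-\theta)$ exactly reproduces $\phi_{\theta,\alpha,\beta}$. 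A conceptual cross-check is that when $k=\theta^{2}S_1$ (the intersection size expected if the two sets were drawn independently), the right-hand side must collapse to $1$; this is what the identity enforces. The main obstacle is purely bookkeeping --- there is no conceptual difficulty, only the need to carefully track the $(1-\beta)=p_0-\theta(\beta-\alpha)$ substitution and the $\theta\alpha/(1-\theta)$ rearrangement when checking the constant term.
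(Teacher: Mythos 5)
Your proposal is correct and follows essentially the same route as the paper: a case analysis on the membership of $s$ in $\psub$ and $\psub'$, tabulating the per-case value of $\sum_{s'}P_\psub(s'\mid s,\initac)P_{\psub'}(s'\mid s,\initac)/P_0(s'\mid s,\initac)$, and then regrouping so that the coefficient of $\abs{\psub\cap\psub'}/S_1$ is $\phi_{\theta,\alpha,\beta}/\theta^2$ and the constant term is $1-\phi_{\theta,\alpha,\beta}$ (the paper verifies the latter by splitting into groups summing to $1$, you by the equivalent identity $\theta^2A+2\theta(1-\theta)B+(1-\theta)^2C=1$; both checks go through). All of your intermediate identities, including $p_0=(1-\beta)+\theta(\beta-\alpha)$ and the collapse of the $p_0^{-1}$ terms, are correct.
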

\begin{proof}
For any $\psub,\psub'\in\cI_{\theta}$, we observe that
\begin{align*}
    \bbE_{s\sim\textrm{Unif}(\cS^1),s'\sim P_0(\cdot\mid s{, \initac})}\brk*{ \frac{ P_\psub(s'\mid s{, \initac})P_{\psub'}(s'\mid s{, \initac})}{P_0^2(s'\mid s{, \initac})}}
    &=\bbE_{s\sim {\rm Unif}(\cS^1)}\brk*{ {\sum_{s'\in\crl{X,Y,Z}}\frac{ P_\psub(s'\mid s{, \initac}) P_{\psub'}(s'\mid s{, \initac})}{P_0(s'\mid s{, \initac})}}}.
\end{align*}
To proceed, we calculate the value of the ratio $\frac{ P_\psub(s'\mid s{, \initac}) P_{\psub'}(s'\mid s{,
    \initac})}{P_0(s'\mid s{, \initac})}$ for each possible choice for
$s\in\cS^1$ and $s'\in\crl{X,Y,Z}$ in \pref{tb:val} below.
\begin{table}[h]
\begin{center}
\begin{tabular}{|c | c | c | c|}
\hline
& $s' = X$ & $s'=Y$ & $s'=Z$\\
\hline\hline
$s \in \psub \cap \psub'$ & $\alpha/\theta$ & $(1-\alpha)^2/(\theta(1-\alpha)+(1-\theta)(1-\beta))$ &  0 \\
\hline
$s \in (\psub \cup \psub') \setminus (\psub \cap \psub')$ & 0 &  $(1-\alpha)(1-\beta)/(\theta(1-\alpha)+(1-\theta)(1-\beta))$ &  0 \\
\hline
$s \notin (\psub \cup \psub')$ & 0 & $(1-\beta)^2/(\theta(1-\alpha)+(1-\theta)(1-\beta))$ & $\beta/(1-\theta)$\\
\hline
\end{tabular}
\caption{Value of $\frac{ P_\psub(s'\mid s, 2) P_{\psub'}(s'\mid s,
    2)}{P_0(s'\mid s, 2)}$ for all possible pairs $(s,s')$.}\label{tb:val}
\end{center}
\end{table}

Define $t\ldef\abs{\psub\cap \psub'}$. From
\cref{lm:cap-bound}, we must have $t\in[(2\theta-1)_+S_{{1}},\theta
S_{{1}}]$. We also have $\abs{\psub\cup\psub'}=\abs{\psub}+\abs{\psub'}-\abs{\psub\cap\psub'}=2\theta S_{{1}}-t$. Hence, the event in the first row of \cref{tb:val} occurs with probability $\abs{\psub\cap\psub'}/S_{{1}}=t/S_{{1}}$,
the event in the second row occurs with probability $\abs{(\psub \cup \psub') \setminus (\psub \cap \psub')}/S_{{1}}=(2\theta S_{{1}}-2t)/S_{{1}}$
and the event in the third row
occurs with probability $\abs{S_{{1}}\setminus\prn{\psub\cup\psub'}}/S_{{1}}=((1-2\theta)S_{{1}}+t)/S_{{1}}$. 
Using these values, we obtain
\newcommand{\groupi}{\text{(i)}}
\newcommand{\groupii}{\text{(ii)}}
\newcommand{\groupiii}{\text{(iii)}}
\begin{align*}
& \bbE_{s\sim {\rm Unif}(\cS^1)}\brk*{ {\sum_{s'\in\crl{X,Y,Z}}\frac{ P_\psub(s'\mid s{, \initac}) P_{\psub'}(s'\mid s{, \initac})}{P_0(s'\mid s{, \initac})}}}\\
& =  \frac{t}{S_{{1}}}\cdot\prn*{\frac{\alpha}{\theta}+ \frac{(1-\alpha)^2}{\theta(1-\alpha)+(1-\theta)(1-\beta)} } + \prn*{2\theta - \frac{2t}{S_{{1}}}}\cdot\frac{(1-\alpha)(1-\beta)}{\theta(1-\alpha)+(1-\theta)(1-\beta)}  \\
& ~~~~~~+ \prn*{1-2\theta+\frac{t}{S_{{1}}}}\cdot\prn*{ \frac{(1-\beta)^2}{\theta(1-\alpha)+(1-\theta)(1-\beta)} + \frac{\beta}{1-\theta}}\\
& = \frac{t}{S_{{1}}}\prn*{\frac{(\beta-\alpha)^2}{\theta(\beta-\alpha)+1-\beta}+\frac{\alpha}{\theta}+\frac{\beta}{1-\theta}}+\frac{2\theta(\beta-\alpha)(1-\beta)+(1-\beta)^2}{\theta(\beta-\alpha)+1-\beta}+\frac{(1-2\theta)\beta}{1-\theta}\\
& = \frac{t}{S_{{1}}}\prn*{\frac{(\beta-\alpha)^2}{\theta(\beta-\alpha)+1-\beta}+\frac{\alpha}{\theta}+\frac{\beta}{1-\theta}}+\frac{2\theta(\beta-\alpha)(1-\beta)+(1-\beta)^2}{\theta(\beta-\alpha)+1-\beta}+\beta-\frac{\theta\beta}{1-\theta}\\
& = \prn*{\frac{t}{S_{{1}}}-\theta^2}\prn*{\frac{(\beta-\alpha)^2}{\theta(\beta-\alpha)+1-\beta}+\frac{\alpha}{\theta}+\frac{\beta}{1-\theta}}\\
&~~~~~~+\underbrace{\theta^{2}\cdot\frac{(\beta-\alpha)^2}{\theta(\beta-\alpha)+1-\beta}}_{\groupi}+\underbrace{\theta^2\cdot\frac{\alpha}{\theta}}_{\groupii}+\underbrace{\theta^2\cdot\frac{\beta}{1-\theta}}_{\groupiii}+\underbrace{\frac{2\theta(\beta-\alpha)(1-\beta)+(1-\beta)^2}{\theta(\beta-\alpha)+1-\beta}}_{\groupi}+ \underbrace{\beta}_{\groupii}-\underbrace{\frac{\theta\beta}{1-\theta}}_{\groupiii}
.
\end{align*}
Grouping the terms in the second line together, we find that $\groupi =
\theta(\beta-\alpha)+1-\beta$, $\groupii=\theta\alpha+\beta$, and
$\groupiii=-\theta\beta$, and by summing,
\[
  \groupi + \groupii + \groupiii = 1.
\]
Hence, the above expression is equal to
\begin{align*}
& \prn*{\frac{t}{S_{{1}}}-\theta^2}\prn*{\frac{(\beta-\alpha)^2}{\theta(\beta-\alpha)+1-\beta}+\frac{\alpha}{\theta}+\frac{\beta}{1-\theta}}+1\\
&=\prn*{\frac{t}{S_{{1}}}-\theta^2}\prn*{\frac{(\beta-\alpha)^2}{\theta(\beta-\alpha)+1-\beta}+\frac{\theta(\beta-\alpha)+\alpha}{\theta(1-\theta)}}+1.
\end{align*}
Recalling the definition of $\phi_{\theta,\alpha,\beta}$, this completes the proof.
\end{proof}

The next lemma bounds the magnitude of
$\phi_{\theta,\alpha,\beta}$ in terms of the parameter $\theta$.
\begin{lemma}\label{lm:phi-bound}
For any $\alpha,\beta,\theta\in(0,1)$, we have
\[
\theta^2\abs{\alpha-\beta}\le \phi_{\theta,\alpha,\beta}\le\frac{\theta}{1-\theta}\max\crl{\alpha,\beta}\leq\frac{\theta}{1-\theta}.
\]
\end{lemma}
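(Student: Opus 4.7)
The plan is to write $\phi_{\theta,\alpha,\beta} = \theta^2(A+B)$, where
\[
A \ldef \frac{(\beta-\alpha)^2}{\theta(\beta-\alpha)+1-\beta}, \qquad B \ldef \frac{\theta(\beta-\alpha)+\alpha}{\theta(1-\theta)} = \frac{\theta\beta + (1-\theta)\alpha}{\theta(1-\theta)},
\]
and then prove both inequalities by purely algebraic manipulation of $A$ and $B$. Note that for $\alpha,\beta,\theta\in(0,1)$, the denominator $\theta(\beta-\alpha)+1-\beta = 1 - (1-\theta)\beta - \theta\alpha$ of $A$ lies in $(0,1)$, so both $A$ and $B$ are nonnegative; hence $\phi_{\theta,\alpha,\beta}\ge 0$ throughout. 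The final bound $\frac{\theta}{1-\theta}\max\{\alpha,\beta\} \le \frac{\theta}{1-\theta}$ is immediate from $\alpha,\beta\le 1$, so only the two main inequalities require work.

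\textbf{Lower bound.} Since $A\ge 0$, it suffices to prove $\theta^{2} B \ge \theta^{2}|\alpha-\beta|$, i.e.\ $\theta\beta + (1-\theta)\alpha \ge \theta(1-\theta)|\alpha-\beta|$. I would split into two cases. If $\beta\ge\alpha$, expand $\theta\beta+(1-\theta)\alpha - \theta(1-\theta)(\beta-\alpha) = \alpha + \theta^{2}(\beta-\alpha)\ge 0$. If $\alpha\ge\beta$, expand $\theta\beta+(1-\theta)\alpha - \theta(1-\theta)(\alpha-\beta) = \alpha(1-\theta)^{2} + \theta\beta \ge 0$ (after grouping). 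Either way $B\ge|\alpha-\beta|$, which yields $\phi_{\theta,\alpha,\beta}\ge\theta^{2}|\alpha-\beta|$.

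\textbf{Upper bound.} The idea is to cancel the ``leading'' part of $B$ against the target $\frac{\max\{\alpha,\beta\}}{\theta(1-\theta)}$ and show that the residual dominates $A$. Concretely, the identity
\[
B = \frac{\max\{\alpha,\beta\}}{\theta(1-\theta)} - \frac{|\alpha-\beta|}{c(\theta)},
\]
where $c(\theta)=\theta$ when $\beta\ge\alpha$ and $c(\theta)=1-\theta$ when $\alpha\ge\beta$, reduces the claim to $A \le |\alpha-\beta|/c(\theta)$. Dividing out one factor of $|\alpha-\beta|$ (trivial if $\alpha=\beta$), this is $|\alpha-\beta|\cdot c(\theta) \le 1 - (1-\theta)\beta - \theta\alpha$. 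In the case $\beta\ge\alpha$, the right-hand side equals $1-(1-\theta)\beta - \theta\alpha$ while the left is $\theta(\beta-\alpha)$; rearranging gives $(1-\theta)\beta+\theta\beta \le 1$, i.e.\ $\beta\le 1$. In the case $\alpha\ge\beta$, analogously the inequality reduces to $\alpha\le 1$. Both hold by assumption, completing the upper bound and hence the lemma. \qed

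Nothing here poses a serious obstacle; the only subtlety is choosing the right grouping of $A$ and $B$ so that the cancellation in the upper bound makes the two symmetric cases ($\alpha\lessgtr\beta$) visible, after which each case is a one-line verification.
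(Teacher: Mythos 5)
Your proof is correct and follows essentially the same route as the paper's: the same case split on $\alpha \lessgtr \beta$, dropping the nonnegative term $A$ for the lower bound, and for the upper bound the key inequality $A \le \abs{\alpha-\beta}/c(\theta)$ (equivalently $\beta\le 1$ or $\alpha\le 1$) is exactly the denominator bound the paper uses, merely repackaged as a cancellation identity for $B$. The only blemish is a harmless arithmetic slip in the lower bound's second case, where the grouped expression should be $(1-\theta)^2\alpha + \theta(2-\theta)\beta$ rather than $(1-\theta)^2\alpha + \theta\beta$; it is still nonnegative, so the conclusion stands.
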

\begin{proof}Recall that $\phi_{\theta,\alpha,\beta}=\theta^2\prn*{\frac{(\beta-\alpha)^2}{\theta(\beta-\alpha)+1-\beta}+\frac{\theta(\beta-\alpha)+\alpha}{\theta(1-\theta)}}$.
We consider two cases.

\textbf{Case 1: $\alpha\le \beta$.}  Assume $\alpha<\beta$, as the
result is immediate if $\alpha=\beta$. We have
\begin{align*}
    {\frac{(\beta-\alpha)^2}{\theta(\beta-\alpha)+1-\beta}+\frac{\theta(\beta-\alpha)+\alpha}{\theta(1-\theta)}}\ge 0+\frac{\theta(\beta-\alpha)+\alpha}{\theta(1-\theta)}\ge\frac{\theta(\beta-\alpha)}{\theta(1-\theta)}=\frac{\beta-\alpha}{1-\theta} { > |\alpha - \beta|}
\end{align*}
and
\begin{align*}
      {\frac{(\beta-\alpha)^2}{\theta(\beta-\alpha)+1-\beta}+\frac{\theta(\beta-\alpha)+\alpha}{\theta(1-\theta)}}
      &=  {\frac{(\beta-\alpha)^2}{\theta(\beta-\alpha)+1-\beta}}+\frac{\beta-\alpha}{1-\theta}+\frac{\alpha}{\theta(1-\theta)}\\
      &\le \frac{\beta-\alpha}{\theta}+\frac{\beta-\alpha}{1-\theta}+\frac{\alpha}{\theta(1-\theta)}\\
      &=\frac{\beta}{\theta(1-\theta)},
\end{align*}
where the inequality above follows since $\theta(\beta-\alpha)+1-\beta>\theta(\beta-\alpha)>0$.

\textbf{Case 2: $\alpha> \beta$.}   We have
\begin{align*}
    {\frac{(\beta-\alpha)^2}{\theta(\beta-\alpha)+1-\beta}+\frac{\theta(\beta-\alpha)+\alpha}{\theta(1-\theta)}}\ge 0+\frac{\theta(\beta-\alpha)+\alpha}{\theta(1-\theta)}\ge\frac{\theta(\beta-\alpha)+\alpha-\beta}{\theta(1-\theta)}=\frac{\alpha-\beta}{\theta} { > |\alpha - \beta|}
\end{align*}
and
\begin{align*}
      {\frac{(\beta-\alpha)^2}{\theta(\beta-\alpha)+1-\beta}+\frac{\theta(\beta-\alpha)+\alpha}{\theta(1-\theta)}}
      &= {\frac{(\alpha-\beta)^2}{1-\beta-\theta(\alpha-\beta)}}+\frac{\beta-\alpha}{1-\theta}+\frac{\alpha}{\theta(1-\theta)}\\
      &\le {\frac{(\alpha-\beta)^2}{(\alpha-\beta)-\theta(\alpha-\beta)}}+\frac{\beta-\alpha}{1-\theta}+\frac{\alpha}{\theta(1-\theta)}\\
      &=\frac{\alpha}{\theta(1-\theta)},
\end{align*}
where the inequality uses that
$1-\beta>\alpha-\beta>\theta(\alpha-\beta)$.

The lemma immediately follows.%
\end{proof}

The final lemma in this section controls the density ratio for the initial state $\mathfrak{s}$ when action 2 is chosen. Again, we use $P_\psub$ to denote $P_{\alpha,\beta,\psub}$ to keep notation compact.
\begin{lemma}\label{lm:ratio2}
For any $\psub,\psub'\in\cI_{\theta}$, we have
\[\bbE_{{s'}\sim P_0(\cdot\mid \nullstate{, 2})}\brk*{ \frac{ P_\psub(s'\mid \nullstate{, 2}) P_{\psub'}({s'}\mid \nullstate{, 2})}{P_0^2({s'}\mid \nullstate{, 2})}}=\frac{\abs{\psub\cap \psub'}}{\theta^2  S_1}.\]
\end{lemma}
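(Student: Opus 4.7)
The plan is to prove this by direct computation, since unlike \pref{lm:ratio1}, which required summing over several cases for $s' \in \{X,Y,Z\}$, here the next-state distributions are particularly simple: they are all uniform over subsets of $\cS^1$. I would start by recalling from \pref{sec:con} (and the reference measure definition in \pref{sec:reference}) that
\[
P_0(s'\mid\nullstate,2)=\tfrac{1}{S_1}\,\ind\{s'\in\cS^1\},\quad P_\psub(s'\mid\nullstate,2)=\tfrac{1}{\theta S_1}\,\ind\{s'\in \psub\},\quad P_{\psub'}(s'\mid\nullstate,2)=\tfrac{1}{\theta S_1}\,\ind\{s'\in \psub'\},
\]
so that the integrand $P_\psub(s'\mid\nullstate,2)P_{\psub'}(s'\mid\nullstate,2)/P_0^2(s'\mid\nullstate,2)$ is nonzero only when $s'\in\psub\cap\psub'$, in which case it takes the constant value $(1/(\theta S_1))^2/(1/S_1)^2=1/\theta^2$.

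Next I would expand the expectation as a sum over $\cS^1$ with weights $P_0(s'\mid\nullstate,2)=1/S_1$:
\[
\bbE_{s'\sim P_0(\cdot\mid\nullstate,2)}\!\brk*{\frac{P_\psub(s'\mid\nullstate,2)P_{\psub'}(s'\mid\nullstate,2)}{P_0^2(s'\mid\nullstate,2)}} =\sum_{s'\in\cS^1}\frac{1}{S_1}\cdot\frac{P_\psub(s'\mid\nullstate,2)P_{\psub'}(s'\mid\nullstate,2)}{P_0^2(s'\mid\nullstate,2)}.
\]
Restricting the sum to $s'\in\psub\cap\psub'$ (the only states contributing nonzero terms) and substituting the constant value $1/\theta^2$ computed above yields $\sum_{s'\in\psub\cap\psub'}\frac{1}{S_1}\cdot\frac{1}{\theta^2}=\frac{\abs{\psub\cap\psub'}}{\theta^2 S_1}$, which is the desired identity.

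There is essentially no obstacle here: the lemma is a one-line calculation once the three uniform distributions are written out, and its role is purely to provide the complement to \pref{lm:ratio1}, which handles transitions out of $\cS^1$. The mildly interesting observation is that the ``$+1$'' constant that appeared in \pref{lm:ratio1} is absent here, because $P_0(\cdot\mid\nullstate,2)$ already has the same support as $P_\psub(\cdot\mid\nullstate,2)$ only up to a factor of $1/\theta$, so the density ratio is $1/\theta^2$ on the intersection and zero elsewhere, producing a single combinatorial term $\abs{\psub\cap\psub'}/(\theta^2 S_1)$ rather than an affine function of it.
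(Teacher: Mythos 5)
Your proof is correct and follows essentially the same route as the paper: both write out the three uniform distributions, observe that the density ratio equals $\ind\{s'\in\psub\cap\psub'\}/\theta^2$, and take the expectation under $\mathrm{Unif}(\cS^1)$ to get $\abs{\psub\cap\psub'}/(\theta^2 S_1)$. No issues.
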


\begin{proof}
Let $\psub,\psub'\in\cI_{\theta}$ be given and observe that
\begin{align*}
    \bbE_{s'\sim P_0(\cdot\mid \nullstate{, 2})}\brk*{ \frac{ P_\psub(s'\mid \nullstate{, 2}) \times P_{\psub'}({s'}\mid \nullstate{, 2})}{P_0^2({s'\mid \nullstate{, 2}})}}
    &=\bbE_{s'\sim {\rm Unif}(\cS^1)}\brk*{ \frac{\Ind_{\crl{{s'}\in \psub\cap \psub'}}}{\theta^2}}
    =\frac{\abs{\psub\cap \psub'}}{\theta^2  S_1}.\tag*\qedhere
\end{align*}
\end{proof}

\subsection{Completing the Proof}\label{sec:calc}

To keep notation compact, define
\[
g_{\theta,\alpha,\beta}(t; n):=\prn*{\prn*{\frac{t}{\theta^2 S_{{1}}}-1}\frac{8\phi_{\theta,\alpha,\beta}+1}{16}+1}^n.
\]
For all $M\in\cM$, $P_M(\cdot\mid s,a)$ and
$P_0(\cdot\mid s,a)$ differ only when $(s,a)=(\nullstate, 2)$ or
$(s,a)\in\cS^1\times\cA$, so---recalling the value of $\mu$---we have
\begin{align*}
&\frac{1}{\abs{\cM_1}^2}\sum_{M,M'\in\cM_{1}}\prn*{\bbE_{\substack{(s,a)\sim\mu,~~~~~\\s'\sim P_0(\cdot\mid s,a)}}\brk*{{\frac{P_M(s'\mid s,a)P_{M'}(s'\mid s,a)}{P_0^2(s'\mid s,a)}}}}^n\\
\arxiv{
&=\frac{1}{\abs{\cM_1}^2}\sum_{M,M'\in\cM_{1}}\prn*{\frac{1}{2}\bbE_{\substack{s\sim\textrm{Unif}(\cS^1),\\s'\sim P_0(\cdot\mid s{,\initac})}}\brk*{\frac{P_{M}(s'\mid s{,\initac}) P_{M'}(s'\mid s{,\initac})}{P_0^2(s'\mid s{,\initac})}}+\frac{1}{16}\bbE_{{s'}\sim P_0(\cdot\mid \nullstate{, 2})}\brk*{ \frac{ P_{M}(s'\mid \nullstate{, 2}) P_{M'}({s'}\mid \nullstate{, 2})}{P_0^2({s'}\mid \nullstate{,2})}}+\frac{7}{16}}^n\\}
\colt{
  &=\frac{1}{\abs{\cM_1}^2}\sum_{M,M'\in\cM_{1}}\left(\frac{1}{4}\bbE_{\substack{s\sim\textrm{Unif}(\cS^1),\\s'\sim P_0(\cdot\mid s{,2})}}\brk*{\frac{P_{M}(s'\mid s{,2}) P_{M'}(s'\mid s{,2})}{P_0^2(s'\mid s{,2})}}\right.\\
  &~~~~~~~~~~~~~~~~~~~~~~~~~~~~~~~~~\left.+\frac{1}{8}\bbE_{{s'}\sim P_0(\cdot\mid \nullstate{, \initac})}\brk*{ \frac{ P_{M}(s'\mid \nullstate{, \initac}) P_{M'}({s'}\mid \nullstate{, \initac})}{P_0^2({s'}\mid \nullstate{,2})}}+\frac{5}{8}\right)^n\\
  }
&=\frac{1}{{ S_1 \choose \theta_1  S_1}^2 }
          \sum_{t}\sum_{\psub,\psub'\in\cI_{\theta_1}:\abs{\psub\cap \psub'}=t}\prn*{\frac{1}{2}\prn*{\prn*{\frac{t}{\theta_1^2  S_1}-1}\phi_{\theta_1,\alpha_1,\beta_1}+1}+\frac{1}{16}\frac{t}{\theta_1^2 S_1}+\frac{7}{16}}^n,
\end{align*}
where we have used the expressions for the density ratio from \cref{lm:ratio1,lm:ratio2}. We
further simplify to
\begin{align*}
&=\frac{1}{{ S_1 \choose \theta_1  S_1}^2 } \sum_{t}\sum_{\psub,\psub'\in\cI_{\theta_1}:\abs{\psub\cap \psub'}=t}\prn*{{\prn*{\frac{t}{\theta_1^2  S_1}-1}\frac{8\phi_{\theta_1,\alpha_1,\beta_1}+1}{16}+1}}^n\\
&=\sum_{t=(2\theta_1-1)_+  S_1}^{\theta_1  S_1}\frac{{\theta_1  S_1 \choose t}{ S_1-\theta_1  S_1 \choose \theta_1  S_1 - t}}{{ S_1 \choose \theta  S_1}}\prn*{\prn*{\frac{t}{\theta_1^2  S_1}-1}\frac{8\phi_{\theta_1,\alpha_1,\beta_1}+1}{16}+1}^n   \notag\\
  &=\sum_{t=(2\theta_1-1)_+  S_1}^{\theta_1  S_1}\frac{{\theta_1  S_1 \choose t}{ S_1-\theta_1  S_1 \choose \theta_1  S_1 - t}}{{ S_1 \choose \theta_1  S_1}}g_{\theta_1,\alpha_1,\beta_1}(t; n),
\end{align*}
where the second equality uses \pref{lm:cap-bound}. Applying the
same calculation for $\cM_2$, we also have that
\begin{align*}
&\frac{1}{\abs{\cM_2}^2}\sum_{M,M'\in\cM_{2}}\prn*{\bbE_{(s,a)\sim\mu,s'\sim P_0(\cdot\mid s,a)}\brk*{{\frac{P_M(s'\mid s,a)P_{M'}(s'\mid s,a)}{P_0^2(s'\mid s,a)}}}}^n\\
&=\sum_{t=(2\theta_2-1)_+  S_1}^{\theta_2  S_1}\frac{{\theta_2  S_1 \choose t}{ S_1-\theta_2  S_1 \choose \theta_2  S_1 - t}}{{ S_1 \choose \theta_2  S_1}}g_{\theta_2,\alpha_2,\beta_2}(t; n).
\end{align*}
Therefore, to upper bound the right-hand sides of \cref{eq:simplify1,eq:simplify2}, we only need to upper bound the quantity
\begin{equation}\label{eq:likelihood}
\sum_{t=(2\theta-1)_+  S_1}^{\theta  S_1}\frac{{\theta  S_1 \choose t}{ S_1-\theta  S_1 \choose \theta  S_1 - t}}{{ S_1 \choose \theta  S_1}}g_{\theta,\alpha,\beta}(t; n),
\end{equation}
for both $(\theta,\alpha,\beta)=(\theta_1,\alpha_1,\beta_1)$ and
$(\theta,\alpha,\beta)=(\theta_2,\alpha_2,\beta_2)$. To upper bound
this quantity, we use the following two
lemmas.
\begin{lemma}[Monotonicity of $g_{\theta,\alpha,\beta}$]\label{lm:monotone}
For any $\theta,\alpha,\beta\in(0,1)$ and any $n\in\bbN$, the function
$t\mapsto{}g_{\theta,\alpha,\beta}(t; n)$ is non-decreasing for $t\in [(2\theta-1)_+S_1,\theta S_1]$. 
\end{lemma}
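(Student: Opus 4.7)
The function at stake can be written as $g_{\theta,\alpha,\beta}(t;n) = B(t)^n$ where
\[
B(t) := \left(\frac{t}{\theta^2 S_1} - 1\right)\frac{8\phi_{\theta,\alpha,\beta}+1}{16} + 1.
\]
My plan is to show two things: $B$ is a non-decreasing affine function of $t$, and $B(t)\geq 0$ throughout $[(2\theta-1)_+ S_1,\theta S_1]$. Monotonicity of $g_{\theta,\alpha,\beta}(\cdot;n) = B(\cdot)^n$ then follows at once since $x \mapsto x^n$ is non-decreasing on $[0,\infty)$ for every $n\in\bbN$.

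For the affine slope, I would note by inspection that $B'(t) = \frac{8\phi_{\theta,\alpha,\beta}+1}{16\theta^2 S_1}$, which is strictly positive since $\phi_{\theta,\alpha,\beta}\geq \theta^2|\alpha-\beta|\geq 0$ by Lemma~\ref{lm:phi-bound}. Since $B$ is increasing, I would then only need to verify $B\bigl((2\theta-1)_+ S_1\bigr) \geq 0$, for which I would split on whether $\theta \leq 1/2$ or $\theta > 1/2$.

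In the case $\theta \leq 1/2$, the left endpoint is $0$, and non-negativity reduces to $8\phi_{\theta,\alpha,\beta}+1 \leq 16$. Lemma~\ref{lm:phi-bound} gives $\phi_{\theta,\alpha,\beta} \leq \theta/(1-\theta) \leq 1$, hence $8\phi_{\theta,\alpha,\beta}+1 \leq 9 < 16$, which is more than enough. In the case $\theta > 1/2$, I would compute $\frac{(2\theta-1)S_1}{\theta^2 S_1}-1 = -\frac{(1-\theta)^2}{\theta^2}$, so non-negativity rearranges to $(1-\theta)^2(8\phi_{\theta,\alpha,\beta}+1)\leq 16\theta^2$. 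Applying $\phi_{\theta,\alpha,\beta}\leq\theta/(1-\theta)$ from Lemma~\ref{lm:phi-bound} once more, it suffices to show $8\theta(1-\theta)+(1-\theta)^2 \leq 16\theta^2$, which after expansion becomes the quadratic inequality $23\theta^2 - 6\theta-1\geq 0$; the positive root is $(3+4\sqrt{2})/23 \approx 0.376$, so the inequality holds comfortably for $\theta > 1/2$.

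The hard part is really just the sign-checking in the case $\theta > 1/2$: the base $B(t)$ genuinely dips below $1$ near the left endpoint, and staying above $0$ requires the sharp upper bound $\phi_{\theta,\alpha,\beta}\leq\theta/(1-\theta)$ from Lemma~\ref{lm:phi-bound}. Everything else is a one-line observation about affine functions and monomial powers.
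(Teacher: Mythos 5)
Your proof is correct, and it reaches the non-negativity of the base $B(t)$ by a genuinely different route than the paper. The paper avoids all case analysis by invoking \cref{lm:ratio1}: the quantity $\prn[\big]{\tfrac{t}{\theta^2 S_1}-1}\phi_{\theta,\alpha,\beta}+1$ equals an expectation of a non-negative density ratio, hence is non-negative on the whole range of achievable $t$, and the base is then rewritten as
\[
\prn*{\frac{t}{\theta^2 S_1}-1}\frac{8\phi_{\theta,\alpha,\beta}+1}{16}+1
=\frac{1}{2}\prn*{\prn*{\frac{t}{\theta^2 S_1}-1}\phi_{\theta,\alpha,\beta}+1}+\frac{1}{16}\frac{t}{\theta^2 S_1}+\frac{7}{16},
\]
a sum of three non-negative terms. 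You instead verify $B\bigl((2\theta-1)_+S_1\bigr)\ge 0$ by direct computation, splitting on $\theta\le 1/2$ versus $\theta>1/2$ and feeding in the upper bound $\phi_{\theta,\alpha,\beta}\le\theta/(1-\theta)$ from \cref{lm:phi-bound}; your algebra checks out in both cases (in particular the reduction to $23\theta^2-6\theta-1\ge 0$ and its positive root $(3+4\sqrt{2})/23\approx 0.376<1/2$ are correct). What the paper's route buys is brevity and no case split, at the cost of leaning on the probabilistic interpretation from \cref{lm:ratio1} (and implicitly on the fact that the endpoints of the interval are realized as $\abs{\psub\cap\psub'}$ for some pair of subsets, so that affineness extends non-negativity to the whole interval). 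What your route buys is self-containedness: it uses only the explicit bounds of \cref{lm:phi-bound} and elementary algebra, and it makes transparent exactly how much slack there is in the constant $16$. Both arguments establish the lemma for all $\theta\in(0,1)$, not just the values $\theta_1=1/2$, $\theta_2=1/4$ used downstream.
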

\begin{proof}
By \cref{lm:ratio1}, we have $\prn*{\frac{t}{\theta^2
    S_1}-1}\phi_{\theta,\alpha,\beta}+1\ge0$ for all $t\in
[(2\theta-1)_+S_1,\theta S_1]$, and hence \[\prn*{\frac{t}{\theta^2
      S_1}-1}\frac{8\phi_{\theta,\alpha,\beta}+1}{16}+1=\frac{1}{2}\prn*{\prn*{\frac{t}{\theta^2
        S_1}-1}\phi_{\theta,\alpha,\beta}+1}+\frac{1}{16}\frac{t}{\theta^2
    S_1}+\frac{7}{16}\ge0\]  for all $t\in [(2\theta-1)_+S_1,\theta
S_1]$. This ensures that we are in the domain where $x\mapsto{}x^{n}$
is non-decreasing. Next, by \cref{lm:phi-bound}, we know that
$\phi_{\theta,\alpha,\beta}\ge0$, so the coefficient on $t$ is
non-negative. It follows that $g_{\theta,\alpha,\beta}(t; n)$ is non-decreasing in $t\in [(2\theta-1)_+S_1,\theta S_1]$. 
\end{proof}

\begin{lemma}[Hypergeometric tail bound]\label{lm:hypergeo}
For any {$\theta\in\crl{\theta_1,\theta_2}$} and $\epsilon\in(0,\theta^2 S_1)$, we have
\begin{align}\label{eq:hypergeo}
\sum_{t \geq (\theta+\epsilon) \cdot \theta S_1} \frac{ {\theta S_1 \choose t}{S_1 - \theta S_1 \choose \theta S_1 - t}}{{S_1 \choose \theta S_1}} \leq \exp(-2\epsilon^2 \theta S_1).
\end{align}
\end{lemma}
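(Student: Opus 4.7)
The plan is to recognize the left-hand side as the upper tail probability of a hypergeometric random variable and then invoke a standard concentration inequality. Concretely, let $T$ denote a random variable with the hypergeometric distribution corresponding to drawing $\theta S_1$ items (without replacement) from a population of size $S_1$ containing $\theta S_1$ ``successes.'' Then
\[
\Pr(T = t) = \frac{\binom{\theta S_1}{t}\binom{S_1 - \theta S_1}{\theta S_1 - t}}{\binom{S_1}{\theta S_1}},
\]
so the sum on the left-hand side of \cref{eq:hypergeo} is exactly $\Pr\bigl(T \geq (\theta + \epsilon) \cdot \theta S_1\bigr)$. The mean is $\En[T] = \theta S_1 \cdot (\theta S_1)/S_1 = \theta^2 S_1$, and the condition $T \geq (\theta + \epsilon)\theta S_1$ is the event $T/(\theta S_1) - \theta \geq \epsilon$, i.e., the sample proportion exceeds the population proportion by at least $\epsilon$.

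Next I would invoke Hoeffding's inequality for sampling without replacement (Hoeffding 1963), which states that for a hypergeometric $T$ with $n$ draws from a population of size $N$ with $K$ successes,
\[
\Pr\!\prn*{\tfrac{T}{n} - \tfrac{K}{N} \geq \epsilon} \;\leq\; \exp(-2n\epsilon^2).
\]
In our setting $n = \theta S_1$, $K/N = \theta$, so the right-hand side becomes $\exp(-2 \theta S_1 \epsilon^2)$, matching the bound claimed in \cref{eq:hypergeo}. The hypothesis $\epsilon \in (0, \theta^2 S_1)$ is not truly binding for this argument (the inequality holds trivially once $\epsilon$ exceeds the support), and the restriction to $\theta \in \crl{\theta_1,\theta_2}$ is only used to ensure that $\theta S_1$ is an integer so that the hypergeometric distribution is well-defined.

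There is no substantive obstacle here: the statement is an immediate application of a textbook concentration bound once the hypergeometric identification is made. If a self-contained proof is preferred in lieu of citing Hoeffding's inequality, one can obtain the same bound via the classical coupling/majorization argument showing that sampling without replacement is sub-Gaussian with the same variance proxy as sampling with replacement, and then apply the standard binomial Chernoff--Hoeffding bound $\Pr(\hat p - p \geq \epsilon) \leq \exp(-2n\epsilon^2)$ to the coupled Bernoulli sum.
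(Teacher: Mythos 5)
Your proof is correct and follows essentially the same route as the paper: identify the summand as the probability mass function of $\mathrm{Hyper}(\theta S_1, S_1, \theta S_1)$ with mean $\theta^2 S_1$, and apply Hoeffding's (1963) tail bound for sampling without replacement with $N' = \theta S_1$ draws and population proportion $p = \theta$. Your side remarks about the role of the hypotheses (integrality of $\theta S_1$ and the range of $\epsilon$) are also consistent with how the paper uses them.
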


\begin{proof}
Let {$\mathrm{Hyper}(t; K,N,N'):={K \choose t}{N-K \choose N'-t}/{N \choose N'}$} denote the hypergeometric probability mass
function, which corresponds to the probability that exactly $t$ balls are blue when $N'$ balls are
sampled without replacement from a jar containing $N$ total balls, $K$
of which are blue {(see, e.g., Chapter 2.1.4 of \cite{rice2006mathematical} for background)}.
We observe that the term $\frac{{\theta S_1 \choose
    t}{S_1-\theta S_1 \choose \theta S_1 - t}}{{S_1 \choose \theta
    S_1}}$ arising in \cref{eq:likelihood,eq:hypergeo} is precisely
$\mathrm{Hyper}(t; \theta S_1, S_1, \theta S_1)$, which corresponds to
the process in which we sample $\theta S_1$ balls without replacement from a jar with $S_1$ balls, $\theta S_1$ of which are blue.

We now apply a classical tail bound for hypergeometric random
variables.
\begin{lemma}[\cite{hoeffding1963probability}]
  Let $X\sim\mathrm{Hyper}(K,N,N')$ and define $p = K/N$. Then for any
  $0 < \epsilon < pN'$, we have
  \begin{align*}
    \Pr\brk{X \geq (p+\epsilon )N' }\leq \exp\prn*{-2\epsilon^2 N'}.
  \end{align*}
\end{lemma}

Instantiating this bound with $\mathrm{Hyper}(\theta S_1, S_1, \theta
S_1)$ (since $\theta{}S_1$ is an integer), we have $p = \theta$ and 
\begin{align*}
  \sum_{t \geq (\theta+\epsilon) \cdot \theta S_1} \frac{ {\theta S_1
  \choose t}{S_1 - \theta S_1 \choose \theta S_1 - t}}{{S_1
  \choose \theta S_1}}= \Pr\brk{X \geq (\theta+\epsilon )\cdot\theta{}S_1 } \leq \exp(-2\epsilon^2 \theta S_1).\tag*\qedhere
\end{align*}
\end{proof}

Returning to the quantity in \cref{eq:likelihood}, for any
{$(\theta,\alpha,\beta)\in\crl{(\theta_1,\alpha_1,\beta),(\theta_2,\alpha_2,\beta_2)}$ and any}
$\epsilon\in(0,\theta^2 S_1)$ we can split the sum and upper bound as follows:
\begin{align}\label{eq:truncate}
& \sum_{t=(2\theta-1)_+ S_1}^{\theta S_1} \frac{ {\theta S_1 \choose t} {S_1 - \theta S_1 \choose \theta S_1-t}}{{S_1 \choose \theta S_1}} g_{\theta,\alpha,\beta}(t;n)  \notag\\
& \leq \sum_{t=0}^{\floor{(\theta+\epsilon)\theta S_1}}\frac{ {\theta S_1 \choose t} {S_1 - \theta S_1 \choose \theta S_1-t}}{{S_1 \choose \theta S_1}} g_{\theta,\alpha,\beta}(t;n)  + \exp(-2\epsilon^2 \theta S_1)\cdot g_{\theta,\alpha,\beta}(\theta S_1; n)   \notag\\
& \le \prn*{\sum_{t=0}^{\floor{(\theta+\epsilon)\theta S_1}}\frac{ {\theta S_1 \choose t} {S_1 - \theta S_1 \choose \theta S_1-t}}{{S_1 \choose \theta S_1}}} g_{\theta,\alpha,\beta}((\theta+\epsilon)\theta S_1;n) + \exp(-2\epsilon^2 \theta S_1)\cdot g_{\theta,\alpha,\beta}(\theta S_1; n)   \notag\\
&\leq{}g_{\theta,\alpha,\beta}((\theta+\epsilon)\theta S_1;n) + \exp(-2\epsilon^2 \theta S_1)\cdot g_{\theta,\alpha,\beta}(\theta S_1; n),
\end{align}
{where the first two inequalities follow from
  \cref{lm:monotone,lm:hypergeo} and the last uses that the sum in the
  penultimate line is at most $1$.}
We further calculate
\begin{align}\label{eq:first}
    g_{\theta,\alpha,\beta}((\theta+\epsilon)\theta S_1;n)&=\prn*{\prn*{\frac{(\theta+\epsilon)\theta S_1}{\theta^2 S_1}-1}\frac{8\phi_{\theta,\alpha,\beta}+1}{16}+1}^n\notag\\
    &=\prn*{\frac{\eps}{\theta}{\frac{8\phi_{\theta,\alpha,\beta}+1}{16}}+1}^n\ \notag\\
    &\le \prn*{{\frac{\epsilon}{2(1-\theta)\theta}}+1}^n,
\end{align}
where the inequality follows from \cref{lm:phi-bound}. Similarly, we have
\begin{align}\label{eq:second}
    \exp(-2\epsilon^2 \theta S_1)\cdot g_{\theta,\alpha,\beta}(\theta S_1; n)&=\exp(-2\epsilon^2 \theta S_1)\cdot\prn*{\prn*{\frac{\theta S_1}{\theta^2 S_1}-1}\frac{8\phi_{\theta,\alpha,\beta}+1}{16}+1}^n \notag\\
&\le \exp\prn*{-2\epsilon^2 \theta S_1 }\cdot  \prn*{\prn*{\frac{1}{\theta}-1}\frac{8\theta/(1-\theta)+1}{16}+1}^n \notag\\
&\le \exp\prn*{-2\epsilon^2 \theta S_1 }\cdot  \prn*{1+\frac{1}{2\theta}}^n \notag\\
&= \exp\prn*{n\ln(1+1/(2\theta))-2\eps^2\theta S_1} \notag\\
&\le\exp\prn*{n/(2\theta)-2\eps^2\theta S_1},
\end{align}
where the first inequality follows from \cref{lm:phi-bound} {and the
  last inequality uses that $\log(1+x) \leq x$.}

Combining
\cref{eq:simplify1,eq:simplify2,eq:likelihood,eq:truncate,eq:first,eq:second}
and instantiating the bounds for $(\theta_1,\alpha_1,\beta_1)$ and
$(\theta_2,\alpha_2,\beta_2)$, 
we have
\begin{equation*}
D_{\chi^2}(\bbP_{n}^1\dmid\bbP_{n}^0)\le\inf_{\epsilon\in(0,\theta_1^2 S_1)}\crl*{\prn*{{\frac{\epsilon}{2(1-\theta_1)\theta_1}}+1}^n+\exp\prn*{n/(2\theta_1)-2\eps^2\theta_1 S_1}}-1.
\end{equation*}
\begin{equation*}
D_{\chi^2}(\bbP_{n}^2\dmid\bbP_{n}^0)\le\inf_{\epsilon\in(0,\theta_2^2 S_1)}\crl*{\prn*{{\frac{\epsilon}{2(1-\theta_2)\theta_2}}+1}^n+\exp\prn*{n/(2\theta_2)-2\eps^2\theta_2 S_1}}-1.
\end{equation*}
Let $c\in(0,1/2)$ be an arbitrary constant. For each $i\in\crl{1,2}$, we set $\eps=2c\cdot \frac{(1-\theta_i)\theta_i}{n}$ (which belongs to $(0,\theta_i^2 S_1)$ {because $\epsilon < \theta_i$ since $n \geq 1$ and $\theta_iS_1\geq 1$ by assumption}). Then we have
\[
\prn*{{\frac{\epsilon}{2(1-\theta_i)\theta_i}}+1}^n\le\prn*{1+\frac{c}{n}}^n\le e^c\le 1+2c,~~~\forall i\in\crl{1,2},
\]
and
\[
D_{\chi^2}(\bbP_{n}^i\dmid\bbP_{n}^0)\le2c+\exp\prn*{\frac{n}{2\theta_i}-8c^2\theta_i {\frac{(1-\theta_i)^2\theta_i^2}{n^2}}S_1},~~~\forall i\in\crl{1,2}.
\]
In particular, whenever
$S_1\ge\max_{i\in\crl{1,2}}\frac{n^3}{8c^2\theta_i^4(1-\theta_i)^2}$, we have
\[
D_{\chi^2}(\bbP_{n}^i\dmid\bbP_{n}^0)\le2c+\exp\prn*{-n/(2\theta_i)},~~~\forall i\in\crl{1,2}.
\]

Plugging in the values $\theta_1=1/2$, $\theta_2=1/4$ and setting $c=1/10$, we have that whenever $n\ge5$ and $S_1>6400n^3$, 
\[
D_{\chi^2}(\bbP_{n}^i\dmid\bbP_{n}^0)\le\frac{1}{5}+\exp\prn*{-n}\le\frac{1}{4},~~~\forall i\in\crl{1,2}.
\]

{Combining this with~\cref{eq:tv_triangle}, we have that
  $\Dtv{\bbP_{n}^1}{\bbP_{n}^2} \leq \sqrt{1/4} = 1/2$, which
  proves the lemma.}

\qed

\colt{
\section{Extensions of Theorem \ref*{thm:main}}
\label{sec:extensions}

\arxiv{\pref{thm:main} presents the simplest variant of our lower
bound for clarity of exposition. In what follows we sketch some
straightforward extensions.}
\begin{itemize}

\arxiv{\item \emph{Policy evaluation.}}
\colt{\paragraph{Policy evaluation}}
  Our lower bound immediately extends
  from policy optimization to policy evaluation. Indeed, letting
  $\pistar_1$ and $\pistar_2$ denote the optimal policies for $\cM_1$
  and $\cM_2$ respectively, we have
  $\abs{J_M(\pistar_1)-J_M(\pistar_2)}\propto\frac{\gamma^2}{1-\gamma}$
  for all $M\in\cM$, and we know that $J_{M}(\pistar_1)$ is constant across all $M\in\cM$. {It follows that any algorithm which evaluates policy $\pi_2^\star$ to
  precision $\veps\cdot{}\frac{\gamma^2}{1-\gamma}$ with probability at
  least $1-\delta$ for sufficiently small 
  numerical constants $\veps,\delta>0$ can be used to select the optimal
  policy with probability $(1-\delta)$, and thus guarantee
  $J(\pi^\star) - \En\brk*{J(\hat{\pi})} \lesssim
  \delta\frac{\gamma^2}{1-\gamma}$. Hence, such an algorithm
  must use $n=\Omega(\abs{\cS}^{1/3})$ samples by our policy optimization lower bound}.

  To formally cast this setup in the policy evaluation setting, we
  take $\Pi=\crl*{\pistar_2}$ as the class of policies to be
  evaluated, and we require a value function class $\cF$ such that
  $Q^{\pi}_{M}\in\cM$ for all $\pi\in\Pi$, $M\in\cM$. By \cref{prop:value_calculation}, it suffices to
    select $\cF=\crl[\big]{f_1,f_2}$.

  \arxiv{\item \emph{Learning an $\veps$-suboptimal policy.}}
  \colt{\paragraph{Learning an $\veps$-suboptimal policy}}
  \pref{thm:main}
  shows that for any $\gamma\in(1/2,1)$, $n\approxgeq{}S^{1/3}$ samples are required to learn a
  $\gammaconst$-optimal policy. We can extend the construction to show
  that more generally, for any $\veps\in(0,1)$, $n\approxgeq{}\frac{S^{1/3}}{\veps}$
  samples are required to learn an $\veps\cdot\gammaconst$-optimal
  policy. We modify the MDP family $M_{\alpha,\beta,w,\psub}$ by adding
  a single dummy state $\term$ with a self-loop and zero reward. The
  initial state distribution is changed so that $d_0(\term)=1-\veps$
  and $d_1(\init)=\veps$. That is, with probability $1-\veps$,
  the agent begins in $\term$ and stays there forever, collecting no
  reward, and otherwise the agent begins at $\init$ and proceeds as in the
  original construction. Analogously, we replace the original data
  distribution $\mu$ with
  $\mu'\ldef{}(1-\veps)\delta_{\term}+\veps\mu$, where
  $\delta_{\term}$ is a point mass on $\term$. This preserves the
  concentrability bound $\Conc\leq{}16$.
  This modification rescales the optimal value functions, and the conclusion of \pref{lm:reduction} is replaced by
  \[
\sup_{M\in\cM}\crl*{J_{M}(\pi_M^\star)-\bbE^{M}_{n}\brk*{J_{M}(\wh{\pi}_{D_n})}}\ge\veps\cdot{}\frac{\gamma^2}{16(1-\gamma)}\prn*{1-\Dtv{\bbP_{n}^1}{\bbP_{n}^2}}.
\]
On the other hand, since samples from the state $\term$ provide no
information about the underlying instance, the effective number of
samples is reduced to $\veps{}n$. One can make this intuition
precise and prove that $\Dtv{\bbP_{n}^1}{\bbP_{n}^2}\le 3/4$ whenever
$\veps{}n\leq{}c\cdot{}S^{1/3}$ for a numerical constant
$c$. Combining this with the previous bound yields the result.

\arxiv{\item \emph{Linear function approximation.} } As
  discussed above,~\pref{thm:main} can be viewed as a special case of
  linear function approximation with $d=2$ and $\phi(s,a) = (f_1(s,a),
  f_2(s,a))$. Compared with recent lower bounds in the linear
  setting~\citep{wang2020statistical,zanette2021exponential}, this
  result is significantly stronger in that (a) it considers a stronger
  coverage condition, (b) holds with constant dimension and constant
  effective horizon, and (c) scales with the number of states, which
  can be arbitrarily large.
\end{itemize}

\arxiv{
Lastly, it should be clear at this point that our lower
bound construction extends to the finite-horizon setting
with $H=3$ by simply removing the self-loops from the terminal
states. The only difference is that the optimal Q-value functions require a new
calculation since rewards are no longer discounted.
}

}

\part{Proofs for Theorem \ref*{thm:admissible}}

\section{Theorem \ref*{thm:admissible}: Lower Bound Construction and Proof}
\label{app:admissible}

We restate \pref{thm:admissible} below for convenience.

\admissible*

\subsection{Lower Bound Construction}\label{sec:construction+}
We begin by specifying the structure of the MDPs in the family $\cM$
used to prove \pref{thm:admissible}. Let $\gamma\in(0,1)$ be fixed, and let $S\in\bbN$ be
given. Let $L\in\bbN$ be an integer parameter whose value will be
chosen at the end of the proof  (\cref{sec:admproof}). Define $L_{\rm
  div}:=\sum_{l=1}^L(2L+1-l)(L+2-l)\le 4L^3$, and assume without loss of generality that $S>5$ and that
$(S-5)/L_{\rm div}$ is an integer.\footnote{If $(S-5)/L_{\rm div}$ is
  not an integer, then we can simply construct the MDPs using
  $\underline{S}:=\floor{(S-5)/L_{\rm div}}L_{\rm div}+5$ states and
  then add $S-\underline{S}$ arbitrary states that are not reachable
  by any policy. Since we are considering the case where $\mu$ is
  admissible, those non-reachable states do not affect the sample
  complexity of any algorithm (as they do not affect $D_n$ at all). It
  is easy to show that the conclusion of \cref{thm:admissible} still
  holds. %
  } 
We consider a
parameterized class of MDPs illustrated in \cref{fig:adm-illustration-3}. Each MDP takes the form
$M_{L,\alpha,w,\mb{\psub}}=\crl{\cS,\cA,P_{L,\alpha,\mb{\psub}},R_{L,\alpha,w},\gamma,d_0}$,
and is parametrized by the integer $L\in\bbN$, a vector of subsets
$\mb{\psub}=(\psub^1,\dots,\psub^L)$ where 
$\psub^{l}\subseteq\cS$, and scalars {$\alpha\in(0,1/L)$} and $w\in[0,1]$.  %
All MDPs in the family $\crl{M_{L,\alpha,w,\mb{\psub}}}$ share the same state space $\cS$, action
space $\cA$,  discount factor $\gamma$, and initial state
distribution $\dnot$, and differ only in terms of the transition function $P_{L,\alpha,\mb{\psub}}$ and the reward function $R_{L,\alpha,w}$.

\paragraph{State space} We consider a layered\footnote{\label{fn:layer}Importantly, one should distinguish the concept of ``layer'' (which we use to simply refer to a group of states) and the concept of ``time step'' (which indexes the sequential evolution of the MDP). A state in layer $l\in[L]$ may be reached in any time step. For example, in \cref{fig:adm-illustration-3}, states in $\psub^3$ (which belongs to layer 3) can be reached in both time step 1 (through the blue arrow) and time step 2 (from $\widebar{\psub}^2$), but cannot be reached in time step 3.} state space
$\cS=\{\init\}\cup\cS^{1}\cup\cdots\cup\cS^L\cup\{W,X,Y,Z\}$, where
$\init$ is the initial state, $\cS^1,\dots,\cS^L$ are $L$ layers of
\emph{intermediate} (i.e., neither initial nor terminal) states, and
$\{W,X,Y,Z\}$ are self-looping terminal states. The number of
intermediate states in layer $l\in[L]$ is $S_l:=\frac{S-5}{L_{\rm
    div}}(2L+1-l)(L+2-l)$, which ensures that $\abs{\cS}=\sum_{l=1}^L
S_l+5=S$.\footnote{The precise value of $S_l$ given here is not
  essential to our proof. Its primarily serves to avoid a rounding
  issue that arises in~\cref{sec:specify+}, which can also be
  addressed through other methods.}

\paragraph{Action space} Our action space is given by
$\cA=\crl{1,2}$. For the initial state $\mathfrak{s}$, the two actions have distinct effects, while for all other states in $\cS\setminus\{\init\}$ both actions have identical effects. As a result, the value of a given
policy only depends on the action it selects in $\mathfrak{s}$. As in
the proof of \pref{thm:main}, we use the symbol $\mathfrak{a}$ as a placeholder to denote either action when taken in $s \in \cS\setminus\{\init\}$, since the choice is immaterial.

\paragraph{Transition operator} For each MDP
$\M_{L,\alpha,w,\mb{\psub}}$, recalling $\mb{\psub}=(\psub^1,\dots,\psub^L)$, we let
$\psub^l\subseteq\cS^l$ parameterize a subset of the {$l^{{\rm th}}$-layer intermediate states}. We
call each $s\in \psub^l$ an \emph{$l^{{\rm th}}$-layer planted state}
and call $s\in\widebar{I}^l:=\cS^l\setminus \psub^l$ an
\emph{$l^{{\rm th}}$-layer unplanted state}. The
dynamics $P_{L,\alpha,\mb{\psub}}$ for $M_{L,\alpha,w,\mb{\psub}}$
are determined by $L$, $\alpha\in(0,1/L)$, and $\mb{\psub}$ as
follows (cf. \pref{fig:adm-illustration-3}):
    \begin{itemize}
        \item \emph{Initial state $\init$}. For the dynamics from the
          initial state $\init$, we define
        \[
        P_{L,\alpha,\mb{\psub}}(\init,1)=\text{Unif}(\{W\}),
        \]
and          \[P_{L,\alpha,\mb{\psub}}(\init,2)=\frac{1}{2}\cdot\prn*{\sum_{l=1}^L\prn*{\frac{1}{2^l}\text{Unif}(\cS^l)}+\frac{1}{2^L}\text{Unif}(\{Z\})}+\frac{1}{2}\cdot\text{Unif}(\{X,Y\}).
          \] That is, from the
          initial state $\mathfrak{s}$, choosing action 1 always leads
          to state $W$ in the next time step (see the red arrow in
          \cref{fig:adm-illustration-3}), while choosing 2 leads to
          all states in $\cS^1\cup\cdots\cup\cS^L\cup\{X,Y,Z\}$ (i.e.,
          $\cS\setminus\{\init,W\}$) with certain probability (see the
          blue arrow in \cref{fig:adm-illustration-3}, {but note that transitions from $\mathfrak{s}$ to $\{X,Y\}$ are not displayed}). 
        \item \emph{Intermediate states}. Transitions from states in
          $\cS^1,\ldots,\cS^L$ are defined as follows.
          \begin{itemize}
            \item For each {$l^{{\rm th}}$-layer planted state} $s\in \psub^l\subseteq\cS^l$, define
            \[
            P_{L,\alpha,\mb{\psub}}(s,\initac)=\frac{\gamma^{L-l}\alpha}{1-(l-1)\alpha} \text{Unif}(\{X\})+\prn*{1-\frac{\gamma^{L-l}\alpha}{1-(l-1)\alpha}}\text{Unif}(\{Y\}).
            \]
        \item 
        For each {$l^{{\rm th}}$-layer unplanted states} $s\in \widebar{\psub}^l\subseteq\cS^l$, define
            \[
            P_{L,\alpha,\mb{\psub}}(s,\initac)=\frac{1-l\cdot\alpha}{1-(l-1)\alpha} \text{Unif}(\psub^{l+1})+{\frac{\alpha}{1-(l-1)\alpha}}\text{Unif}(\{Y\}),
            \]
            with the convention that $\psub^{L+1}\ldef\{Z\}$.
          \end{itemize}
          {Since we restrict to $\alpha \leq 1/L$, one can
            verify that these are valid probability distributions.}
        \item \emph{Terminal states}.
          All states in $\{W,X,Y,Z\}$ self-loop indefinitely. That is
          $P_{L,\alpha,\mb{\psub}}(s,{\mathfrak{a}}) =
          \text{Unif}(\{s\})$ for all $s \in \{W,X,Y,Z\}$.%
    \end{itemize}

\paragraph{Reward function} The initial and intermediate states have no reward, i.e., $R_{L,\alpha,w}(s,a)=0, \forall s\in\{\init\}\cup\cS^1\cdots\cup\cS^L,\forall
a\in\cA$. Each of
the self-looping terminal states in $\crl{W,X,Y,Z}$ has a fixed reward
determined by the parameters $L$, $\alpha$ and $w$. In particular, we
define $R_{L,\alpha,w}(W,{\mathfrak{a}})=w$, $R_{L,\alpha,w}(X,{\mathfrak{a}})=1$, $R_{L,\alpha,w}(Y,{\mathfrak{a}})=0$, and
$R_{L,\alpha,w}(Z,\mathfrak{a})=\alpha/(1-L\alpha)$.%
    
\paragraph{Initial state distribution}
All MDPs in $\{M_{L,\alpha,w,\mb{\psub}}\}$ start at $\mathfrak{s}$ deterministically
(that is, the initial state distribution $d_0$ places all its
probability mass on $\mathfrak{s}$). Since $d_0$ does not vary between instances, it should be thought of as \emph{known} to the learning algorithm.

\subsection{Specifying the MDP Family $\cM$}\label{sec:specify+}
{We leave $L\in\bbN$ (we interpret $\bbN$ to not include 0) as a free parameter until the end of \cref{app:admissible}, where we will give a concrete $L$ that leads to \cref{thm:admissible}.}
Given $L\in\bbN$, let $\alpha_1:=\frac{1}{2L}$ and
$\alpha_2:=\frac{1}{L+1}$. For $\alpha\in(0,1)$, define
\begin{equation}\label{eq:value}
    V_\alpha:=\sum_{l=1}^L\frac{1}{2^{l+1}}\frac{\gamma^{L-(l-1)}\alpha}{1-(l-1)\alpha}+\frac{1}{2^{L+1}}\frac{\alpha}{1-L\alpha}+\frac{1}{2},
\end{equation}
which has $0<V_{\alpha_1}<V_{\alpha_2}<1$, and let $w:=\frac{V_{\alpha_1}+V_{\alpha_2}}{2}$.
Define $\cI_{\mb{\theta}}:=\crl{\mb{\psub} : \abs{\psub^l}=\theta_l
  S_l}$ for any $\mb{\theta}=(\theta_1,\dots,\theta_L)\in(0,1)^L$ such
that $\theta_l S_l$ is an integer for all $l\in[L]$. We define two
sub-families of MDPs via
\begin{align*}
\cM_1:=\bigcup_{\mb{\psub}\in\cI_{\mb{\theta}\ind{1}}}\crl{M_{L,\alpha_1,w,\mb{\psub}}}, \quad \textrm{and} \quad 
\cM_2:=\bigcup_{\mb{\psub}\in\cI_{\mb{\theta}\ind{2}}}\crl{M_{L,\alpha_2,w,\mb{\psub}}}, 
\end{align*}
where $\cM_1$ is specified by
$\alpha_1$ and $\mb{\theta}\ind{1}=(\theta_{1}\ind{1},\dots,\theta_{L}\ind{1})$ with
\[\theta\ind{1}_{l}:=\frac{{\alpha_2}}{1-(l-1){\alpha_2}},~~\forall l\in[L],\] 
and $\cM_2$ is  specified by
$\alpha_2$ and $\mb{\theta}\ind{2}=(\theta_{1}\ind{2},\dots,\theta_{L}\ind{2})$ with
\[\theta_{l}\ind{2}:=\frac{{\alpha_1}}{1-(l-1){\alpha_1}},~~\forall
  l\in[L].\]
Finally, we define
the hard family $\cM$ via
\colt{
$\cM = \cM_1 \cup \cM_2$.
}
\arxiv{\[
\cM = \cM_1 \cup \cM_2.
\]}

{Note for this construction, that $\mb{\theta}\ind{1}$ is
  defined in terms of $\alpha_2$ and vice-versa, which is a crucial to
  the proof.} In addition, recall that we assume without loss of generality that $\frac{S-5}{L_{\rm div}}$ is an integer, which implies that $\theta_{l}\ind{i}S_l=\frac{S-5}{L_{\rm div}}(2L-(l-1))(L+1-(l-1))\theta_{l}\ind{i}$ is always an integer for any $i\in\{1,2\}$ and $l\in[L]$.

\subsection{Finishing the Construction: Value Functions and Data Distribution}\label{sec:finish+}

\paragraph{Value function class}
{Define functions $f_1,f_2: \cS \times \cA \to \mathbb{R}$ as
  follows, recalling that $w\ldef\frac{V_{\alpha_1}+V_{\alpha_2}}{2}$
  (differences are highlighted in blue):}
\begin{align}\label{eq:f1+}
    f_1(s,a) := \frac{1}{1-\gamma}\cdot\begin{cases}
   \gamma w,&s=\mathfrak{s},\;a=1\\
   \textcolor{blue}{\gamma V_{\alpha_1}},&s=\mathfrak{s},\;a=2\\
   \textcolor{blue}{\frac{\gamma^{L-(l-1)}\alpha_1}{1-(l-1)\alpha_1}},&{s\in\cS^l,\;l\in[L]}\\
    w,&s=W\\
    1,&s=X\\
    0,&s=Y\\
    \textcolor{blue}{\frac{\alpha_1}{1-L\alpha_1}},&s=Z\end{cases},\\
     f_2(s,a) := \frac{1}{1-\gamma}\cdot\begin{cases}
   \gamma w,&s=\mathfrak{s},\;a=1\\
   \textcolor{blue}{\gamma V_{\alpha_2}},&s=\mathfrak{s},\;a=2\\
   \textcolor{blue}{\frac{\gamma^{L-(l-1)}\alpha_2}{1-(l-1)\alpha_2}},&{s\in\cS^l,\;l\in[L]}\\
    w,&s=W\\
    1,&s=X\\
    0,&s=Y\\
    \textcolor{blue}{\frac{\alpha_2}{1-L\alpha_2}},&s=Z\end{cases}.
\end{align}
The following result is an elementary calculation. See \cref{sec:veryfication+} for a detailed calculation.
\begin{proposition}
  \label{prop:value_calculation+}For all $\pi:\cS\rightarrow\Delta(\cA)$,
  we have $Q^{\pi}_M=f_1$ for all
$M\in\cM_1$ and $Q^{\pi}_M=f_2$ for all $M\in\cM_2$.
\end{proposition}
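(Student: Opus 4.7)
The plan is to compute $Q^\pi(s,a)$ directly at every $(s,a) \in \cS \times \cA$, exploiting the key structural feature of the construction: at every non-initial state $s \in \cS \setminus \{\mathfrak{s}\}$, the two actions induce identical dynamics and rewards, so $Q^\pi(s,\mathfrak{a})$ is fully determined by the MDP alone and independent of $\pi$. This reduces the calculation to a single, non-random Bellman backup per layer, and it will then be routine to verify that the result coincides with $f_1$ for every $M\in\cM_1$ and $f_2$ for every $M\in\cM_2$.

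First I will handle the terminal states $s\in\{W,X,Y,Z\}$: each self-loops, so $V^\pi(s) = R_{L,\alpha,w}(s,\mathfrak{a})/(1-\gamma)$, giving $w/(1-\gamma)$, $1/(1-\gamma)$, $0$, and $\alpha/((1-\gamma)(1-L\alpha))$, respectively. These match the entries of $f_i$ verbatim (recall $R(Z,\mathfrak{a}) = \alpha/(1-L\alpha)$).

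Next I will compute the intermediate layers by \emph{reverse induction} on $l = L, L-1, \ldots, 1$ with the inductive hypothesis that for every $s\in\cS^l$, whether planted or unplanted,
\[
Q^\pi(s,\mathfrak{a}) = \frac{1}{1-\gamma}\cdot\frac{\gamma^{L-(l-1)}\alpha}{1-(l-1)\alpha},
\]
which is precisely the value prescribed by $f_i$. Two verifications are needed at each layer. For $s\in \psub^l$, the backup uses $V^\pi(X) = 1/(1-\gamma)$ and $V^\pi(Y)=0$ and collapses immediately to the claimed expression. For $s\in \bar\psub^l$, the backup routes through $\psub^{l+1}\subseteq \cS^{l+1}$ when $l<L$ (using the inductive hypothesis) or through $Z$ when $l=L$ (using Step 1). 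A short algebraic check using the specific split $\tfrac{1-l\alpha}{1-(l-1)\alpha}$ vs.\ $\tfrac{\alpha}{1-(l-1)\alpha}$ shows that the factor $\tfrac{1-l\alpha}{1-(l-1)\alpha}$ in the transition cancels against the denominator from the next-layer value (or, in the base case, against $1-L\alpha$ in $R(Z,\mathfrak{a})$), producing exactly the same value as the planted case. This cancellation is the heart of the construction: the parameters were engineered precisely so that planted and unplanted states in a given layer are value-indistinguishable.

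Finally I will evaluate $Q^\pi$ at $\mathfrak{s}$. For action $1$, the deterministic transition to $W$ gives $Q^\pi(\mathfrak{s},1) = \gamma V^\pi(W) = \gamma w/(1-\gamma)$. For action $2$, I will expand the mixture transition $P_{L,\alpha,\mathbf{\psub}}(\mathfrak{s},2)$, substitute the layer-$l$ value $\tfrac{\gamma^{L-(l-1)}\alpha}{(1-\gamma)(1-(l-1)\alpha)}$ for every $s'\in\cS^l$, and substitute $V^\pi(Z)$, $V^\pi(X)$, $V^\pi(Y)$ for the terminal parts; after factoring out $\gamma/(1-\gamma)$, the resulting sum is exactly the definition of $V_\alpha$ in \eqref{eq:value}, so $Q^\pi(\mathfrak{s},2) = \gamma V_\alpha/(1-\gamma)$ as claimed. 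The only place that requires genuine work is the inductive step matching planted and unplanted backups; everything else is a direct substitution, and the proof is complete once this identity is verified for both $\alpha = \alpha_1$ and $\alpha = \alpha_2$.
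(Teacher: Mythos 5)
Your proposal is correct and follows essentially the same route as the paper's proof: reduce to a $\pi$-independent computation using the fact that actions coincide outside $\init$, evaluate the self-looping terminal states directly, run a backward pass over layers $l=L,\dots,1$ in which the factor $\tfrac{1-l\alpha}{1-(l-1)\alpha}$ cancels against the next layer's (or $Z$'s) denominator so that planted and unplanted states share the same value, and finally match the mixture backup at $(\init,2)$ to $V_\alpha$. No meaningful differences from the paper's argument.
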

It follows that by choosing $\cF =
  \{f_1,f_2\}$, all-policy realizability holds for all  $M \in
  \cM$.

\paragraph{Data distribution} Recall that in the offline RL setting,
the learner is provided with an \iid dataset
$D_n=\crl{(s_i,a_i,r_i,s'_i)}_{i=1}^n$ where $(s_i,a_i)\sim \mu$,
$s_i'\sim P(\cdot\mid s_i,a_i)$, and $r_i=R(s_i,a_i)$. To ensure admissibility of $\mu$, we consider the \emph{exploratory
  policy} $\pi_0$ given by
\[
\pi_0(s)=\text{Unif}(\cA),~\forall s\in\cS.
\]
We define the data collection distribution $\mu$ via:
  \[
{\mu(s,a) := \frac{1}{2}d_0^{\pi_0}(s,a)+\frac{1}{2}d^{\pi_{0}}_1(s,a),}
\]
which, by construction, is a mixture of admissible distributions as
desired. As a reminder, we use the notation $d_h^\pi \in
\Delta(\cS\times\cA)$ to denote the occupancy measure of $\pi$ at
time step $h$, that is $d_h^\pi(s,a) := \bbP^\pi\prn*{s_h=s,a_h=a}$, where the dependence on the MDP $M$ is suppressed.

In general, this choice of $\mu$ will depend on the
underlying MDP $M\in\cM$ through $d^{\pi_0}_1$. However, for our
specific construction, we calculate that
\begin{align*}
  {\mu(\cdot,\mathfrak{a})} &= \frac{1}{2}d_0+\frac{1}{2}\prn*{\frac{1}{2}P_{L,\alpha,\mb{\psub}}(\init,1)+\frac{1}{2}P_{L,\alpha,\mb{\psub}}(\init,2)}\\
    &= \frac{1}{2}d_0+\frac{1}{4}\text{Unif}(\{W\})+\frac{1}{4}\prn*{\frac{1}{2}\cdot\prn*{\sum_{l=1}^L\prn*{\frac{1}{2^l}\text{Unif}(\cS^l)}+\frac{1}{2^L}\text{Unif}(\{Z\})}+\frac{1}{2}\cdot\text{Unif}(\{X,Y\})}\\
    &=\frac{1}{8}\prn*{\sum_{l=1}^L\prn*{\frac{1}{2^l}\text{Unif}(\cS^l)}+\frac{1}{2^L}\text{Unif}(\{Z\})}+\frac{1}{2}\text{Unif}(\{\init\})+\frac{1}{4}\text{Unif}(\{W\})+\frac{1}{8}\text{Unif}(\{X,Y\}),
\end{align*}
which is in fact independent of the choice of $M\in\cM$.

In addition, by a straightforward calculation, we see that this choice of $\mu$ leads to the following
bound on the concentrability coefficient. 
See \cref{sec:veryfication+} for a detailed calculation.
\begin{proposition}
  \label{prop:conc_calculation+}We have $\Conc\le 32L$ for all models in $\cM$.
\end{proposition}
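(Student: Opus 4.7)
The plan is to verify $\|\nu/\mu\|_\infty \leq 32L$ directly for every admissible $\nu$ by enumerating state classes and bounding the admissible numerator and the prescribed denominator separately. Because the exploratory policy $\pi_0$ is uniform over $\cA = \{1,2\}$, the data distribution factorizes as $\mu(s,a) = \tfrac{1}{2}\mu_S(s)$, so reading off the coefficients of the explicit state marginal $\mu_S$ displayed in the excerpt gives the closed forms $\mu(\init,a) = 1/4$, $\mu(W,a) = 1/8$, $\mu(X,a) = \mu(Y,a) = 1/32$, $\mu(Z,a) = 2^{-(L+4)}$, and $\mu(s,a) = 1/(2^{l+4}S_l)$ for every $s \in \cS^l$.

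Next I would upper bound $\nu(s,a) = \Pr^\pi(s_h = s, a_h = a)$ uniformly over all non-stationary policies $\pi$ and all $h$. The key structural observation is that, starting from $\init$, the chain is absorbed into $\{W,X,Y,Z\}$ within at most two transitions: a planted state in $\psub^l$ flows to $\{X,Y\}$; an unplanted state in $\widebar{\psub}^l$ with $l < L$ flows to $\psub^{l+1} \cup \{Y\}$; and a state in $\widebar{\psub}^L$ flows to $\{Z,Y\}$. Consequently each intermediate state $s \in \cS^l$ is reachable only at $h = 1$ (directly from $\init$ via action $2$) with probability $\leq 1/(2^{l+1}S_l)$, or, when $s \in \psub^l$ with $l \geq 2$, also at $h = 2$ via $\widebar{\psub}^{l-1}$ with probability $\leq 1/(2^{l}\theta_l S_l)$ (using $|\psub^l| = \theta_l S_l$ and the fact that the inner transition probabilities and the fraction of unplanted states in layer $l-1$ are each at most $1$); moreover $Z$ is reached only directly at $h = 1$ with probability $\leq 2^{-(L+1)}$, or from $\widebar{\psub}^L$ at $h = 2$ with probability $\leq (1-\theta_L)2^{-(L+1)}$, which together yield $\sup_h \Pr^\pi(s_h = Z) \leq 2^{-L}$.

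Forming ratios, I obtain $4$ at $\init$, $8$ at $W$, $32$ at $X$ and $Y$, $16$ at $Z$, and $8$ for $s \in \widebar{\psub}^l$; the only ratio scaling with $L$ comes from $s \in \psub^l$, where $\nu(s,a)/\mu(s,a) \leq 16/\theta_l$. Substituting $\theta^{(1)}_l = 1/(L+2-l)$ and $\theta^{(2)}_l = 1/(2L+1-l)$ shows $\theta_l \geq 1/(2L)$ uniformly over $l \in [L]$ and both families, so $16/\theta_l \leq 32L$, completing the bound. The main subtlety is the two-step absorption claim used to conclude that no intermediate state is visited beyond $h = 2$ and that $Z$ is unreachable beyond $h = 2$; once this is verified by direct inspection of the transition kernel, the rest reduces to routine arithmetic against the prescribed $\mu$.
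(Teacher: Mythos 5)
Your proof is correct and follows essentially the same route as the paper's: enumerate the state classes, bound the admissible occupancy of each state over all policies and time steps via the two-step absorption structure, compare against the explicit values of $\mu$, and observe that the only ratio scaling with $L$ is the $16/\theta_l$ term from planted states reached at $h=2$, which is at most $32L$ since $\theta_l \ge 1/(2L)$ for both subfamilies. If anything, you are slightly more careful than the paper at the absorbing state $Z$, where you correctly accumulate the $h=1$ and $h=2$ contributions to obtain the bound $2^{-L}$ rather than taking only the $h=1$ mass; the resulting constant still falls well within $32L$.
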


\subsection{Proof of Theorem \ref*{thm:admissible}}\label{sec:admproof}
Recall that for each $M\in\cM$, we let $\bbP^{\sM}_{n}$ denote the law of the offline
dataset $D_n$ when the underlying MDP is $M$, and we let $\bbE^{\sM}_{n}$
be the associated expectation operator.
\cref{lm:reduction+}, stated below, reduces the task of proving a policy
learning lower bound to the task of upper bounding the total variation distance between the mixture distributions $\bbP_{n}^1:=\frac{1}{\abs{\cM_1}}\sum_{M\in\cM_1}\bbP^{\sM}_{n}$ and $\bbP_{n}^2:=\frac{1}{\abs{\cM_2}}\sum_{M\in\cM_2}\bbP^{\sM}_{n}$.
\begin{lemma}\label{lm:reduction+}Consider any fixed $\gamma\in(0,1)$ and $L\in\bbN_{>1}$.  %
For any offline RL algorithm which takes $D_n=\crl{(s_i,a_i,r_i,s'_i)}_{i=1}^n$ as input and returns a stochastic policy $\wh{\pi}_{D_n}:\cS\rightarrow\Delta(\cA)$, we have
\[\sup_{M\in\cM}\crl*{J_{M}(\pi_M^\star)-\bbE^{\sM}_{n}\brk*{J_{M}(\wh{\pi}_{D_n})}}\ge\frac{\gamma^L}{16L}\frac{\gamma}{(1-\gamma)}\prn*{1-\Dtv{\bbP_{n}^1}{\bbP_{n}^2}}.\]
\end{lemma}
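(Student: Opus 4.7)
The plan is to mirror the proof of Lemma \ref{lm:reduction} from Appendix \ref{sec:reduction}, adapting each step to the multi-layer construction. First, observe that actions at states $s \neq \init$ have identical effect, and Proposition \ref{prop:value_calculation+} gives $Q^{\pi}_M = f_i$ for every $M\in\cM_i$ and every $\pi$. Thus, for any $M \in \cM_i$ (with $i \in \{1,2\}$) and any policy $\pi$,
\[
J_M(\pi^\star_M) - J_M(\pi) = Q^\star_M(\init, i) - Q^\star_M(\init, \pi(\init)) \ge \Delta_L \cdot \bbP(\pi(\init) \neq i),
\]
where $\pi^\star_M(\init) = 1$ for $M \in \cM_1$ and $\pi^\star_M(\init) = 2$ for $M \in \cM_2$ (by the choice $w = (V_{\alpha_1}+V_{\alpha_2})/2 \in (V_{\alpha_1}, V_{\alpha_2})$), and $\Delta_L := |Q^\star_M(\init,1) - Q^\star_M(\init,2)|$ is the gap at $\init$.

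By the definitions in \eqref{eq:f1+} and the choice of $w$, the gap is the same for both sub-families:
\[
\Delta_L \;=\; \frac{\gamma}{1-\gamma}\cdot\frac{V_{\alpha_2} - V_{\alpha_1}}{2}.
\]
To obtain the factor $\gamma^L/(8L)$ in the claimed bound, it suffices to show $V_{\alpha_2} - V_{\alpha_1} \gtrsim \gamma^L/L$. Inspecting \eqref{eq:value}, each summand $\alpha \mapsto \alpha/(1-(l-1)\alpha)$ is increasing on $(0,1/L)$, so $V_\alpha$ is increasing in $\alpha$ and $V_{\alpha_2}-V_{\alpha_1}$ is a sum of nonnegative terms. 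The dominant contribution is the $l=1$ summand, which gives $\tfrac14\gamma^L(\alpha_2-\alpha_1) = \tfrac{\gamma^L(L-1)}{8L(L+1)}$; adding the remaining (nonnegative) layer terms and the $Z$-term yields the required quantitative lower bound after elementary algebra using $\alpha_1 = 1/(2L)$, $\alpha_2 = 1/(L+1)$.

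With $\Delta_L$ in hand, the rest is a standard two-point testing reduction. For any algorithm returning $\wh{\pi}_{D_n}$, apply the per-instance bound to $\wh{\pi}_{D_n}$, take expectation under $\bbE^{\sM}_n$, and average over $M \in \cM_i$ for $i=1,2$. Combining the two cases via $\max(a,b) \ge (a+b)/2$ and then invoking the Le Cam-type inequality $\bbP(E) + \bbQ(E^c) \ge 1 - \Dtv{\bbP}{\bbQ}$ with $\bbP = \bbP_{n}^1$, $\bbQ = \bbP_{n}^2$, and $E = \{\wh{\pi}_{D_n}(\init) \neq 1\}$, we get
\[
\sup_{M\in\cM}\crl*{J_M(\pi^\star_M) - \bbE^{\sM}_n[J_M(\wh{\pi}_{D_n})]} \;\ge\; \tfrac{\Delta_L}{2}\prn*{1 - \Dtv{\bbP_{n}^1}{\bbP_{n}^2}},
\]
which is the desired conclusion after substituting the bound on $V_{\alpha_2}-V_{\alpha_1}$.

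The only step beyond what appears in the proof of Lemma \ref{lm:reduction} is the algebraic estimate of $V_{\alpha_2}-V_{\alpha_1}$. This is the main (though routine) obstacle: one must verify that despite $\alpha_2-\alpha_1 = \tfrac{L-1}{2L(L+1)}$ being of order $1/L$, the attenuation by $\gamma^L$ and the layer-dependent coefficients $1/2^{l+1}$ still leaves the difference at order $\gamma^L/L$. All remaining steps (the identity $Q^\pi_M=Q^\star_M$, the averaging, and the TV-reduction) are structurally identical to the single-layer argument.
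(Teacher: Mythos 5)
Your proposal is correct and follows essentially the same route as the paper's own proof: the per-instance gap bound at $\init$ using all-policy realizability (so that $Q^\pi_M=Q^\star_M$), the estimate of $V_{\alpha_2}-V_{\alpha_1}$ via $\gamma^L(\alpha_2-\alpha_1)$, averaging over each subfamily, and the Le Cam two-point reduction $\bbP(E)+\bbQ(E^c)\ge 1-\Dtv{\bbP}{\bbQ}$. The only caveat is the absolute constant: keeping just the $l=1$ summand gives roughly $\gamma^{L+1}/(96L(1-\gamma))$, and even the paper's own proof (which sums all layers) only reaches $\gamma^{L+1}/(48L(1-\gamma))$ rather than the $\gamma^{L+1}/(16L(1-\gamma))$ in the statement, so this is a looseness in the stated constant shared by the paper rather than a gap in your argument, and it is immaterial for the application in \pref{thm:admissible}.
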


See \cref{sec:reduction+} for the proof of \pref{lm:reduction+}. We
conclude the proof of \pref{thm:admissible} by bounding the total
variation distance $\Dtv{\bbP_{n}^1}{\bbP_{n}^2}$. Because directly calculating the
total variation distance is difficult, we proceed in two steps. We first design two auxiliary reference measures $\bbQ_{n}^1$ and $\bbQ_{n}^2$, and then bound $\Dtv{\bbP_{n}^1}{\bbQ_{n}^1}$, 
$\Dtv{\bbP_{n}^2}{\bbQ_{n}^2}$ and $\Dtv{\bbQ_{n}^1}{\bbQ_{n}^2}$
separately. For the latter step, as in the proof of \cref{thm:main}, we move from total variation distance
to \chisquared, which we bound using similar arguments.
Our final bound on the total variation distance, which is proven in
\cref{sec:tv+}, is as follows.
\begin{lemma}\label{lm:tv+}Consider any fixed $\gamma\in(0,1)$ and $L\in\bbN$. 
  For all $n\le \sqrt[3]{(S-5)}/(20L^2)$, we have
\[\Dtv{\bbP_{n}^1}{\bbP_{n}^2}\le 1/2+n/{(8 \cdot 2^{L})}.\]
\end{lemma}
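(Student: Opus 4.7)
The plan is to extend the \chisquared-based strategy from the proof of \pref{lm:tv} to the multi-layer construction, separately handling the reward at state $Z$ via a sample-masking argument. I would define auxiliary measures $\bbQ_n^1,\bbQ_n^2$ obtained from $\bbP_n^1,\bbP_n^2$ by replacing $R(Z,\initac)$ with the common value $0$ in both families. On the event $\cE$ that no sample in $D_n$ has $s_j=Z$, the pairs $(\bbP_n^i,\bbQ_n^i)$ are identical in law, since $R(Z,\cdot)$ is only ``visible'' when some $s_j=Z$. Splitting any test event as $A=(A\cap\cE)\cup(A\cap\cE^c)$ and using $\mu(\{Z\}\times\cA)=1/(8\cdot 2^L)$ yields
\[
\Dtv{\bbP_n^1}{\bbP_n^2} \;\le\; \Dtv{\bbQ_n^1}{\bbQ_n^2} + n/(8\cdot 2^L),
\]
so it suffices to prove $\Dtv{\bbQ_n^1}{\bbQ_n^2}\le 1/2$.

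Mirroring \pref{sec:reference}, I would introduce a third, ``averaged'' reference measure $\bbQ_n^0$ whose transition kernel is the pointwise average of the transition kernels of the MDPs in each sub-family. The constraints $\theta_l^{(1)}\alpha_1=\theta_l^{(2)}\alpha_2$ and the analogous identity governing the mass of unplanted-to-planted transitions (encoded in the definitions of $\mb{\theta}^{(1)},\mb{\theta}^{(2)}$ from \pref{sec:specify+}) ensure that these averages coincide layer by layer, so $\bbQ_n^0$ is well defined independently of $i\in\{1,2\}$. Combining $\Dtv{\bbP}{\bbQ}\le\tfrac12\sqrt{D_{\chi^2}(\bbP\|\bbQ)}$ with the triangle inequality, the task reduces to bounding $D_{\chi^2}(\bbQ_n^i\|\bbQ_n^0)$ for each $i\in\{1,2\}$. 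Expanding as in \cref{eq:simplify1}, each \chisquared factors into the $n$-th power of a single-sample expected density ratio whose expectation under $\mu$ splits additively across the regions of $\mu$'s support; the contributions from $\{\init,W,X,Y,Z\}$ vanish because the kernels there are shared across the family, so only the layer terms from $\cS^l$ for $l\in[L]$ remain.

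The main technical obstacle is controlling these per-layer contributions in the presence of cross-layer correlations created by transitions from unplanted states in layer $l$ into the planted subset $I^{l+1}$ of layer $l+1$. For each layer $l$, I would decompose the single-sample density ratio conditional on $s\sim\text{Unif}(\cS^l)$ into four cases based on whether $s$ lies in $I^l\cap {I'}^l$, one of the symmetric differences, or neither, mirroring \pref{lm:ratio1}; the unplanted-to-planted transitions introduce a coupling with $I^{l+1},{I'}^{l+1}$. Because $\cI_{\mb{\theta}^{(i)}}$ is a product set, the uniform measure over $\mb{I}$ makes the $I^l$'s independent across layers, which allows the cross-layer terms to be handled by integrating out the planted subsets in sequence (for instance, from $l=L$ backwards). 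The result is that the \chisquared decomposes into a sum of per-layer contributions, each of the same form as \pref{lm:ratio1}, with $\phi_{\theta,\alpha,\beta}$ replaced by a layer-$l$ analogue $\phi_l$ bounded via \pref{lm:phi-bound} and weighted by the layer mass $\mu(\cS^l\times\cA)=1/(8\cdot 2^l)$. Applying the hypergeometric tail bound \pref{lm:hypergeo} layer by layer as in \cref{eq:truncate}, with slack $\epsilon_l$ optimized per layer, and using $S_l\gtrsim (S-5)/L^2$ by the construction of the layer sizes, the per-layer corrections are controlled geometrically in $l$ and the overall \chisquared stays bounded by a small constant whenever $n\le\sqrt[3]{S-5}/(20 L^2)$. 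This yields $\Dtv{\bbQ_n^1}{\bbQ_n^2}\le 1/2$, which combined with the endpoint contribution proves the claim.
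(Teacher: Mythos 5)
Your proposal is correct and follows essentially the same route as the paper: the $n/(8\cdot 2^L)$ term comes from the probability that some sample hits $Z$, and the remaining $1/2$ comes from passing to an averaged-kernel reference measure, bounding $\chi^2$-divergences via layer-wise density-ratio lemmas (with the cross-layer coupling from unplanted-to-planted transitions absorbed using independence of the planted subsets across layers) and hypergeometric tail bounds. The only cosmetic difference is bookkeeping: the paper folds the averaged kernel together with each family's reward function into two reference measures $\bbQ_n^1,\bbQ_n^2$ and bounds $\Dtv{\bbQ_n^1}{\bbQ_n^2}\le n\,\mu(Z,\initac)$, whereas you mask the $Z$-reward first and introduce a third, averaged measure afterwards.
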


\cref{thm:admissible} immediately follows by choosing
\[L:=\floor*{\min\crl*{\frac{C}{32},\frac{1}{1-\gamma},
    \log_2(S)}}\]
and combining \cref{lm:reduction+} and \cref{lm:tv+}. With this choice of $L$, \pref{lm:tv+} implies that
$\Dtv{\bbP_{n}^1}{\bbP_{n}^2}\leq{}5/8$ whenever $n \leq c\cdot
\min\crl*{S^{1/3}/(\log S)^2,2^{C/32},2^{1/(1-\gamma)}}$
for a sufficiently small numerical constant $c$, and we have
$\Conc\leq{}C$ as desired by \pref{prop:conc_calculation+}. Finally, whenever $\gamma\geq{}1/2$, using
our choice for $L$ within \pref{lm:reduction+} gives
\[
\sup_{M\in\cM}\crl*{J_{M}(\pi_M^\star)-\bbE^{\sM}_{n}\brk*{J_{M}(\wh{\pi}_{D_n})}}\ge\bigom(1)\cdot{}\frac{\gamma^L}{L}\frac{\gamma}{(1-\gamma)}
= \bigom(1)
\]
where we use the fact that
\[
\frac{\gamma^L}{L(1-\gamma)}\ge\frac{\gamma^{1/(1-\gamma)}}{(1/(1-\gamma))(1-\gamma)}=\gamma^{1/(1-\gamma)}\ge(1/2)^2
\]
when $\gamma\in[1/2,1)$.

\section{Proof of \cref*{lm:reduction+}}\label{sec:reduction+}
We begin the proof by lower bounding the regret for any MDP in the family $\cM$. For
any $i\in\crl{1,2}$, any MDP $M\in\cM_i$, and any policy
$\pi:\cS\rightarrow\Delta(\cA)$, we have
\begin{align}\label{eq:lower+}
    J_{M}(\pistar_{\sM})-J_{M}({\pi})&={Q^\star_{\sM}(\init,\pistar_{\sM}(\init))-Q^\pi_{\sM}(\init,{\pi}(\init))}\notag\\ 
    &= {Q^\star_{\sM}(\init,\pistar_{\sM}(\init))-Q^\star_{\sM}(\init,{\pi}(\init))}\notag\\
    &= {Q^\star_{\sM}(\init,i)-Q^\star_{\sM}(\init,{\pi}(\init))}\notag\\
    &=\frac{\gamma}{1-\gamma}\frac{\abs{V_{\alpha_1}-V_{\alpha_2}}}{{2}}\bbP(\pi(\init)\ne i)\notag\\
    &\ge\frac{\gamma^L}{24L}\frac{\gamma}{(1-\gamma)}\bbP(\pi(\init)\ne i),
\end{align}
where the inequality follows because
\begin{align*}
\abs{V_{\alpha_1} - V_{\alpha_2}}&=\sum_{l=1}^L\frac{1}{2^{l-1}}\prn*{ \frac{\gamma^{L-(l-1)}\alpha_1}{{1-(l-1)\alpha_1}} - \frac{\gamma^{L-(l-1)}\alpha_2}{{1-(l-1)\alpha_2}}} + \frac{1}{2^{L+1}}\prn*{\frac{\alpha_1}{1-L\alpha_1} - \frac{\alpha_2}{1-L\alpha_2}}\\
&\ge \frac{1}{2}\gamma^L\abs{\alpha_1-\alpha_2}=\frac{\gamma^L}{2L}\prn*{\frac{1}{L+1}-\frac{1}{2L}}\ge\frac{\gamma^L}{12L}
\end{align*}
when $L\ge2$.

Now, consider any fixed offline reinforcement learning algorithm which takes
the offline dataset $D_n$ as an input and returns a stochastic policy
$\wh{\pi}_{\Dn}: \cS\rightarrow\Delta(\cA)$.
{For each $i \in \{1,2\}$, we apply} \cref{eq:lower+} to all MDPs in
$\cM_i$ and average to obtain
\begin{align*}
    \frac{1}{\abs{\cM_i}}\sum_{M\in\cM_i}\bbE^{M}_{n}\brk*{J_{M}(\pistar_{\sM})-J_{M}(\wh{\pi}_{D_n})}\ge \frac{\gamma^L}{24L}\frac{\gamma}{(1-\gamma)} \frac{1}{\abs{\cM_i}}\sum_{M\in\cM_{i}}\bbP^{M}_{n}(\wh{\pi}_{D_n}(\init)\ne i).
\end{align*}
Applying the inequality above for $i=1$ and $i=2$ and combining the results, we have
\begin{align*}%
    &\max_{M\in\cM}\bbE^{M}_{n}\brk*{J_{M}(\pistar_{\sM})-J_{M}(\wh{\pi}_{D_n})}\notag\\
    &\ge \frac{1}{2\abs{\cM_{1}}}\sum_{M\in\cM_{1}}\bbE^{M}_{n}\brk*{J_{M}(\pistar_{\sM})-J_{M}(\wh{\pi}_{D_n})}+ \frac{1}{2\abs{\cM_{2}}}\sum_{M\in\cM_{2}}\bbE^{M}_{n}\brk*{J_{M}(\pistar_{\sM})-J_{M}(\wh{\pi}_{D_n})}\notag\\
    &\ge\frac{\gamma^L}{48L}\frac{\gamma}{(1-\gamma)}\crl*{\frac{1}{\abs{\cM_{1}}}\sum_{M\in\cM_{1}}\bbP^{M}_{n}(\wh{\pi}_{D_n}(\init)\ne 1)+ \frac{1}{\abs{\cM_{2}}}\sum_{M\in\cM_{2}}\bbP^{M}_{n}(\wh{\pi}_{D_n}(\init)\ne 2)}\notag\\
    &\ge \frac{\gamma^L}{48L}\frac{\gamma}{(1-\gamma)}\prn*{1-\Dtv{\frac{1}{\abs{\cM_{1}}}\sum_{M\in\cM_{1}}\bbP^{M}_{n}}{\frac{1}{\abs{\cM_{2}}}\sum_{M\in\cM_{2}}\bbP^{M}_{n}}},
\end{align*}
where the last inequality follows because  $\bbP(E) + \mathbb{Q}(E^c)
\geq 1 - \Dtv{\bbP} {\mathbb{Q}}$ for any event $E$. \qed

\section{Proof of \cref*{lm:tv+}}\label{sec:tv+}

This proof is organized as follows. In \cref{sec:reference+}, we
introduce two reference measures and move from the total variation
distance to the $\chi^2$-divergence. This allows us to reduce the task
of upper bounding  $\Dtv{\bbP_{n}^1}{\bbP_{n}^2}$ to the task of upper
bounding two manageable density ratios
(\cref{eq:simplify1+,eq:simplify2+} in the sequel). We develop
several intermediate technical lemmas related to the density ratios in
\cref{sec:tech+}, and in \cref{sec:calc+} we put everything together to
bound the density ratios, thus completing the proof of \cref{lm:tv+}.

\subsection{Introducing Reference Measures and Moving to
  $\chi^2$-Divergence}
\label{sec:reference+}

Directly calculating the total variation distance $\Dtv{\bbP_{n}^1}{\bbP_{n}^2}$ 
is challenging, so we design two auxillary \emph{reference measures}
$\bbQ_{n}^1$ and $\bbQ_{n}^2$  which serves as intermediate quantities to help with the upper bound. 
The reference measures $\bbQ_{n}^1,\bbQ_n^2$ lies in the same measurable space as $\bbP_{n}^1$ and $\bbP_{n}^2$, and are defined as follows:
\begin{align*}
\bbQ_{n}^1\prn{\crl{\prn{s_i,a_i,r_i,s_i'}}_{i=1}^n}:=\prod_{i=1}^n \mu(s_i,a_i)\Ind_{\crl{r_i=R_1(s_i,a_i)}}P_0(s_i'\mid s_i,a_i),~~~\forall\; \crl{\prn{s_i,a_i,r_i,s_i'}}_{i=1}^n,\\
\bbQ_{n}^2\prn{\crl{\prn{s_i,a_i,r_i,s_i'}}_{i=1}^n}:=\prod_{i=1}^n \mu(s_i,a_i)\Ind_{\crl{r_i=R_2(s_i,a_i)}}P_0(s_i'\mid s_i,a_i),~~~\forall\; \crl{\prn{s_i,a_i,r_i,s_i'}}_{i=1}^n,
\end{align*}
where
\[
R_1(s,\initac):=\begin{cases}
0,&s\in\{\init\}\cup\cS^0\cup\cdots\cup\cS^L,\\
w,&s=W,\\
1,&s=X,\\
0,&s=Y,\\
\frac{\alpha_1}{1-L\alpha_1},&s=Z,
\end{cases},~~
R_2(s,\initac):=\begin{cases}
0,&s\in\{\init\}\cup\cS^0\cup\cdots\cup\cS^L,\\
w,&s=W,\\
1,&s=X,\\
0,&s=Y,\\
\frac{\alpha_2}{1-L\alpha_2},&s=Z,
\end{cases}
\]
and
\begin{align*}
& P_0(\nullstate,1)=\textrm{Unif}(\{W\}), \\[5pt]
& P_0(\init,2)=\frac{1}{2}\cdot\prn*{\sum_{l=1}^L\prn*{\frac{1}{2^l}\text{Unif}(\cS^l)}+\frac{1}{2^L}\text{Unif}(\{Z\})}+\frac{1}{2}\cdot\text{Unif}(\{X,Y\}),\\[5pt]
& \forall s\in\cS^l,\,\forall l\in[L] : P_0(s,\initac)=
\frac{(1-l\alpha_1)(1-l\alpha_2)}{(1-(l-1)\alpha_1)(1-(l-1)\alpha_2)} \text{Unif}(S^{l+1})\\
& ~~~~~~~~~~  +\frac{\gamma^{L-l}\alpha_1\alpha_2}{(1-(l-1)\alpha_1)(1-(l-1)\alpha_2)}\text{Unif}(\{X\})\\
& ~~~~~~~~~~  +\prn*{1-\frac{(1-l\alpha_1)(1-l\alpha_2)}{(1-(l-1)\alpha_1)(1-(l-1)\alpha_2)}-\frac{\gamma^{L-l}\alpha_1\alpha_2}{(1-(l-1)\alpha_1)(1-(l-1)\alpha_2)}}\text{Unif}(\{Y\}),\\
& \forall s\in\{W,X,Y,Z\}: P_0(s,\initac)=\text{Unif}(\{s\}).
\end{align*}
The reference measure $\bbQ_{n}^1$ is the law of $D_n$ when
the data collection distribution is $\mu$ and the underlying MDP is
$\widebar{M}_1:=(\cS,\cA,P_0,R_1,\gamma,d_0)$. Notably, $\widebar{M}_1$ shares the same reward function with all MDPs in $\cM_1$, and differs from the MDPs in $\cM_1$ only in terms of the transition operator $P_0$. 

{There are two ways to understand $P_0$. Operationally, $P_0$ is simply the pointwise \emph{average} transition operator of the MDPs in $\cM_1$, in the sense that
\[
\forall s \in \cS, a \in \cA: P_0(\cdot \mid s,a) =\frac{1}{\abs{\cM_1}}\sum_{M\in\cM_1}P_{M}(\cdot \mid s,a),
\]
where $P_M$ is the transition operator associated with each MDP $M$. For this reason, we call $\widebar{M}_1$ the \emph{average MDP} associated with $\cM_1$. 
More conceptually, $P_0$ is the transition operator obtained by performing state aggregation using the value function class $\cF = \{f_1,f_2\}$, where states with the same values for both $f_1$ and $f_2$ are viewed as identical and constrained to share dynamics (which is induced by averaging over the data collection distribution). 
}

Similarly, the reference measure $\bbQ_{n}^2$ can be understood as the law of $D_n$ when
the data collection distribution is $\mu$ and the underlying MDP is
$\widebar{M}_2:=(\cS,\cA,P_0,R_2,\gamma,d_0)$, where $\widebar{M}_2$ is the average MDP associated with $\cM_2$. 
An important property is that $\widebar{M}_1$ and $\widebar{M}_2$ share the same transition operator $P_0$ and differs only in terms of the reward on state $Z$. 
This is a consequence of our construction, as when we construct $\cM_1$ and $\cM_2$ we strive to ensure that 
\[
\forall s \in \cS, a \in \cA: \frac{1}{\abs{\cM_1}}\sum_{M\in\cM_1}P_{M}(\cdot \mid s,a) = P_0(\cdot \mid s,a) = \frac{1}{\abs{\cM_2}}\sum_{M\in\cM_2}P_{M}(\cdot \mid s,a),
\]
and there is no uncertainty in the reward function outside of state $Z$.

\begin{figure}[htbp]
    \centering
\newcommand{\ellipse}[4]{%
  \node[cloud,cloud puffs=8,draw,minimum width=5pt, minimum height=10pt,aspect=0.65, text opacity=1.0] () at (#1,#2) {#3};
  \coordinate[] (#4in) at (#1-17.5pt,#2) {};
  \coordinate[] (#4out) at (#1+17.5pt,#2) {};
  \coordinate[] (#4tr) at (#1+13pt,#2+13pt) {};
  }
\newcommand{\xnode}[4]{
  \draw (#1,#2) circle (10pt) node {#3};
  \coordinate[] (#4in) at (#1-10pt, #2);
  }
\newcommand{\ynode}[3]{
  \draw (#1,#2) circle (10pt) node {$Y$};
  \coordinate[] (#3in) at (#1-10pt, #2);
}
\newcommand{\znode}[3]{
  \draw (#1,#2) circle (10pt) node {$Z$};
  \coordinate[] (#3in) at (#1-10pt, #2);
  \coordinate[] (#3bl) at (#1-6.5pt, #2-6.5pt);
}
\begin{tikzpicture}
\draw[draw=black] (-2,0) circle (10pt) node {$W$};
\coordinate[] (wr) at (-2.24,0.24) {};
\draw[draw=black] (-4,1.25) circle (10pt) node {$\mathfrak{s}$};
\coordinate[] (sl) at (-3.67,1.25) {};
\coordinate[] (s0) at (-3.67,1.25) {};
\coordinate[] (s01) at (-1.1,1.25) {};
\coordinate[] (s02) at (1.9,1.25) {};
\coordinate[] (s03) at (4.9,1.25) {};
\coordinate[] (s04) at (7.0,1.25) {};
\draw [draw=red, -{Latex[length=2mm,color=red]},line width=2] (sl) to (wr);
\draw [draw=blue, -, line width=2] (s0) edge (s01);
\draw [draw=blue, -, line width=1.5] (s01) edge (s02);
\draw [draw=blue, -, line width=1.0] (s02) edge (s03);
\draw [draw=blue,line width=1.0] (s03) to [out=0,in=180] (s04);

\ellipse{0cm}{2.5cm}{{\footnotesize $\cS^1$}}{s1}
\xnode{1.4cm}{3.75cm}{$X$}{x1}
\ynode{1.4cm}{1.755cm}{y1}
\draw [draw=blue, -{Latex[length=2mm,color=blue]},line width=2] (s01) to [out=90,in=180] (s1in);
\draw [draw=black, -{Latex[length=2mm,color=black]}] (s1out) to [out=80,in=240]  (x1in);
\draw [draw=black, -{Latex[length=2mm,color=black]}] (s1out) to  (y1in);
\node at (1.5cm-1.13cm,3.45) {\tiny $\gamma^{2}\alpha_1\alpha_2$};

\ellipse{3cm}{2.5cm}{{\footnotesize $\cS^2$}}{s2}
\xnode{4.4cm}{3.75cm}{$X$}{x2}
\ynode{4.4cm}{1.755cm}{y2}
\draw [draw=blue, -{Latex[length=2mm,color=blue]},line width=1.5] (s02) to [out=90,in=180] (s2in);
\draw [draw=black, -{Latex[length=2mm,color=black]}] (s1out) to (s2in);
\node at (1.6,2.68) {\tiny $(1{-}\alpha_1)(1{-}\alpha_2)$};
\draw [draw=black, -{Latex[length=2mm,color=black]}] (s2out) to [out=80,in=240] (x2in);
\draw [draw=black, -{Latex[length=2mm,color=black]}] (s2out) to  (y2in);
\node at (4.5cm-1.5cm,3.5) {\tiny $\frac{\gamma\alpha_1\alpha_2}{(1{-}\alpha_1)(1{-}\alpha_2)}$};

\ellipse{6cm}{2.5cm}{{\footnotesize $\cS^3$}}{s3}
\xnode{7.4cm}{3.75cm}{$X$}{x3}
\ynode{7.4cm}{1.75cm}{y3}
\draw [draw=blue, -{Latex[length=2mm,color=blue]},line width=1.0] (s03) to [out=90,in=180] (s3in);
\draw [draw=black, -{Latex[length=2mm,color=black]}] (s2out) to (s3in);
\node at (4.63,2.8) {\tiny $\frac{(1{-}2\alpha_1)(1{-}2\alpha_2)}{(1{-}\alpha_1)(1{-}\alpha_2)}$};
\draw [draw=black, -{Latex[length=2mm,color=black]}] (s3out) to [out=80,in=240] (x3in);
\draw [draw=black, -{Latex[length=2mm,color=black]}] (s3out) to  (y3in);
\node at (7.5cm-1.6cm,3.5) {\tiny $\frac{\alpha_1\alpha_2}{(1{-}2\alpha_1)(1{-}2\alpha_2)}$};

\znode{9cm}{2.5cm}{z}
\draw [draw=black, -{Latex[length=2mm,color=black]}] (s3out) to  (zin);
\node at (7.7,2.81) {\tiny $\frac{(1-3\alpha_1)(1-3\alpha_2)}{(1-2\alpha_1)(1-2\alpha_2)}$};

\draw [draw=blue, -{Latex[length=2mm,color=blue]},line width=1.0] (s04) to [out=0,in=230] (zbl);

\end{tikzpicture}

    \caption{Illustration of the average MDP with $L=3$.}
    \label{fig:aggregated-3}
\end{figure}
\cref{fig:aggregated-3} illustrates the average MDPs $\widebar{M}_1$ and $\widebar{M}_2$ (the only difference between $\widebar{M}_1$ and $\widebar{M}_2$ is the reward on state $Z$, which is not displayed). Note that for each $l\in[L]$, all intermediate states in $\cS^l$ have the same dynamics, so the planted subset structure is erased by averaging/aggregating.

Starting with the triangle inequality for the total variation distance, we have
\begin{align}
\Dtv{\bbP_{n}^1}{\bbP_{n}^2}&\le \Dtv{\bbP_{n}^1}{\bbQ_{n}^1}+\Dtv{\bbP_{n}^2}{\bbQ_{n}^2}+\Dtv{\bbQ_{n}^1}{\bbQ_{n}^2}\notag\\
&\le\frac{1}{2}\sqrt{D_{\chi^2}\prn{\bbP_{n}^1\dmid\bbQ_{n}^1}}+\frac{1}{2}\sqrt{D_{\chi^2}\prn{\bbP_{n}^2\dmid\bbQ_{n}^2}}+\Dtv{\bbQ_{n}^1}{\bbQ_{n}^2},\label{eq:tvx}
\end{align}
where the second inequality follows from the fact that
$\Dtv{\bbP}{\bbQ}\le\frac{1}{2}\sqrt{D_{\chi^2}\prn{\bbP\dmid\bbQ}}$
for any $\bbP,\bbQ$ (see Proposition 7.2 or Section 7.6 of
\cite{polyanskiy}). 

{The next lemma shows that the total variation distance between $\bbQ_n^1$ and $\bbQ_n^2$ is small. Intuitively, this is because the average MDPs $\widebar{M}_1$ and $\widebar{M}_2$ only differ in the reward on state $Z$, but the data distribution $\mu$'s coverage of  on $Z$ is very small.}
\begin{lemma}\label{lm:tvref}
For all $n<\infty$, we have $ \Dtv{\bbQ_{n}^1}{\bbQ_{n}^2}\le n\mu(Z,\initac)=n/(8\times 2^L)$.
\end{lemma}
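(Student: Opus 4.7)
The plan is to exploit the product structure of $\bbQ_n^1$ and $\bbQ_n^2$ together with the observation that, at the level of a single sample, these two measures differ only through the reward at state $Z$. Since $\bbQ_n^i$ is the $n$-fold i.i.d.\ product of a single-sample measure $\bbQ^i(s,a,r,s') = \mu(s,a)\,\Ind_{\{r = R_i(s,a)\}}\,P_0(s'\mid s,a)$, the standard subadditivity of total variation for product measures yields
\[
\Dtv{\bbQ_n^1}{\bbQ_n^2} \;\leq\; n\cdot \Dtv{\bbQ^1}{\bbQ^2}.
\]

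The second step is to compute $\Dtv{\bbQ^1}{\bbQ^2}$ directly. By construction, $R_1$ and $R_2$ agree on every state in $\cS$ except $Z$, where $R_1(Z,\initac) = \alpha_1/(1 - L\alpha_1) \ne \alpha_2/(1 - L\alpha_2) = R_2(Z,\initac)$. Thus, writing out $\Dtv{\bbQ^1}{\bbQ^2} = \frac{1}{2}\sum_{(s,a,r,s')} \abs{\bbQ^1(s,a,r,s') - \bbQ^2(s,a,r,s')}$, all terms with $s\neq Z$ cancel identically, because $\mu(s,a)$, $P_0(s'\mid s,a)$, and the indicators $\Ind_{\{r = R_1(s,a)\}} = \Ind_{\{r = R_2(s,a)\}}$ all agree. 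For the terms with $s = Z$, the indicators on the two distinct reward values $R_1(Z,\initac)$ and $R_2(Z,\initac)$ are supported on disjoint points, and summing over $r$ and $s'$ (the latter being a point mass on $Z$ under $P_0$) gives a total mass of $2\mu(Z,\initac)$. Dividing by two leaves $\Dtv{\bbQ^1}{\bbQ^2} = \mu(Z,\initac)$.

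Finally, substituting the value $\mu(Z,\initac) = \frac{1}{8\cdot 2^L}$ read off from the expression for $\mu$ computed in~\pref{sec:finish+} completes the proof. There is no real obstacle here: the lemma is a transparent consequence of tensorization plus the fact that strong over-coverage has been engineered so that $Z$ receives only exponentially small mass under $\mu$, which is precisely the mechanism by which our construction hides the only reward-level difference between the two average MDPs.
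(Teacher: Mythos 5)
Your proof is correct, and it reaches the same bound as the paper by a slightly different decomposition. The paper expands $\Dtv{\bbQ_{n}^1}{\bbQ_{n}^2}$ directly as an $L_1$ sum over all realizations of the $n$-fold product, observes that the two indicator products $\prod_i\Ind_{\crl{r_i=R_1(s_i,a_i)}}$ and $\prod_i\Ind_{\crl{r_i=R_2(s_i,a_i)}}$ differ exactly on the event $\crl{\exists i: s_i=Z}$, identifies the total variation distance with the probability of that event under $\mu^{\otimes n}$, and finishes with a union bound. You instead invoke the tensorization inequality $\Dtv{\prn{\bbQ^1}^{\otimes n}}{\prn{\bbQ^2}^{\otimes n}}\le n\,\Dtv{\bbQ^1}{\bbQ^2}$ and then compute the single-sample distance exactly, getting $\Dtv{\bbQ^1}{\bbQ^2}=\mu(Z,\initac)$ because the two reward values at $Z$ sit on disjoint points while everything else agrees. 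The two routes are essentially the union bound in different clothing (the hybrid argument behind tensorization is the same telescoping that a union bound performs), and both land on $n\mu(Z,\initac)$; the paper's version has the minor aesthetic advantage of producing the exact identity $\Dtv{\bbQ_n^1}{\bbQ_n^2}=1-(1-\mu(\crl{Z}))^n$ before relaxing, while yours is shorter and modular. One small caveat: your single-sample computation tacitly assumes $R_1(Z,\initac)\ne R_2(Z,\initac)$, which holds for $L\ge 2$ but fails at $L=1$ (where both equal $1$); in that degenerate case the distance is $0$ and the claimed inequality still holds, so nothing breaks, but it is worth a parenthetical.
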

\begin{proof}Let $\cR:=\{1,0,\alpha_1/(1-L\alpha_1),\alpha_2/(1-L\alpha_2),R(W,\initac)\}$, then $R(s,a)\in\cR$ for all $(s,a)\in\cS\times\cA$. Since $\abs{\cS},\abs{\cA},\abs{\cR}<\infty$, the realization of the offline dataset $D_n=\crl{(s_i,a_i,r_i,s'_i)}_{i=1}^n$ only has finitely many possible outcomes, and we have
\begin{align*}
    &\Dtv{\bbQ_{n}^1}{\bbQ_{n}^2}\\
    &=\frac{1}{2}\sum_{(s_i,a_i,r_i,s'_i)\in\cS\times\cA\times\cR\times\cS,\,\forall i\in[n]}{\abs*{\bbQ_{n}^1(\crl{(s_i,a_i,r_i,s'_i)}_{i=1}^n)-{\bbQ_{n}^2(\crl{(s_i,a_i,r_i,s'_i)}_{i=1}^n)}}}\\
    &=\frac{1}{2}\sum_{(s_i,a_i,r_i,s'_i)\in\cS\times\cA\times\cR\times\cS,\,\forall i\in[n]}{\prod_{i=1}^n\mu(s_i,a_i){P_0(s_i'\mid s_i,a_i)}\abs*{\prod_{i=1}^n\Ind_{\crl{r_i=R_1(s_i,a_i)}}-\prod_{i=1}^n\Ind_{\crl{r_i=R_2(s_i,a_i)}}}}\\
    &=\frac{1}{2}\sum_{(s_i,a_i,r_i)\in\cS\times\cA\times\cR,\,\forall i\in[n]}\prod_{i=1}^n\mu(s_i,a_i)\abs*{\prod_{i=1}^n\Ind_{\crl{r_i=R_1(s_i,a_i)}}-\prod_{i=1}^n\Ind_{\crl{r_i=R_2(s_i,a_i)}}}\\
    &=\sum_{(s_i,a_i)\in\cS\times\cA,\,\forall i\in[n]}\Ind_{\crl{\exists i\in[n]\text{ s.t. }s_i=Z}}\prod_{i=1}^n\mu(s_i,a_i)\\
    &=\bbP_{s_1,\dots,s_n\sim\mu}\prn*{\crl{\exists i\in [n] \text{ s.t. }s_i=Z}}=\bbP_{s_1,\dots,s_n\sim\mu}\prn*{\crl{s_1=Z}\cup\cdots\cup\crl{s_n=Z}}\le n\mu(\crl{Z}),
\end{align*}
where the first equality follows from the well-known identity between the total variation distance and the $L_1$ norm and the last inequality follows from a union bound.
\end{proof}

Using \cref{lm:tvref}, we have
\begin{align}
\Dtv{\bbP_{n}^1}{\bbP_{n}^2}
&\le \frac{1}{2}\sqrt{D_{\chi^2}\prn{\bbP_{n}^1\dmid\bbQ_{n}^1}}+\frac{1}{2}\sqrt{D_{\chi^2}\prn{\bbP_{n}^2\dmid\bbQ_{n}^2}}+n\mu(Z,\initac).\label{eq:tv_triangle+}
\end{align}
{Note that $\mu(Z,\initac) = 1/8\cdot 1/2^{L}$ which produces the final term in the bound in~\cref{lm:tv+}.}

We now turn our focus to the $\chi^2$-divergence, which we expand as
\begin{align}\label{eq:simplify1+}
&D_{\chi^2}\prn{\bbP_{n}^1\dmid\bbQ_{n}^1}\notag\\
&=\bbE_{\crl{\prn{s_i,a_i,r_i,s_i'}}_{i=1}^n\sim \bbQ_{n}^1}\brk*{\prn*{\frac{\frac{1}{|\cM_1|}\sum_{M\in\cM_1}\bbP^M_{n}(\crl{\prn{s_i,a_i,r_i,s_i'}}_{i=1}^n)}{\bbQ_{n}^1(\crl{\prn{s_i,a_i,r_i,s_i'}}_{i=1}^n)}}^2}-1\notag\\
&=\bbE_{\crl{\prn{s_i,a_i,r_i,s_i'}}_{i=1}^n\sim\bbQ_{n}^1}\brk*{\prn*{\frac{{\frac{1}{|\cM_1|}\sum_{M\in\cM_1}\prod_{i=1}^n\mu(s_i,a_i)\Ind_{\crl{r_i=R_M(s_i,a_i)}}P_M(s_i'\mid s_i,a_i)}}{\prod_{i=1}^n\mu(s_i,a_i)\Ind_{\crl{r_i=R_1(s_i,a_i)}}{P_0(s_i'\mid s_i,a_i)}}}^2}-1\notag\\
  &=\bbE_{\crl{\prn{s_i,a_i,r_i,s_i'}}_{i=1}^n\sim \bbQ_{n}^1}\brk*{\prn*{\frac{{\frac{1}{|\cM_1|}\sum_{M\in\cM_1}\prod_{i=1}^nP_M(s_i'\mid s_i,a_i)}}{\prod_{i=1}^n{P_0(s_i'\mid s_i,a_i)}}}^2}-1\notag\\
&=\frac{1}{\abs{\cM_1}^2}\sum_{M,M'\in\cM_{1}}\bbE_{\crl{\prn{s_i,a_i,r_i,s_i'}}_{i=1}^n\sim \bbQ_{n}^1}\brk*{{\frac{\prod_{i=1}^n{P_M(s_i'\mid s_i,a_i)P_{M'}(s_i'\mid s_i,a_i)}}{\prod_{i=1}^n{P_0^2(s_i'\mid s_i,a_i)}}}}-1\notag\\
&=\frac{1}{\abs{\cM_1}^2}\sum_{M,M'\in\cM_{1}}\prn*{\bbE_{\substack{(s,a)\sim\mu,\\s'\sim P_0(\cdot\mid s,a)}}\brk*{{\frac{P_M(s'\mid s,a)P_{M'}(s'\mid s,a)}{P_0^2(s'\mid s,a)}}}}^n-1,
\end{align}
where the third equality follows from
$R_M(s,a)=R_1(s,a), \forall M\in\cM,\forall a\in\cA, \forall s\in\cS$.

Using an identical calculation, we also have
\begin{align}\label{eq:simplify2+}
D_{\chi^2}\prn{\bbP_{n}^2\dmid\bbQ_{n}^2}=\frac{1}{\abs{\cM_2}^2}\sum_{M,M'\in\cM_{2}}\prn*{\bbE_{\substack{(s,a)\sim\mu,\\s'\sim P_0(\cdot\mid s,a)}}\brk*{{\frac{P_M(s'\mid s,a)P_{M'}(s'\mid s,a)}{P_0^2(s'\mid s,a)}}}}^n-1.
\end{align}

Equipped with these expressions for the \chisquared,
the next step in the proof of \pref{lm:tv+} is to upper bound the
right-hand side for \cref{eq:simplify1+,eq:simplify2+}. This is done in
\cref{sec:calc+}, but before proceeding we require several intermediate technical lemmas.

\subsection{Technical Lemmas for Density Ratios}\label{sec:tech+}
{For this section only, we focus on MDPs in $\cM_1$ and suppress the subscript indexing the subfamily, i.e., we use $\mb{\theta}$ for $\mb{\theta}^{(1)}$ and $\alpha$ for $\alpha_1$. Exactly the same calculations apply for $\cM_2$, which we will use in the next section.
  To simplify the presentation and re-use lemmas from~\cref{sec:tech} it will be helpful to define the following notation:
  \begin{align*}
    \mb{\alpha} &= (\alpha_1,\ldots,\alpha_L), \qquad \alpha_l = \frac{\gamma^{L-l}\alpha}{1-(l-1)\alpha}\\
    \mb{\beta} &= (\beta_1,\ldots,\beta_L), \qquad \beta_l = 1-\alpha_l
  \end{align*}
  Additionally recall that
  \begin{align*}
    \mb{\theta} = (\theta_1,\ldots,\theta_L), \qquad \theta_l = \frac{\alpha}{1-(l-1)\alpha}
  \end{align*}
  These vectors parametrize the MDP transitions in the following sense: Let $\mb{\psub} \in \cI_{\mb{\theta}}$ denote the choice of planted states for each layer. Then for $l \in [L]$ we have:
  \begin{align*}
    s \in I^l: &~~ P_{L,\alpha,\mb{\psub}}(s,\initac) = \alpha_l \text{Unif}(\{X\}) + \beta_l \text{Unif}(\{Y\})\\
    s \in \bar{I}^l: &~~ P_{L,\alpha,\mb{\psub}}(s,\initac) = (1-\theta_l) \text{Unif}(I^{l+1}) + \theta_l \text{Unif}(\{Y\})
  \end{align*}
  where $I^{L+1} = \{Z\}$. 
}

To state the results compactly, we define
\begin{equation}
\phi^l_{\mb{\theta},\mb{\alpha},\mb{\beta}}:=\theta_l^2\prn*{\frac{(\beta_l-\alpha_l)^2}{\theta_l(\beta_l-\alpha_l)+1-\beta_l}+\frac{\theta_l(\beta_l-\alpha_l)+\alpha_l}{\theta_l(1-\theta_l)}}.\label{eq:phi+}
\end{equation}
{We also use $P_{\mb{\psub}}$ to denote $P_{L,\alpha,\mb{\psub}}$.}

{We will bound the density ratio terms for each layer separately. First we control the $L^{\mathrm{th}}$ layer.}
\begin{lemma}\label{lm:ratio1+}
For any $\mb{\psub},\mb{\qsub} \in\cI_{\mb{\theta}}$, we have
\begin{align*}
\bbE_{\substack{ s\sim{\rm Unif}(\cS^L),\\s'\sim P_0(\cdot\mid s{, \initac})}}\brk*{
    \frac{ P_{\mb{\psub}}(s'\mid s{, \initac}) P_{\mb{\qsub}}(s'\mid s{, \initac})}{P_0^2(s'\mid s{,
        \initac})}}=1 + \phi^L_{\mb{\theta},\mb{\alpha},\mb{\beta}}\cdot{}\prn*{\frac{\abs{\psub^L\cap \qsub^L}}{\theta_L^2{S_L}}-1}.
\end{align*}
\end{lemma}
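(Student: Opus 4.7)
The plan is to exploit the fact that the $L^{\mathrm{th}}$ layer is the innermost layer, so its dynamics mirror the single-layer construction from the proof of Theorem~\ref{thm:main}, reducing the calculation to Lemma~\ref{lm:ratio1}. For $s \in \cS^L$, both $P_{\mb{\psub}}(\cdot \mid s, \initac)$ and $P_{\mb{\qsub}}(\cdot \mid s, \initac)$ are supported on $\{X, Y, Z\}$: a planted state $s \in \psub^L$ transitions to $X$ with probability $\alpha_L$ and to $Y$ with probability $1 - \alpha_L$; an unplanted state transitions to $\psub^{L+1} = \{Z\}$ with probability $1 - \theta_L$ and to $Y$ with probability $\theta_L$. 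Under the identification $(\theta, \alpha, \beta) \leftrightarrow (\theta_L, \alpha_L, 1 - \theta_L)$, this exactly matches the single-layer family $M_{\alpha, \beta, w, \psub}$.

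Next I would verify that the reference transition $P_0(\cdot \mid s, \initac)$ at layer $L$ matches the single-layer reference measure under this identification. The coefficient on $\mathrm{Unif}(\{X\})$ equals $\alpha_L \cdot \theta_L$, which evaluates to $\alpha_1 \alpha_2 / [(1-(L-1)\alpha_1)(1-(L-1)\alpha_2)]$ and matches the single-layer $\theta \alpha$; the coefficient on $\mathrm{Unif}(\{Z\})$ equals $(1-\theta_L)\cdot(1-\theta_L)$, which evaluates to $(1-L\alpha_1)(1-L\alpha_2)/[(1-(L-1)\alpha_1)(1-(L-1)\alpha_2)]$ and matches the single-layer $(1-\theta)\beta$. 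The cross-product structure of $P_0$ is exactly what makes these two factorizations hold simultaneously, since (in the $\cM_1$ case) $\theta_L = \alpha_2/(1-(L-1)\alpha_2)$ while $\alpha_L = \alpha_1/(1-(L-1)\alpha_1)$.

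Applying Lemma~\ref{lm:ratio1} with parameters $(\theta_L, \alpha_L, 1 - \theta_L)$ then yields
\[\bbE_{s \sim \mathrm{Unif}(\cS^L),\, s' \sim P_0(\cdot \mid s, \initac)}\left[\frac{P_{\mb{\psub}}(s' \mid s, \initac)\, P_{\mb{\qsub}}(s' \mid s, \initac)}{P_0^2(s' \mid s, \initac)}\right] = 1 + \phi_{\theta_L, \alpha_L, 1-\theta_L}\left(\frac{\abs{\psub^L \cap \qsub^L}}{\theta_L^2 S_L} - 1\right),\]
which is the claimed form once $\phi_{\theta_L, \alpha_L, 1-\theta_L}$ is identified with $\phi^L_{\mb{\theta}, \mb{\alpha}, \mb{\beta}}$ via (\ref{eq:phi+}).

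The main potential obstacle is notational: the vector $\beta_l$ defined as $1-\alpha_l$ at the start of Section~\ref{sec:tech+} differs from the single-layer role of $\beta$ (the probability that an unplanted state transitions to the absorbing state, which is $1-\theta_L$ at layer $L$), so one must carefully verify that the $\beta_L$ appearing in (\ref{eq:phi+}) is to be read as $1-\theta_L$ for this application. To sidestep this subtlety, one may bypass Lemma~\ref{lm:ratio1} and redo the short direct case analysis: partition $s \in \cS^L$ by joint membership in $\psub^L$ and $\qsub^L$ (four cases, analogous to Table~\ref{tb:val}), sum the density ratio over $s' \in \{X, Y, Z\}$ in each case, and collect terms to isolate the coefficient of $\abs{\psub^L \cap \qsub^L}/S_L$. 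The algebra is identical to that in the proof of Lemma~\ref{lm:ratio1} and recovers the same expression.
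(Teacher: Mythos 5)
Your overall strategy---specialize the last layer to the single-layer setting and invoke \cref{lm:ratio1}---is exactly what the paper intends (it omits this proof as ``identical to that of \cref{lm:ratio1}''), but your explicit parameter identification is wrong, and if followed literally it proves a different identity. At layer $L$ of the family $\cM_1$, the three roles in \cref{lm:ratio1} are played by: the planted \emph{fraction} $\theta=\abs{\psub^L}/S_L=\theta_L=\alpha_2/(1-(L-1)\alpha_2)$; the planted-to-$X$ probability $\alpha=\alpha_L=\alpha_1/(1-(L-1)\alpha_1)$; and the unplanted-to-$Z$ probability $\beta=\frac{1-L\alpha_1}{1-(L-1)\alpha_1}=1-\alpha_L=\beta_L$. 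So the correct identification is $\beta\leftrightarrow\beta_L=1-\alpha_L$, which is precisely what the definition $\beta_l=1-\alpha_l$ in \cref{sec:tech+} supplies --- not $\beta\leftrightarrow 1-\theta_L=(1-L\alpha_2)/(1-(L-1)\alpha_2)$, a different number since $\alpha_1\neq\alpha_2$. Your factorization of the reference measure shows the same conflation: $P_0(Z\mid s,\initac)$ factors as $(1-\theta_L)\cdot\beta_L$, i.e.\ (one minus the planted fraction) times (the unplanted-to-$Z$ probability), and $(1-\theta_L)\cdot(1-\theta_L)$ does \emph{not} evaluate to $(1-L\alpha_1)(1-L\alpha_2)/[(1-(L-1)\alpha_1)(1-(L-1)\alpha_2)]$. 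Consequently your argument outputs $\phi_{\theta_L,\alpha_L,1-\theta_L}$, which is not equal to $\phi^L_{\mb{\theta},\mb{\alpha},\mb{\beta}}=\phi_{\theta_L,\alpha_L,1-\alpha_L}$ from \cref{eq:phi+}, so the stated equality is not established. (You cannot escape this by reading $\theta_L$ as $\alpha_1/(1-(L-1)\alpha_1)$ so that $1-\theta_L=\beta_L$: you already need $\theta_L$ to be the planted fraction for the coefficient $P_0(X\mid s,\initac)=\theta_L\alpha_L$ and for the normalization $\abs{\psub^L\cap\qsub^L}/(\theta_L^2S_L)$ in the lemma, and the two roles are genuinely distinct numbers here.)

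Your closing paragraph then inverts the resolution of the ``notational obstacle'': $\beta_L=1-\alpha_L$ is \emph{already} the probability that an unplanted $L^{\rm th}$-layer state reaches $Z$ (at layer $L$ the factor $\gamma^{L-L}=1$, so $1-\alpha_L=\frac{1-L\alpha_1}{1-(L-1)\alpha_1}$), hence \cref{eq:phi+} needs no reinterpretation; it is the substitution $\beta\mapsto 1-\theta_L$ that would be incorrect. The fallback you propose---redoing the four-case table of \cref{lm:ratio1} directly over $s'\in\{X,Y,Z\}$ for layer $L$---does work and is the intended proof, but since you only assert it, the writeup as it stands rests on the faulty identification. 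The fix is local: take $(\theta,\alpha,\beta)=(\theta_L,\alpha_L,\beta_L)$ with $\beta_L=1-\alpha_L$ and the reduction to \cref{lm:ratio1} goes through verbatim.
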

We omit the proof, which is identical to that of \cref{lm:ratio1}. {Next we turn to intermediate layers.}
\begin{lemma}\label{lm:ratio2+}
For any $\mb{\psub},\mb{\qsub}\in\cI_{\mb{\theta}}$, for any $l\in[L-1]$, we have
\begin{align*}
\bbE_{\substack{s\sim{\rm Unif}(\cS^l),\\s'\sim P_0(\cdot\mid s{, \initac})}}\brk*{
    \frac{ P_{\mb{\psub}}(s'\mid s{, \initac}) P_{\mb{\qsub}}(s'\mid s{, \initac})}{P_0^2(s'\mid s{,
        \initac})}}\le1 + \phi^l_{\mb{\theta},\mb{\alpha},\mb{\beta}}\cdot{}\prn*{\frac{\abs{\psub^l\cap\qsub^l}}{\theta_l^2{S_l}}-1}+\prn*{\frac{\abs{\psub^{l+1}\cap\qsub^{l+1}}}{\theta_{l+1}^2 S_{l+1}}-1}_+.
\end{align*}
\end{lemma}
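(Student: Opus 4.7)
The proof follows the same general strategy as Lemma~\ref{lm:ratio1}: condition on the membership of $s \sim \mathrm{Unif}(\cS^l)$ in one of four disjoint subsets (depending on whether $s$ lies in $\psub^l \cap \qsub^l$, $\psub^l \setminus \qsub^l$, $\qsub^l \setminus \psub^l$, or neither), and for each case enumerate destinations $s' \in \{X, Y\} \cup \cS^{l+1}$ to compute the inner sum $\sum_{s'} P_{\mb{\psub}}(s' \mid s,\initac)\, P_{\mb{\qsub}}(s' \mid s,\initac) / P_0(s' \mid s,\initac)$. In the three cases where $s$ is planted in at least one of $\psub^l, \qsub^l$, the nonzero contributions come entirely from $s' \in \{X, Y\}$, exactly as in Lemma~\ref{lm:ratio1}. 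The genuinely new case is when $s$ is unplanted in both: the $s' = Y$ contribution behaves as before, but transitions to $\cS^{l+1}$ now land in $\psub^{l+1} \cap \qsub^{l+1}$, producing a term proportional to $\abs{\psub^{l+1} \cap \qsub^{l+1}}$ when summed over $s'$.

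To obtain the stated bound, the plan is to decompose the expected density ratio into two pieces. The first piece is a ``collapsed'' version of the calculation in which the transitions from unplanted states to $\mathrm{Unif}(\psub^{l+1})$ and $\mathrm{Unif}(\qsub^{l+1})$ are replaced by transitions to a single dummy state carrying the same total mass (with $P_0$ modified accordingly). This collapsed quantity is structurally identical to the setting of Lemma~\ref{lm:ratio1}, with $\psub^l, \qsub^l$ playing the role of planted subsets, and a direct algebraic manipulation mirroring the one carried out in \cref{sec:tech} yields exactly $1 + \phi^l_{\mb{\theta}, \mb{\alpha}, \mb{\beta}}\bigl(\abs{\psub^l \cap \qsub^l}/(\theta_l^2 S_l) - 1\bigr)$. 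The second piece is the discrepancy between the actual density ratio and the collapsed one, which arises only in the unplanted-in-both case. A short computation shows this discrepancy equals $\Pr_{s \sim \mathrm{Unif}(\cS^l)}[\text{unplanted in both}] \cdot \kappa_l \cdot \bigl(\abs{\psub^{l+1} \cap \qsub^{l+1}}/(\theta_{l+1}^2 S_{l+1}) - 1\bigr)$ for an explicit constant $\kappa_l \leq 1$ corresponding to the total marginal mass placed on $\cS^{l+1}$ under $P_0$. Upper bounding this discrepancy by $(\,\cdot\,)_+$ of the corresponding quantity (i.e., dropping it when nonpositive) produces the inequality.

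The main obstacle will be correctly identifying the collapsed contribution with the stated form of $\phi^l_{\mb{\theta}, \mb{\alpha}, \mb{\beta}}$. The subtlety is that in Lemma~\ref{lm:ratio1} the parameter $\beta$ refers to the unplanted-transition probability (to the alternative terminal state), whereas here $\beta_l = 1 - \alpha_l$ is the planted-to-$Y$ probability. Reconciling these two views requires using the specific structure of $P_0$ built in \cref{sec:reference+}: the coefficients of $P_0$ at an intermediate-layer state factor cleanly through the pair $(\theta_l \alpha_l,\, (1-\theta_l)\tilde\beta_l)$ with $\tilde\beta_l = (1-l\alpha)/(1-(l-1)\alpha)$, and this pair satisfies a marginal-matching relation between the two subfamilies analogous to \cref{eq:marginal}. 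Once this identification is in place, the remaining algebra is a direct analog of the manipulations in \cref{sec:tech}, applied layer by layer, with the extra $(\,\cdot\,)_+$ term emerging naturally from the ``uniform over the next layer's planted set'' structure of the dynamics.
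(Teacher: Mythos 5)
Your proposal is correct and follows essentially the same route as the paper: a direct case analysis on the membership of $s$ in $\psub^l$ and $\qsub^l$, reduction of the $\{X,Y\}$ contributions to the \cref{lm:ratio1} computation (yielding the $1+\phi^l_{\mb{\theta},\mb{\alpha},\mb{\beta}}(\cdot)$ term), and isolation of the next-layer correction term whose coefficient is bounded by $1$ before taking the positive part. Your ``collapse $\cS^{l+1}$ to a dummy state'' framing is merely a repackaging of the paper's direct tabulation in \cref{tb:val+}, and the subtlety you flag about reconciling the role of $\beta_l$ with the unplanted-transition probability is genuine but resolves exactly as you describe via the averaged structure of $P_0$.
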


\begin{proof}
For any $\mb{\psub},\mb{\qsub}\in\cI_{\mb{\theta}}$, for any $l\in[L-1]$, we observe that
\begin{align*}
    \bbE_{\substack{s\sim\textrm{Unif}(\cS^l),\\s'\sim P_0(\cdot\mid s{, \initac})}}\brk*{ \frac{ P_{\mb{\psub}}(s'\mid s{, \initac})P_{\mb{\qsub}}(s'\mid s{, \initac})}{P_0^2(s'\mid s{, \initac})}}
    &=\bbE_{s\sim {\rm Unif}(\cS^l)}\brk*{ {\sum_{s'\in \crl{X,Y}\cup (\psub^{l+1}\cap\qsub^{l+1})}\frac{ P_{\mb{\psub}}(s'\mid s{, \initac}) P_{\mb{\qsub}}(s'\mid s{, \initac})}{P_0(s'\mid s{, \initac})}}}.
\end{align*}
To proceed, we calculate the value of the ratio $\frac{ P_\psub(s'\mid s{, \initac}) P_{\qsub}(s'\mid s{,
    \initac})}{P_0(s'\mid s{, \initac})}$ for each possible choice for
$s\in\cS^1$ and $s'\in\crl{X,Y}\cup \prn{\psub^{l+1}\cap\qsub^{l+1}}$ in \pref{tb:val+} below.
\begin{table}[h]\doublespacing
\begin{center}
\begin{tabular}{|c | c | c | c|}
\hline
& $s' = X$ & $s'=Y$ & $s'\in \psub^{l+1}\cap\qsub^{l+1}$\\
\hline\hline
$s \in \psub^l \cap \qsub^l$ & $\alpha_l/\theta_l$ &${\beta_l^2}/(\theta_l{\beta_l}+(1-\theta_l){\alpha_l})$ &   0 \\
\hline
$s \in (\psub^l \cup \qsub^l) \setminus (\psub^l \cap \qsub^l)$ &  0 &${\beta_l}{\alpha_l}/(\theta_l{\beta_l}+(1-\theta_l){\alpha_l})$ &   0 \\
\hline
$s \notin (\psub^l \cup \qsub^l)$ & 0 &${\alpha_l^2}/(\theta_l{\beta_l}+(1-\theta_l){\alpha_l})$ &  $\frac{\beta_l}{(1-\theta_l)}{\cdot\frac{\abs{\psub^{l+1}\cap\qsub^{l+1}}}{\theta_{l+1}^2S_{l+1}}}$\\
\hline
\end{tabular}
\caption{Value of $\frac{ P_{\mb{\psub}}(s'\mid s, \initac) P_{\mb{\psub}'}(s'\mid s,
    \initac)}{P_0(s'\mid s, \initac)}$ for all possible pairs $(s,s')$.}\label{tb:val+}
\end{center}
\end{table}

Define $t_l\ldef\abs{\psub^l\cap \qsub^l}$. From
\cref{lm:cap-bound}, we must have $t_l\in[(2\theta_l-1)_+S_l,\theta_l
S_l]$. We also have $\abs{\psub^l\cup\qsub^l}=\abs{\psub^l}+\abs{\qsub^l}-\abs{\psub^l\cap\qsub^l}=2\theta_l S_l-t_l$. Hence, the event in the first row of \cref{tb:val+} occurs with probability $\abs{\psub^l\cap\qsub^l}/S_l=t_l/S_l$,
the event in the second row occurs with probability $\abs{(\psub^l \cup \qsub^l) \setminus (\psub^l \cap \qsub^l)}/S_l=(2\theta_l S_l-2t_l)/S_l$
and the event in the third row
occurs with probability $\abs{S_l\setminus\prn{\psub^l\cup\psub^l}}/S=((1-2\theta_l)S_l+t_l)/S_l$. 
Using these values and performing a similar calculation to the one in the proof of \cref{lm:ratio1}, we obtain
\newcommand{\groupi}{\text{(i)}}
\newcommand{\groupii}{\text{(ii)}}
\newcommand{\groupiii}{\text{(iii)}}
\begin{align*}
& \bbE_{s\sim {\rm Unif}(\cS^l)}\brk*{ {\sum_{s'\in\crl{X,Y}\cup\prn{\psub^{l+1}\cap\qsub^{l+1}}}\frac{ P_{\mb{\psub}}(s'\mid s{, \initac}) P_{\mb{\qsub}}(s'\mid s{, \initac})}{P_0(s'\mid s{, \initac})}}}\\
&=1+\phi^l_{\mb{\theta},\mb{\alpha},\mb{\beta}}\prn*{\frac{t_l}{\theta_{l}^2S_l}-1}{+\prn*{1-2\theta_l+\frac{t_l}{S_l}}\frac{\beta_l}{1-\theta_l}\prn*{\frac{t_{l+1}}{\theta_{l+1}^2 S_{l+1}}-1}}\\
&\le1+\phi^l_{\mb{\theta},\mb{\alpha},\mb{\beta}}\prn*{\frac{t_l}{\theta_{l}^2S_l}-1}+\prn*{\frac{t_{l+1}}{\theta_{l+1}^2 S_{l+1}}-1}_+,
\end{align*}
where the last inequality follows from ${(2\theta_l -1)S_l \leq} t_l\le \theta_lS_l$ (which implies ${0 \leq} 1-2\theta_l+t_l/S_l\le1-\theta_l$) and ${0 \leq } \beta_l\le 1$.
\end{proof}

\subsection{Completing the Proof}\label{sec:calc+}

{For now, let us also focus on a single MDP subfamily $\cM_1$ and suppress the family indices associated with $\alpha$ and $(\mb{\theta},\mb{\alpha},\mb{\beta})$. As above the same calculations apply to $\cM_2$.}
To keep notation compact, for any $d\in\Delta(\cS\times\cA)$, define
\begin{align*}
\textsf{DR}_{M,M'}(d):=\bbE_{\substack{(s,a)\sim d,\\s'\sim P_0(\cdot\mid s,a)}}\brk*{\frac{P_{M}(s'\mid s{,a}) P_{M'}(s'\mid s{,a})}{P_0^2(s'\mid s{,a})}}.
\end{align*}

Consider any $M,M'\in\cM_1$. 
For any $\pi:\cS\rightarrow\Delta(\cA)$, by \cref{lm:ratio1+,lm:ratio2+}, we have
\begin{align}\label{eq:drs}
    &\sum_{l=1}^L\frac{1}{2^{l}}\textsf{DR}_{M,M'}(\text{Unif}(\cS^l)\times\pi)%
    \le  \sum_{l=1}^L\frac{1}{2^{l}} +  \sum_{l=1}^L\frac{1}{2^{l}}\phi_{\mb{\theta},\mb{\alpha},\mb{\beta}}^l\prn*{\frac{t_l}{\theta_l^2S_l}-1}+\sum_{l=2}^L\frac{1}{2^{l-1}}\prn*{\frac{t_l}{\theta_l^2S_l}-1}_+\notag\\
    &\le 1-\frac{1}{2^L}+  \sum_{l=1}^L\frac{1}{2^{l}}\phi_{\mb{\theta},\mb{\alpha},\mb{\beta}}^l\prn*{\frac{t_l}{\theta_l^2S_l}-1}+\sum_{l=2}^L\frac{1}{2^{l-1}}\prn*{\frac{t_l}{\theta_l^2S_l}-1}_+\notag\\
    & \le 1-\frac{1}{2^L}+  \sum_{l=1}^L\frac{\phi_{\mb{\theta},\mb{\alpha},\mb{\beta}}^l+2}{2^{l}}\prn*{\frac{t_l}{\theta_l^2S_l}-1}_+.
\end{align}
Note that $P_M(\cdot\mid s,a)$ and  $P_{M'}(\cdot\mid s,a)$ differ from
$P_0(\cdot\mid s,a)$ only when $(s,a)=(\init,2)$ or
$s\in\prn{\crl{\init}\cup\cS^1\cup\cdots\cup\cS^L}$, so, recalling the value of $\mu$, we have
\begin{align*}
    &\bbE_{\substack{(s,a)\sim\mu,s'\sim P_0(\cdot\mid s,a)}}\brk*{{\frac{P_M(s'\mid s,a)P_{M'}(s'\mid s,a)}{P_0^2(s'\mid s,a)}}}\\
    &=\frac{1}{8}\sum_{l=1}^L\frac{1}{2^{l}}\textsf{DR}_{M,M'}(\text{Unif}(\cS^l)\times\pi_0)+\frac{1}{8}\frac{1}{2^L}\textsf{DR}_{M,M'}(\text{Unif}(\{Z\})\times\pi_0)\\
    &~~~~~~+\frac{1}{2}\textsf{DR}_{M,M'}(\text{Unif}(\{\init\})\times\pi_0)+\frac{1}{4}\textsf{DR}_{M,M'}(\text{Unif}(\{W\})\times\pi_0)+\frac{1}{8}\textsf{DR}_{M,M'}(\text{Unif}(\{X,Y\})\times\pi_0)\\
    &\le  \frac{1}{8}\prn*{1-\frac{1}{2^L}+  \sum_{l=1}^L\frac{\phi_{\mb{\theta},\mb{\alpha},\mb{\beta}}^l+2}{2^{l}}\prn*{\frac{t_l}{\theta_l^2S_l}-1}_+}+\frac{1}{8}\frac{1}{2^{L}}+\frac{1}{2}+\frac{1}{4}+\frac{1}{8}\\
    &=1+ \sum_{l=1}^L\frac{\phi_{\mb{\theta},\mb{\alpha},\mb{\beta}}^l/8+1/4}{2^{l}}\prn*{\frac{t_l}{\theta_l^2S_l}-1}_+,
\end{align*}
where the first inequality follows from \cref{eq:drs}. As a result, we have
\begin{align}\label{eq:tohpg}
&\frac{1}{\abs{\cM_1}^2}\sum_{M,M'\in\cM_{1}}\prn*{\bbE_{\substack{(s,a)\sim\mu,~~~~~\\s'\sim P_0(\cdot\mid s,a)}}\brk*{{\frac{P_M(s'\mid s,a)P_{M'}(s'\mid s,a)}{P_0^2(s'\mid s,a)}}}}^n\notag\\
&\le\prn*{\prod_{l=1}^L\frac{1}{{ S_l \choose \theta_l  S_l}^2 }}\sum_{t_1,\dots,t_L}\sum_{\mb{\psub},\mb{\psub}'\in\cI_{\mb{\theta}}:\abs{\psub_l\cap \psub_l'}=t_l}\prn*{1+ \sum_{l=1}^L\frac{\phi_{\mb{\theta},\mb{\alpha},\mb{\beta}}^l/8+4}{2^{l}}\prn*{\frac{t_l}{\theta_l^2S_l}-1}_+}^n\notag\\
&=\bbE_{t_l\sim \mathrm{Hyper}(\theta_l S_l, S_l, \theta S_l),\,\forall l\in[L]}\brk*{\prn*{1+ \sum_{l=1}^L\frac{\phi_{\mb{\theta},\mb{\alpha},\mb{\beta}}^l/8+1/4}{2^{l}}\prn*{\frac{t_l}{\theta_l^2S_l}-1}_+}^n},
\end{align}
where $\mathrm{Hyper}(\cdot,\cdot,\cdot)$ denotes the hypergeometric distribution (cf. \cref{lm:hypergeo} for background).

By \cref{lm:hypergeo}, for any $l\in[L]$, the event
\[
E_l:=\crl*{t_l\ge(\theta_l+\eps_l)\theta_l S_l}.
\] happens with probability at most $\exp\prn*{-2\eps_l^2\theta_l S_l}$. Hence, the event
\[
E_{\text{bad}}:=\crl*{\exists l\in[L], t_l\ge(\theta_l+\eps_l)\theta_l S_l} =\bigcup_{l=1}^L E_l
\]
happens with probability at most $\sum_{l=1}^L\exp\prn*{-2\eps_l^2\theta_l S_l}$. Conditional on ${E}_{\text{clean}}:=E_{\text{bad}}^c$, i.e., the complement of $E_{\text{bad}}$, we have
\begin{align*}
    \prn*{1+ \sum_{l=1}^L\frac{\phi_{\mb{\theta},\mb{\alpha},\mb{\beta}}^l/8+1/4}{2^{l}}\prn*{\frac{t_l}{\theta_l^2S_l}-1}_+}^n
    &\le \prn*{1+ \sum_{l=1}^L\frac{\phi_{\mb{\theta},\mb{\alpha},\mb{\beta}}^l/8+1/4}{2^{l}}\prn*{\frac{\eps_l}{\theta_l}}}^n\\
    &\le \prn*{1+ \sum_{l=1}^L\frac{1/(8(1-\theta_l))+1/4}{2^{l}}\prn*{\frac{\eps_l}{\theta_l}}}^n\\
    &\le \prn*{1+ \sum_{l=1}^L\frac{1}{2^{l+1}\theta_l(1-\theta_l)}{{\eps_l}}}^n.
\end{align*}
{Here we are using the bound $\phi_{\mb{\theta},\mb{\alpha},\mb{\beta}}^l \leq \frac{1}{1-\theta_l}$, which follows from~\cref{lm:phi-bound}.}
On the other hand, under $E_{\text{bad}}$, we have
\begin{align*}
    \prn*{1+ \sum_{l=1}^L\frac{\phi_{\mb{\theta},\mb{\alpha},\mb{\beta}}^l/8+1/4}{2^{l}}\prn*{\frac{t_l}{\theta_l^2S_l}-1}_+}^n
    &\le \prn*{1+ \sum_{l=1}^L\frac{\phi_{\mb{\theta},\mb{\alpha},\mb{\beta}}^l/8+1/4}{2^{l}}\prn*{\frac{1}{\theta_l}-1}}^n\\
    &\le \prn*{1+ \sum_{l=1}^L\frac{1/(8(1-\theta_l))+1/4}{2^{l}}\prn*{\frac{1-\theta_l}{\theta_l}}}^n\\
    &\le \prn*{1+ \sum_{l=1}^L\frac{1}{2^{l+1}\theta_l}}^n,
\end{align*}
where the first inequality follows from $t_l\le \theta_l S_l$.
Hence we have
\begin{align}\label{eq:witheps}
    &\bbE_{t_l\sim \mathrm{Hyper}(\theta_l S_l, S_l, \theta S_l),\,\forall l\in[L]}\brk*{\prn*{1+ \sum_{l=1}^L\frac{\phi_{\mb{\theta},\mb{\alpha},\mb{\beta}}^l/8+1/4}{2^{l}}\prn*{\frac{t_l}{\theta_l^2S_l}-1}_+}^n}\notag\\
    &\le\prn*{1+ \sum_{l=1}^L\frac{1}{2^{l+1}\theta_l(1-\theta_l)}{{\eps}}}^n+ \prn*{1+ \sum_{l=1}^L\frac{1}{2^{l+1}\theta_l}{}}^n\cdot\bbP_{t_l\sim \mathrm{Hyper}(\theta_l S_l, S_l, \theta S_l),\,\forall l\in[L]}(E_{\text{bad}})\notag\\
    &\le \prn*{1+ \sum_{l=1}^L\frac{1}{2^{l+1}\theta_l(1-\theta_l)}{{\eps_l}}}^n+ \prn*{1+ \sum_{l=1}^L\frac{1}{2^{l+1}\theta_l}{}}^n\sum_{l=1}^L\exp\prn*{-2\eps_l^2\theta_l S_l}\notag\\
    &=\prn*{1+ \sum_{l=1}^L\frac{1}{2^{l+1}\theta_l(1-\theta_l)}{{\eps_l}}}^n+ \sum_{l=1}^L\exp\prn*{n\log\prn*{1+ \sum_{j=1}^L\frac{1}{2^{j+1}\theta_j}{}} -2\eps_l^2\theta_l S_l}\notag\\
    &=\prn*{1+ \sum_{l=1}^L\frac{1}{2^{l+1}\theta_l(1-\theta_l)}{{\eps_l}}}^n+ \sum_{l=1}^L\exp\prn*{n{ \sum_{j=1}^L\frac{1}{2^{j+1}\theta_j}{}} -2\eps_l^2\theta_l S_l}
\end{align}

Combining \cref{eq:simplify1+,eq:tohpg,eq:witheps} (note that we are focusing on $\cM_1$), we have
\begin{align*}
D_{\chi^2}\prn{\bbP_{n}^1\dmid\bbQ_{n}^1}\le \inf_{\substack{\eps_l\in(0,\theta_{l}^2 S_l),\\\forall l\in[L]}}\crl*{\prn*{1+ \sum_{l=1}^L\frac{\eps_{l}}{2^{l+1}\theta_{l}(1-\theta_{l})}{{}}}^n+ \sum_{l=1}^L\exp\prn*{n{ \sum_{j=1}^L\frac{1}{2^{j+1}\theta_{j}}{}} -2\eps_l^2\theta_{l} S_l}}-1,
\end{align*}
Let $c\in(0,1/2)$ be an arbitrary constant. We set $\eps_l=2{c}\cdot\frac{(1-\theta_{l})\theta_{l}}{n}$ (which belongs to $(0,\theta_{l}^2 S_l)$ {because $\epsilon_l < \theta_{l}$ since $n \geq 1$ and $\theta_{l}S_l\geq 1$ by assumption}) for all $l\in[L]$. Then we have
\[
\prn*{1+\sum_{l=1}^L{\frac{\eps_{l}}{2^{l+1}(1-\theta_{l})\theta_{l}}}}^n\le\prn*{1+\frac{c}{n}}^n\le e^c\le 1+2c,
\]
and
\[
D_{\chi^2}(\bbP_{n}^1\dmid\bbQ_{n}^1)\le2c+\sum_{l=1}^L\exp\prn*{n{ \sum_{j=1}^L\frac{1}{2^{j+1}\theta_{j}}}-{8c^2}
{\frac{(1-\theta_{l})^2\theta_{l}^3}{n^2}}S_l}.
\]
In particular, whenever
$S_l\ge \frac{n^3}{4c^2\theta_{l}^3(1-\theta_{l})^2}\frac{1}{\min_{j\in[L]}\theta_{j}}$, we have
\[
D_{\chi^2}(\bbP_{n}^1\dmid\bbQ_{n}^1)\le2c+\exp\prn*{n{ \sum_{j=1}^L\frac{1}{2^{j+1}\theta_{j}}}-2n\frac{1}{\min_{j\in[L]}\theta_j}}\le 2c+\exp(-n).
\]
Since $\theta_l=\frac{\alpha}{1-(l-1)\alpha}$ and the parameter $\alpha\in\brk{\frac{1}{2L},\frac{1}{L+1}}$ for the MDP family $\cM_1$, we have $\theta_{l}\in[\frac{1}{2L},\frac{1}{2}]$ for all $l\in[L]$. Setting $c=1/10$. Whenever $n\ge5$ and $S-5> 3200 n^3L^6$, we have
$S_l=\frac{S-5}{L_{\rm div}}(2L+1-l)(L+2-l)>1600n^3L^4$ for all $l\in[L]$ (recall that $L_{\rm div}\le4L^3$), and hence
\[
D_{\chi^2}(\bbP_{n}^1\dmid\bbQ_{n}^1)\le\frac{1}{5}+\exp\prn*{-n}\le\frac{1}{4}.
\]
Using the same calculation, whenever $n\ge5$ and $S-5> 800 n^3L^6$, it holds that
\[
D_{\chi^2}(\bbP_{n}^2\dmid\bbQ_{n}^2)\le\frac{1}{5}+\exp\prn*{-n}\le\frac{1}{4}.
\]
{Combining th above two inequalities with~\cref{eq:tv_triangle+}, we have 
  $\Dtv{\bbP_{n}^1}{\bbP_{n}^2} \leq 1/2+n/{(8\cdot2^{L})}$, which
  proves the lemma.}

\qed

\section{Proofs of \cref*{prop:value_calculation+} and \cref*{prop:conc_calculation+}}\label{sec:veryfication+}
  \begin{proof}[\pfref{prop:value_calculation+}]
  Since for all states in $\cS\setminus\{\init\}$ the two actions in $\cA$ have identical effects, we have $Q^\pi(s,a)=Q^\star(s,a)$ for all $(s,a)\in\cS\times\cA$  and for all $\pi:\cS\rightarrow\Delta(\cA)$. Hence we only need to show $Q_M^\star=f_1$ for all $M\in\cM_1$ and $Q_M^\star=f_2$ for all $M\in\cM_2$.
  
    Consider an arbitrary $M=M^L_{\mb{\psub},\alpha,w}\in\cM$.  First, for any self-looping terminal state
    $s\in\{W,X,Y,Z\}$, we have
    \[
      V_M^\star(s)=Q_M^\star(s,\initac)= \sum_{h=0}^\infty \gamma^h
      R_{L,\alpha,w}(s,\initac)=\frac{1}{1-\gamma}\cdot \begin{cases}
        w,&s=W,\\
        1,&s=X,\\
        0,&s=Y,\\
        \frac{\alpha}{1-L\alpha},&s=Z.\end{cases}
    \]
    Next, for $l=L,\dots,1$, for any $l^{\rm th}$-layer intermediate state $s\in\cS^l$, by the
    Bellman optimality equation, we have
    \begin{align*}
      V_M^\star(s)=Q_M^\star(s,\initac)&= R_{L,\alpha,w}(s,\initac)+\gamma\bbE_{s'\sim P^L_{\mb{\psub},\alpha,w}(s,\initac)}\brk{V_M^\star(s')}\\
                    &=\begin{cases}
                      0+\gamma\brk*{ \gamma^{L-l}\alpha V_M^\star(X)+0}, &s\in \psub^l\\
                      0+\gamma\brk*{ \frac{(1-l\alpha)}{1-(l-1)\alpha} \bbE_{s'\sim\text{Unif}(\psub^{l+1})}V_M^\star(s')+0}, &s\in\widebar{\psub}^l
                    \end{cases}\\
                    &=\begin{cases}
                      \frac{\gamma}{1-\gamma}\frac{\gamma^{L-l}\alpha}{1-(l-1)\alpha},&s\in \psub^l\\
             \frac{\gamma}{1-\gamma}\frac{\gamma^{L-l}\alpha}{1-(l-1)\alpha},&s\in\widebar{\psub}^l
                    \end{cases}\\
                    &=\frac{\gamma}{1-\gamma}\frac{\gamma^{L-l}\alpha}{1-(l-1)\alpha}.
    \end{align*}
    For the initial state $\nullstate$, we have
    \[
      Q_M^\star(\nullstate,1)=
      R_{L,\alpha,w}(\init,1)+\gamma\brk{V_M^\star(W)}=\frac{\gamma w}{1-\gamma},
    \]
    \[
    Q_M^\star(\nullstate,2)=R_{L,\alpha,w}(\init,2)+\gamma\bbE_{s'\sim P^L_{\mb{\psub},\alpha,w}(s,2)}\brk{V_M^\star(s')}=\frac{\gamma V_{\alpha}}{1-\gamma}.
    \]
    Therefore, $Q_M^\star=f_1$ if $M\in\cM_1$, and $Q_M^\star=f_2$ if $M\in\cM_2$.
  \end{proof}
  
  \begin{proof}[\pfref{prop:conc_calculation+}]
   We now verify the concentrability condition \cref{eq:concentrability}. 
   
   Consider any $M\in\cM_1$. For any $(s,a)\in\cS\times\cA$, we have
   \[
   \sup_{\nu \text{ is admissible}}\nu(s,a)\le\begin{cases}
   1, &\text{if } s\in\{\init,W,X,Y\}, a\in\cA,\\
   {\frac{1}{2}\cdot\frac{1}{2^L}}, &\text{if }s=Z,  a\in\cA, \\&{\text{(maximized when $h=1$)}}\\
   \frac{1}{2}\cdot\frac{1}{2^l}\frac{1}{S_l}, &\text{if } s\in\widebar{\psub}^l, a\in\cA, l\in[L],\\&{\text{(maximized when $h=1$)}}\\
   \frac{1}{2}\cdot\frac{1}{2}\frac{1}{S_1}, &\text{if }s\in\psub^1,a\in\cA,\\&{\text{(maximized when $h=1$)}}\\
   \max\crl*{\frac{1}{2}\cdot\frac{1}{2^l}
\frac{1}{S_l}, \frac{1}{2}\cdot\frac{1}{2^{l-1}}\frac{1-(l-1)\alpha_2}{1-(l-2)\alpha_2}\frac{1-(l)\alpha_1}{1-(l-1)\alpha_1}\frac{1}{\theta_l^{(1)}S_l}}, &\text{if } s\in{\psub}^l, a\in\cA, 2\le l\le L,\\&{\text{(maximized over $h=1,2$)}}
   \end{cases}
   \]
   
Recall the definition of $\mu$ in \cref{sec:finish+}. We have
\[\mu(s,a)\ge\begin{cases}
   \frac{1}{16}\cdot\frac{1}{2}, &\text{if } s\in\{\init,W,X,Y\}, a\in\cA,\\
   {\frac{1}{8}\cdot\frac{1}{2^L}}\cdot\frac{1}{2}, &\text{if }s=Z,  a\in\cA, \\
   \frac{1}{8}\cdot\frac{1}{2^l}\frac{1}{S_l}\cdot\frac{1}{2}, &\text{if } s\in\widebar{\psub}^l, a\in\cA, l\in[L],\\
   \frac{1}{8}\cdot\frac{1}{2}\frac{1}{S_1}\cdot\frac{1}{2}, &\text{if }s\in\psub^1,a\in\cA,\\
   \frac{1}{8}\cdot\frac{1}{2^l}\frac{1}{S_l}\cdot\frac{1}{2}, &\text{if } s\in{\psub}^l\cdot\frac{1}{2}, a\in\cA, 2\le l\le L
   \end{cases}\] 

Combining the above two inequalities, we have
\begin{align*}
    \sup_{\nu\text{ is admissible}}\nrm*{\frac{\nu}{\mu}}_{\infty}&\le\min_{2\le l\le L}~\prn*{8\cdot2^l S_l\cdot2}\frac{1}{2}\cdot\frac{1}{2^{l-1}}\frac{1-(l-1)\alpha_2}{1-(l-2)\alpha_2}\frac{1-(l)\alpha_1}{1-(l-1)\alpha_1}\frac{1}{\theta^{(1)}_lS_l}\\
    &=\min_{2\le l\le L}~\frac{16}{\theta_l^{(1)}}=\frac{16}{\theta_2^{(1)}}=\frac{16(1-\alpha_2)}{\alpha_2}\le 32L,
\end{align*}
where the last inequality follows from $\alpha_2\ge 1/(2L)$.

Similarly, consider any $M\in\cM_2$, we have
\[
\sup_{\nu\text{ is admissible}}\nrm*{\frac{\nu}{\mu}}_{\infty}\le 32L.
\]

We conclude that the construction satisfies concentrability with $\Conc\le{32L}$.
  \end{proof}

\end{document}